\definecolor{darkred}{rgb}{0.6,0,0}
\definecolor{darkgreen}{rgb}{0,0.5,0}
\definecolor{darkblue}{rgb}{0,0,0.5}
\definecolor{notered}{HTML}{d62728}
\definecolor{notegreen}{HTML}{2ca02c}
\definecolor{noteblue}{HTML}{1f77b4}
\definecolor{notepurple}{HTML}{7e1e9c}
\def\1{\bm{1}}
\def\rvg{{\mathbf{g}}}
\def\rvx{{\mathbf{x}}}
\def\rvy{{\mathbf{y}}}
\DeclareMathAlphabet{\mathsfit}{\encodingdefault}{\sfdefault}{m}{sl}
\SetMathAlphabet{\mathsfit}{bold}{\encodingdefault}{\sfdefault}{bx}{n}
\def\gA{{\mathcal{A}}}
\def\gB{{\mathcal{B}}}
\def\gD{{\mathcal{D}}}
\def\gE{{\mathcal{E}}}
\def\gO{{\mathcal{O}}}
\def\gS{{\mathcal{S}}}
\newcommand{\E}{\mathbb{E}}
\newcommand{\R}{\mathbb{R}}
\newcommand{\norm}[1]{\left\lVert#1\right\rVert} 
\newcommand{\Norm}[1]{\lVert#1\rVert} 
\newcommand{\abs}[1]{\left\lvert#1\right\rvert}
\renewcommand{\Pr}{\mathbb{P}}
\newcommand{\transpose}{\mathrm{T}}
\newcommand{\Term}[2]{\text{Term}_{#1}\ \text{in}\ #2}
\newtheorem{assumption}{Assumption}
\begin{document}

\title{Sharp Bounds for Sequential Federated Learning \\on Heterogeneous Data}

\author{\name Yipeng Li \email liyipeng@bupt.edu.cn \\
       \addr National Engineering Research Center for Mobile Network Technologies\\
       Beijing University of Posts and Telecommunications\\
       Beijing, 100876, China
       \AND
       \name Xinchen Lyu\thanks{Corresponding author.} \email lvxinchen@bupt.edu.cn \\
       \addr National Engineering Research Center for Mobile Network Technologies\\
       Beijing University of Posts and Telecommunications\\
       Beijing, 100876, China}

\editor{Peter Richt\'{a}rik}

\maketitle

\begin{abstract}
	There are two paradigms in Federated Learning (FL): parallel FL (PFL), where models are trained in a parallel manner across clients, and sequential FL (SFL), where models are trained in a sequential manner across clients. Specifically, in PFL, clients perform local updates independently and send the updated model parameters to a global server for aggregation; in SFL, one client starts its local updates only after receiving the model parameters from the previous client in the sequence. In contrast to that of PFL, the convergence theory of SFL on heterogeneous data is still lacking. To resolve the theoretical dilemma of SFL, we establish sharp convergence guarantees for SFL on heterogeneous data with both upper and lower bounds. Specifically, we derive the upper bounds for the strongly convex, general convex and non-convex objective functions, and construct the matching lower bounds for the strongly convex and general convex objective functions. Then, we compare the upper bounds of SFL with those of PFL, showing that SFL outperforms PFL on heterogeneous data (at least, when the level of heterogeneity is relatively high). Experimental results validate the counterintuitive theoretical finding.
\end{abstract}

\begin{keywords}
	stochastic gradient descent, random reshuffling, parallel federated learning, sequential federated learning, convergence analysis
\end{keywords}

\section{Introduction}

Federated Learning (FL) \citep{mcmahan2017communication, chang2018distributed} is a popular distributed machine learning paradigm, where multiple clients collaborate to train a global model, while preserving data privacy and security. Commonly, FL can be categorized into two types: (i) parallel FL (PFL), where models are trained in a parallel manner across clients, with periodic aggregation, such as Federated Averaging (FedAvg) \citep{mcmahan2017communication} and Local SGD \citep{stich2019local}, and (ii) sequential FL (SFL), where models are trained in a sequential manner across clients, such as Cyclic Weight Transfer (CWT) \citep{chang2018distributed} and peer-to-peer FL \citep{yuan2023peer}. A simple illustration of SFL and PFL is presented in Figure~\ref{fig:SFL}.

\begin{figure}[h]
	\centering
	\includegraphics[width=0.9\linewidth]{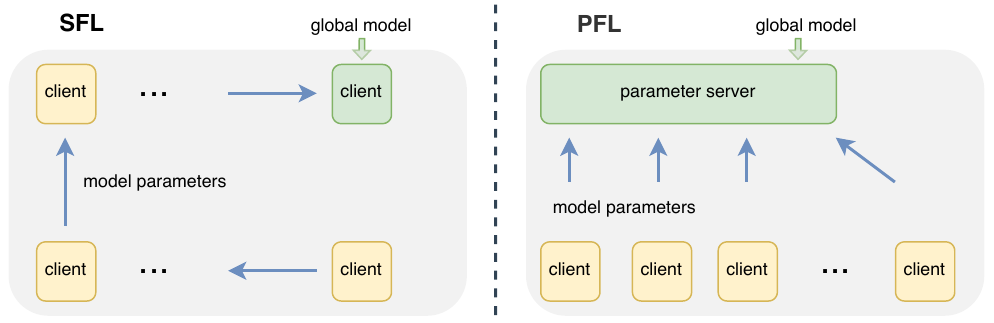}
	\caption{Illustration of SFL and PFL.}
	\label{fig:SFL}
\end{figure}

SFL has recently attracted much attention in the FL community \citep{lee2020tornadoaggregate, yuan2024decentralized} with various applications in medicine \citep{chang2018distributed, huang2024experimental}, automated driving \citep{yuan2023peer} and so on. Compared with PFL, SFL shows the advantages in the following aspects: First, as one of the most popular decentralized FL paradigms \citep{yuan2024decentralized}, \textit{SFL operates in a peer-to-peer manner, avoiding the reliance on a central parameter server.} This is a more practical option for medical applications, as establishing a central server is a costly endeavor \citep{huang2024experimental}. In addition, it avoids single points of failure and bottlenecks in communication, computation, and storage resources found in central servers \citep{yuan2024decentralized}. Second, \textit{SFL is regarded as a network-resilient, communication- and computation-efficient alternative to PFL} \citep{yuan2024decentralized,yan2024sequential}. Recently, SFL has been increasingly combined with PFL to complement each other. For example, in Clustered Federated Learning, where clients are grouped into multiple clusters, \citet{zaccone2022speeding, chen2023fedseq} adopts SFL for clients within each cluster to speed up the training process and reduce communication overhead; \citet{chen2020fedcluster, yan2024sequential} adopts SFL for inter-cluster training to make frequent updates and overcome the shortcomings of network dynamics and instability. Third, \textit{SFL has played a great role in Split Learning (SL) \citep{gupta2018distributed, thapa2022splitfed}}, an emerging distributed learning technology at the network edge, where the full model is split into client-side and server-side portions to alleviate the excessive computation overhead for resource-constrained devices. In SL, one popular way to enable multi-client training is the sequential model training manner, where clients collaborate with the server to perform the updates sequentially (one by one), as the full model is divided between the client side and the server side. Our theory on the convergence of SFL is also applicable to such sequential SL \citep{li2023convergence}.

Both PFL and SFL suffer from ``data heterogeneity'', one of the most persistent problems in FL. Up to now, there have been numerous works to study the convergence of PFL on heterogeneous data \citep{li2020convergence, khaled2020tighter, koloskova2020unified,woodworth2020minibatch}. These theoretical works not only helped understand the effect of heterogeneity, but also spawned new algorithms like SCAFFOLD \citep{karimireddy2020scaffold,mishchenko2022proxskip}. In contrast, the convergence of SFL on heterogeneous data has not been well studied. Recent works \citep{cho2023convergence,malinovsky2023federated} studied the convergence of FL with cyclic client participation, which can be seen as an extension of SFL. However, its convergence analysis is still in an infancy stage, and existing works do not cover the SFL setups in this paper (see Section~\ref{sec:related work}). Our earlier conference paper \citep{li2023convergence} proved the upper bounds of SFL (which is shown to be applicable to SL). However, notably, the lower bounds for SFL are still missing in existing works. The lack of theoretical study can hinder further development of SFL and even SL.

To resolve the theoretical dilemma of SFL, this paper, extending our conference paper \citep{li2023convergence},\footnote{The conference paper is available at \url{https://arxiv.org/abs/2311.03154}. Notably, by default, we use the latest arXiv version for all the references.} aims to establish sharp convergence guarantees for SFL with both upper and lower bounds. In the case of homogeneous data, this task is trivial, as SFL is reduced to SGD (Stochastic Gradient Descent). However, in the case of heterogeneous data, it is more challenging than existing works, including PFL and SGD-RR (Random Reshuffling), primarily due to the following reasons:
\begin{enumerate}[label=({\roman*})]
	\item Sequential and shuffling training manner across clients (vs. PFL). In PFL, the local updates at each client only depend on the randomness of the current client within each training round. However, in SFL, the local updates additionally depend on the randomness of all previous clients.
	\item Multiple local update steps at each client (vs. SGD-RR). In contrast to its with-replacement sibling SGD, SGD-RR samples data samples ``without replacement'' and then performs one step of GD (Gradient Descent) on each data sample. Similarly, SFL samples clients without replacement and then performs multiple steps of SGD at each client. In fact, SGD-RR can be regarded as a special case of SFL.
\end{enumerate}

In this paper, we establish the convergence guarantees for SFL (Algorithm~\ref{algorithm1}), and then compare them with those of PFL (Algorithm~\ref{algorithm2}). The main contributions are as follows:
\begin{itemize}
	\item We derive the upper bounds of SFL for the strongly convex, general convex and non-convex cases on heterogeneous data with the standard assumptions in FL in Subsection~\ref{subsec:upper bound} (Theorem~\ref{thm:SFL} and Corollary~\ref{cor:SFL}).
	\item We construct the lower bounds of SFL for the strongly convex and general convex cases in Subsection~\ref{subsec:lower bound} (Theorem~\ref{thm:lower bound} and Theorem~\ref{thm:lower bound2}). They match the derived upper bounds for the large number of training rounds.
	\item We compare the upper bounds of SFL with those of PFL in Subsections~\ref{subsec:comparison-1} and~\ref{subsec:comparison-2}. In the convex cases,\footnote{For clarity, we use the term ``the convex cases'' to collectively refer to both the strongly convex case and the general convex case in this paper.} the comparison results show a subtle difference under different heterogeneity assumptions. That is, under Assumption~\ref{asm:heterogeneity:optimum}, the upper bounds of SFL are better than those of PFL strictly, while under Assumption~\ref{asm:heterogeneity:max}, the upper bounds of SFL are still better unless the level of heterogeneity is very low. In the non-convex case under Assumption~\ref{asm:heterogeneity:average}, the upper bounds of SFL are better without exception.
	\item The comparison results imply that SFL outperforms PFL in heterogeneous settings (at least, when the level of heterogeneity is relatively high). We then validate this counterintuitive result with experiments on quadratic functions (Subsection~\ref{subsec:exp:QF}), logistic regression (Subsection~\ref{subsec:exp:LR}) and deep neural networks (Subsection~\ref{subsec:exp:DNN}).
\end{itemize}

\section{Related Work}\label{sec:related work}
The most relevant research topics are the convergence analyses of PFL and SGD-RR. 

So far, there have been a wealth of works to study the upper bounds of PFL on data heterogeneity \citep{li2020convergence, khaled2020tighter, karimireddy2020scaffold, koloskova2020unified, woodworth2020minibatch}, system heterogeneity \citep{wang2020tackling}, partial client participation \citep{li2020convergence, yang2021achieving, wang2022unified} and other variants \citep{karimireddy2020scaffold, wang2020tackling, reddi2021adaptive}. The lower bounds of PFL have also been studied in \cite{woodworth2020local, woodworth2020minibatch, yun2022minibatch, glasgow2022sharp}. In this work, we make a comparison between the upper bounds of PFL and those of SFL on heterogeneous data (see Subsections~\ref{subsec:comparison-1} and \ref{subsec:comparison-2}). 

SGD-RR has been gaining significant attention as a more practical alternative to SGD. \cite{nagaraj2019sgd,ahn2020sgd, mishchenko2020random,nguyen2021unified,lu2022general, koloskova2024convergence} have proved the upper bounds and \cite{safran2020good, safran2021random, rajput2020closing, cha2023tighter} have proved the lower bounds of SGD-RR. In particular, to the best of our knowledge, \cite{mishchenko2020random} provided the tightest upper bounds and \cite{cha2023tighter} provided the tightest lower bounds of SGD-RR. In this work, we adopt them to exam the tightness of the convergence bounds of SFL (see Subsection~\ref{subsec:upper bound}).

Recently, the shuffling-based method, SGD-RR, has been applied to FL. One line of these works is Local RR (or FedRR) \citep{mishchenko2022proximal,yun2022minibatch, horvath2022fedshuffle, sadiev2023federated, malinovsky2023federated}, which adopts SGD-RR (instead of SGD) as the local solver. Another line is FL with cyclic client participation \citep{eichner2019semi, wang2022unified, cho2023convergence, malinovsky2023federated}, which can be seen as an extension of SFL. However, its convergence analysis is still in an infancy stage, and existing works do not cover the SFL setups in this paper. \citet{eichner2019semi} considered that clients can form different blocks due to diurnal variation, and propose training seperate models for each of these blocks. This differs from our setting, which aims to train a single global model. In \cite{wang2022unified, cho2023convergence}, when it reduces to SFL, the client training order is deterministic (not random), and thus the analyses cannot be directly extended to our setting. In \cite{malinovsky2023federated}, although their bounds are slightly tighter on the optimization term with SGD-RR as the local solver, their analysis is limited to the case where the number of local steps equals the size of the local data set. Most importantly, \cite{cho2023convergence} considered upper bounds for PL objective functions and \cite{malinovsky2023federated} considered upper bounds for strongly convex objective functions,\footnote{PL condition can be thought as a non-convex generalization of strong convexity.} while we consider both upper bounds (for both convex and non-convex cases) and lower bounds. Detailed comparisons are in Appendix~\ref{app:comparison-shuffling-variance} and \cite{li2023convergence}.

\section{Setup}
\textit{Notation.} We let $[n] \coloneqq \{1,2,\ldots, n\}$ for $n\in \mathbb{N}^+$ and $\{x_i\}_{i\in \gS} \coloneqq \{x_i : i \in \gS\}$ for any set $\gS$. We use $\abs{\gS}$ to denote the size of any set $\gS$. We use $\lesssim$ to denote ``less than'' up to some absolute constants and polylogarithmic factors, and $\gtrsim$ and $\asymp$ are defined likewise. We also use the big O notations, $\tilde O$, $\gO$, $\Omega$, where $\gO$, $\Omega$ hide numerical constants, $\tilde\gO$ hides numerical constants and polylogarithmic factors. We use $\norm{\cdot}$ to denote the $\text{L}_2$-norm for both vectors and matrices. More notations are in Table~\ref{tab:notations}.

\textit{Problem formulation.} The basic FL problem is to minimize a global objective function:
\begin{align*}
	\min_{\rvx\in \R^d} \left\{ F(\rvx) \coloneqq \frac{1}{M}\sum_{m=1}^M \left(F_m(\rvx)\coloneqq \E_{\xi \sim \gD_m}\left[f_m(\rvx; \xi)\right] \right) \right\},
\end{align*}
where $F_m$ and $f_m$ denote the local objective function and the local component function of Client $m$ ($m \in [M]$), respectively. The local objective function is the average of the local component functions, $F_m(\rvx)=\frac{1}{\abs{\gD_m}}\sum_{i\in \gD_m}f_m(\rvx;\xi_m^i)$, when the local data set $\gD_m$ contains a finite number of data samples.

\textit{Update rule of SFL (Algorithm~\ref{algorithm1}).} At the beginning of each training round, the indices $\pi_1, \pi_2, \ldots, \pi_M$ are sampled without replacement from $[M]$ randomly as the clients' training order. Within a round, each client (i) initializes its model with the latest parameters from its previous client, (ii) performs $K$ steps of local updates over its local data set, and (iii) passes the updated parameters to the next client. This process continues until all clients finish their local training. Let $\rvx_{m,k}^{(r)}$ denote the local parameters of the $m$-th client (that is, Client $\pi_m$) after $k$ local steps in the $r$-th round, and $\rvx^{(r)}$ denote the global parameters in the $r$-th round. Then, if SGD is chosen as the local solver (with a constant learning rate $\eta$), the update rule of SFL is as follows:
\begin{align*}
	&\text{Local update}:\ \rvx_{m,k+1}^{(r)} = \rvx_{m,k}^{(r)} - \eta \rvg_{\pi_m,k}^{(r)},\quad \text{initializing}\ \rvx_{m,0}^{(r)} =
	\begin{cases}
		\rvx^{(r)}, &m=1\\
		\rvx_{m-1,K}^{(r)}, &m>1
	\end{cases},\\
	&\text{Global model}:\ \rvx^{(r+1)} = \rvx_{M,K}^{(r)}.
\end{align*}
Here we use $\rvg_{\pi_m,k}^{(r)} \coloneqq \nabla f_{\pi_m}(\rvx_{m,k}^{(r)};\xi_{m,k}^{(r)})$ to denote the stochastic gradient generated at the $m$-th client for its $k+1$-th local update in the $r$-th round.

\textit{Update rule of PFL (Algorithm~\ref{algorithm2}).} Within a round, each client (i) initializes its model with the global parameters, (ii) performs $K$ steps of local updates, and (iii) sends the updated parameters to the central server. The server will aggregate the local parameters to generate the global parameters. With the the same notations as those of SFL, the update rule of PFL is as follows:
\begin{align*}
	&\text{Local update}:\ \rvx_{m,k+1}^{(r)} = \rvx_{m,k}^{(r)} - \eta \rvg_{m,k}^{(r)},\quad \text{initializing}\ \rvx_{m,0}^{(r)} = \rvx^{(r)},\forall m \in [M]\\
	&\text{Global model}:\ \rvx^{(r+1)} = \frac{1}{M} \sum_{m=1}^M \rvx_{m,K}^{(r)}.
\end{align*}

\textit{The mechanism of ``two learning rates''.} \citet{karimireddy2020scaffold} has proven that the mechanism of ``two learning rates'' can improve the convergence rate of PFL. This mechanism involves using a client-specific learning rate for local updates and a different server-specific learning rate for global updates on the server. In fact, \textit{This mechanism can also be applied to SFL. Theoretically, it can achieve the same improvement as that in PFL \citep{karimireddy2020scaffold}.} We show how to implement this mechanism in SFL, and compare the upper bounds with those of PFL \citep{karimireddy2020scaffold} in Appendix~\ref{app:two-step-sizes}.

\noindent
\begin{minipage}[t]{0.5\linewidth}
	\DecMargin{0.5em}
	\begin{algorithm}[H]
		\DontPrintSemicolon
		\caption{Sequential FL}
		\label{algorithm1}
		\KwOut{$\{\rvx^{(r)}\}$}
		\For{$r = 0, \ldots, R-1$}{
			Sample a permutation $\pi_1, \pi_2, \ldots, \pi_{M}$ of $\{1,2,\ldots,M\}$\;
			\For{$m = 1,\ldots,M$ {\bf\textcolor{red}{in sequence}}}{
				$\rvx_{m,0}^{(r)} =
				\begin{cases}
					\rvx^{(r)}, &m=1\\
					\rvx_{m-1,K}^{(r)}, &m>1
				\end{cases}$\;
				\For{$k = 0,\ldots, K-1$}{
					$\rvx_{m,k+1}^{(r)} = \rvx_{m,k}^{(r)} - \eta \rvg_{\pi_m, k}^{(r)}$\;
				}
			}
			Global model: $\rvx^{(r+1)} = \rvx_{M,K}^{(r)}$\;
		}
	\end{algorithm}
	\IncMargin{0.5em}
\end{minipage}
\hfill
\begin{minipage}[t]{0.5\linewidth}
	\DecMargin{0.5em}
	\begin{algorithm}[H]
		\DontPrintSemicolon
		\caption{Parallel FL}
		\label{algorithm2}
		\KwOut{$\{\rvx^{(r)}\}$}
		\For{$r = 0,\ldots, R-1$}{
			\For{$m = 1,\ldots,M$ {\bf \textcolor{red}{in parallel}}}{
				$\rvx_{m,0}^{(r)} = \rvx^{(r)}$\;
				\For{$k = 0,\ldots, K-1$}{
					$\rvx_{m,k+1}^{(r)} = \rvx_{m,k}^{(r)} - \eta \rvg_{m,k}^{(r)}$\;
				}
			}
			Global model: $\rvx^{(r+1)} = \frac{1}{M} \sum_{m=1}^M \rvx_{m,K}^{(r)}$\;
		}
	\end{algorithm}
	\IncMargin{0.5em}
\end{minipage}

\section{Convergence Analysis of SFL}
We consider three typical cases: the strongly convex case, the general convex case and the non-convex case, where all the local objective functions $F_1,F_2,\ldots, F_M$ are $\mu$-strongly convex, general convex (see Definition~\ref{def:strong convexity}) and non-convex, respectively.

\subsection{Assumptions}\label{subsec:assumption}

We assume that (i) $F$ is lower bounded by $F^\ast$ for all cases and there exists a global minimizer $\rvx^\ast$ such that $F(\rvx^\ast)=F^\ast$ for the convex cases; (ii) all local objective functions are differentiable and smooth (see Definition~\ref{def:smoothness}). Furthermore, we need to make assumptions on the diversities: (iii) the assumptions on the stochasticity bounding the diversity of local component functions $\{f_m(\cdot;\xi_m^i)\}_i^{\lvert\gD_m\rvert}$ with respect to $i$ inside each client (Assumptions~\ref{asm:stochasticity:optimum}~and~\ref{asm:stochasticity:uniform}); (iv) the assumptions on the heterogeneity bounding the diversity of local objective functions $\{F_m\}_{m}^M$ with respect to $m$ across clients (Assumptions~\ref{asm:heterogeneity:optimum},~\ref{asm:heterogeneity:average}~and~\ref{asm:heterogeneity:max}).

\begin{definition}\label{def:strong convexity}
	A differentiable function $F$ is $\mu$-strongly convex if for all $\rvx, \rvy \in \R^d$,
	\begin{align*}
		F(\rvx) -F(\rvy) - \left\langle \nabla F(\rvy), \rvx-\rvy \right\rangle\geq \frac{\mu}{2}\norm{\rvx-\rvy}^2.
	\end{align*}
	If $\mu = 0$, we say that $F$ is general convex.
\end{definition}

\begin{definition}\label{def:smoothness}
	A differentiable function $F$ is $L$-smooth if for all $\rvx,\rvy \in \R^d$,
	\begin{align*}
		\norm{\nabla F(\rvx) - \nabla F(\rvy)} \leq L \norm{\rvx - \rvy}.
	\end{align*}
\end{definition}

\textit{Assumptions on the stochasticity.} Since both Algorithms~\ref{algorithm1} and \ref{algorithm2} use SGD (data samples are chosen with replacement) as the local solver, the stochastic gradient generated at each client is an (conditionally) unbiased estimate of the gradient of the local objective function, $\E_{\xi \sim \gD_m}[\nabla f_m(\rvx; \xi)\mid\rvx]=\nabla F_m(\rvx)$. In the FL literature, there are two common assumptions, Assumptions~\ref{asm:stochasticity:optimum} and \ref{asm:stochasticity:uniform}, to bound the stochasticity, where $\sigma_\ast$, $\sigma$ measure the level of stochasticity. Assumptions~\ref{asm:stochasticity:optimum} only assumes the bounded stochasticity at the optimum, and therefore it is weaker than Assumption~\ref{asm:stochasticity:uniform}. However, if using Assumption~\ref{asm:stochasticity:optimum}, we need to assume that each local component function $f_m(\rvx;\xi)$ is smooth, rather than merely assuming that each local objective function $F_m(\rvx)$ is smooth \citep{khaled2020tighter, koloskova2020unified}. Besides, we prioritize studying the effects of heterogeneity. For these two reasons, we use Assumption~\ref{asm:stochasticity:uniform} for all cases in this paper.
\begin{assumption}\label{asm:stochasticity:optimum}
	There exists a constant $\sigma_\ast$ such that for the global minimizer $\rvx^\ast \in \R^d$,
	\begin{align*}
		\E_{\xi\sim \gD_m}\norm{\nabla f_m(\rvx^\ast;\xi) - \nabla F_m(\rvx^\ast)}^2 \leq \sigma_\ast^2.
	\end{align*}
\end{assumption}
\begin{assumption}\label{asm:stochasticity:uniform}
	There exists a constant $\sigma$ such that for all $\rvx \in \R^d$,
	\begin{align*}
		\textstyle
		\E_{\xi\sim \gD_m}\norm{\nabla f_m(\rvx;\xi) - \nabla F_m(\rvx)}^2 \leq \sigma^2.
	\end{align*}
\end{assumption}

\textit{Assumptions on the heterogeneity.} Now we make assumptions on the diversity of the local objective functions in Assumption~\ref{asm:heterogeneity:optimum},~\ref{asm:heterogeneity:average} and~\ref{asm:heterogeneity:max}, also known as the heterogeneity in FL. For the convex cases, we use Assumption~\ref{asm:heterogeneity:optimum} as \cite{koloskova2020unified} did, which assumes the bounded diversity only at the optimum. Assumption~\ref{asm:heterogeneity:average} is made for the non-convex case, where the constants $\beta$ and $\zeta$ measure the heterogeneity of the local objective functions. Assumption~\ref{asm:heterogeneity:max}, the strongest assumption, is only made in Subsection~\ref{subsec:comparison-2}. Notably, that all the local objective functions are identical (that is, no heterogeneity) means that $\zeta_\ast,\beta, \zeta,\hat\zeta$ equal zero in these assumptions. Yet the reverse may not be true, as they only assume the first-order relationships \citep{karimireddy2020scaffold}.

\begin{assumption}\label{asm:heterogeneity:optimum}
	Let $\rvx^\ast \in \R^d$ be a minimizer of the global objective function $F$. Define
	\begin{align*}
		\textstyle
		\zeta_\ast^2 \coloneqq \frac{1}{M}\sum_{m=1}^M \norm{\nabla F_m(\rvx^\ast)}^2,
	\end{align*}
	where $\zeta_\ast^2$ is assumed to be bounded.
\end{assumption}

\begin{assumption}\label{asm:heterogeneity:average}
	There exist bounded constants $\beta^2$ and $\zeta^2$ such that for all $\rvx \in \R^d$,
	\begin{align*}
		\textstyle
		\frac{1}{M}\sum_{m=1}^M \norm{\nabla F_m(\rvx)-\nabla F(\rvx)}^2 \leq \beta^2\norm{\nabla F(\rvx)}^2 + \zeta^2.
	\end{align*}
\end{assumption}

\begin{assumption}\label{asm:heterogeneity:max}
	There exists a bounded constant $\hat\zeta^2$ such that for all $\rvx\in \R^d$,
	\begin{align*}
		\underset{m}{\max} \norm{\nabla F_m(\rvx)-\nabla F(\rvx)}^2 \leq \hat\zeta^2.
	\end{align*}
\end{assumption}

\subsection{Upper Bounds of SFL}\label{subsec:upper bound}

\begin{theorem}\label{thm:SFL}
	Let all the local objectives be $L$-smooth (Definition~\ref{def:smoothness}). For SFL (Algorithm~\ref{algorithm1}), there exist a constant effective learning rate $\tilde\eta \coloneqq \eta MK$ and weights $\{w_r\}_{r\geq 0}$, such that the weighted average of the global model parameters $\bar{\rvx}^{(R)}\coloneqq \frac{\sum_{r=0}^{R}w_r\rvx^{(r)}}{\sum_{r=0}^Rw_r}$ satisfies the following upper bounds:
	\setlist[itemize]{label=}
	\begin{itemize}[leftmargin=0.5em]
		\item \textbf{Strongly convex}: Under Assumptions~\ref{asm:stochasticity:uniform}, \ref{asm:heterogeneity:optimum}, there exist $\tilde\eta \leq \frac{1}{6L}$ and $w_r=(1-\frac{\mu\tilde\eta}{2})^{-(r+1)}$, such that for $R\geq 6\kappa$ ($\kappa \coloneqq \nicefrac{L}{\mu}$),
		\begin{flalign*}
			\E\left[F(\bar\rvx^{(R)})-F(\rvx^\ast)\right] \leq \frac{9}{2}\mu D^2 \exp\left(-\frac{\mu\tilde\eta R}{2} \right)+\frac{12\tilde{\eta}\sigma^2}{MK}+\frac{18L\tilde{\eta}^2\sigma^2}{MK}+\frac{18L\tilde{\eta}^2\zeta_\ast^2}{M}.\label{eq:thm:strongly convex} &&
		\end{flalign*}
		\item \textbf{General convex}: Under Assumptions~\ref{asm:stochasticity:uniform}, \ref{asm:heterogeneity:optimum}, there exist $\tilde\eta \leq \frac{1}{6L}$ and $w_r=1$, such that
		\begin{flalign*}
			\E\left[F(\bar\rvx^{(R)})-F(\rvx^\ast)\right] \leq \frac{3D^2}{\tilde\eta R}+\frac{12\tilde{\eta}\sigma^2}{MK}+\frac{18L\tilde{\eta}^2\sigma^2}{MK}+\frac{18L\tilde{\eta}^2\zeta_\ast^2}{M}. &&
		\end{flalign*}
		\item \textbf{Non-convex}: Under Assumptions~\ref{asm:stochasticity:uniform}, \ref{asm:heterogeneity:average}, there exist $\tilde\eta\leq \frac{1}{6L(1+\beta^2/M)}$ and $w_r=1$, such that
		\begin{flalign*}
			\min_{0\leq r\leq R} \E\left[\Norm{\nabla F(\rvx^{(r)})}^2\right] \leq \frac{10A}{\tilde\eta R} + \frac{20L\tilde\eta\sigma^2}{MK} + \frac{75L^2\tilde\eta^2\sigma^2}{4MK} + \frac{75L^2\tilde\eta^2\zeta^2}{4M}. &&
		\end{flalign*}
	\end{itemize}
	Here $D\coloneqq\norm{x^{(0)}-x^\ast}$ for the convex cases and $A \coloneqq F(\rvx^{(0)}) - F^\ast$ for the non-convex case.
\end{theorem}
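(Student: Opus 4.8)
The plan is to treat each communication round of SFL as a single sequence of $N \coloneqq MK$ stochastic gradient steps applied to a cyclically varying objective — $K$ consecutive steps on $F_{\pi_1}$, then $K$ on $F_{\pi_2}$, and so on — so that one round realizes the aggregate update $\rvx^{(r+1)} = \rvx^{(r)} - \eta \sum_{m=1}^M\sum_{k=0}^{K-1}\rvg_{\pi_m,k}^{(r)}$. The two sources of randomness are the without-replacement permutation $\pi$ and the stochastic-gradient noise. First I would expand $\norm{\rvx^{(r+1)}-\rvx^\ast}^2$ and take conditional expectation over a round, decomposing the gradient sum into three pieces: the ideal aggregate $MK\,\nabla F(\rvx^{(r)})$, a client-drift correction $\sum_{m,k}\big(\nabla F_{\pi_m}(\rvx_{m,k}^{(r)})-\nabla F_{\pi_m}(\rvx^{(r)})\big)$, and the martingale noise $\sum_{m,k}\big(\rvg_{\pi_m,k}^{(r)}-\nabla F_{\pi_m}(\rvx_{m,k}^{(r)})\big)$. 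Using strong convexity (Definition~\ref{def:strong convexity}) on the inner product of the ideal aggregate with $\rvx^{(r)}-\rvx^\ast$ converts it into a function-value descent plus a contraction factor $(1-\tfrac{\mu\tilde\eta}{2})$ on the distance, while $L$-smoothness (Definition~\ref{def:smoothness}) and Assumption~\ref{asm:stochasticity:uniform} control the quadratic terms.

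The heart of the argument, and the SFL-specific difficulty, is bounding the client drift $\E\norm{\rvx_{m,k}^{(r)}-\rvx^{(r)}}^2$. Unlike PFL, where each client drifts independently from the common anchor $\rvx^{(r)}$, here $\rvx_{m,k}^{(r)}$ is a partial sum of gradient steps over the first $m-1$ complete clients plus the first $k$ steps of client $m$, so its drift accumulates along the permutation and depends on the noise and ordering of all earlier clients. I would set up a recursion in the combined index and split the drift into (a) a partial-pass heterogeneity part, (b) a smoothness-induced coupling with $\norm{\rvx^{(r)}-\rvx^\ast}^2$, and (c) a variance part scaling with $\sigma^2$. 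The key device is the sampling-without-replacement structure: over a full pass one has the exact cancellation $\sum_{m=1}^M \nabla F_{\pi_m}(\rvx^{(r)}) = M\,\nabla F(\rvx^{(r)})$, and the partial-sum variance of $\{\nabla F_{\pi_m}(\rvx^\ast)\}$ obeys the standard without-replacement identity, so that the heterogeneity $\zeta_\ast^2$ enters only at second order in $\tilde\eta$ and with the variance-reduction factor $1/M$ — precisely the $\tfrac{18L\tilde\eta^2\zeta_\ast^2}{M}$ term in the statement rather than a cruder $\tilde\eta\,\zeta_\ast^2$.

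Plugging the drift bound back in and choosing $\tilde\eta\le\tfrac{1}{6L}$ (respectively $\tilde\eta\le\tfrac{1}{6L(1+\beta^2/M)}$ in the non-convex case, where Assumption~\ref{asm:heterogeneity:average} is invoked to control $\E\norm{\nabla F_{\pi_m}(\rvx^{(r)})}^2$) small enough for the negative descent term to absorb the second-order drift error, I would obtain a clean one-round recursion of the form $\E\norm{\rvx^{(r+1)}-\rvx^\ast}^2 \le (1-\tfrac{\mu\tilde\eta}{2})\,\E\norm{\rvx^{(r)}-\rvx^\ast}^2 - c\,\tilde\eta\,\E[F(\rvx^{(r)})-F^\ast] + \mathcal{E}$, with error $\mathcal{E}$ collecting the $\sigma^2$ and $\zeta_\ast^2$ contributions. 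For the strongly convex case I would multiply the $r$-th inequality by $w_r=(1-\tfrac{\mu\tilde\eta}{2})^{-(r+1)}$ and sum, so the distance terms telescope into the $\exp(-\tfrac{\mu\tilde\eta R}{2})$ decay of $D^2$ and the errors collapse into a geometric series; for general convex ($\mu=0$) and non-convex I would take $w_r=1$, average, and telescope to produce the $\tfrac{3D^2}{\tilde\eta R}$ and $\tfrac{10A}{\tilde\eta R}$ leading terms (the non-convex branch tracking $\E\norm{\nabla F(\rvx^{(r)})}^2$ via the standard descent lemma instead of distances). A final balancing lemma that tunes $\tilde\eta$ within the allowed range over the generic bound $\tfrac{\Psi}{\tilde\eta R}+b\tilde\eta+c\tilde\eta^2$ yields the stated constants.

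The main obstacle is the drift recursion under sequential dependence: because the conditioning at client $m$ must integrate out all of $\pi_1,\dots,\pi_{m-1}$ and their noise, naive bounds would produce a drift growing like the full horizon $MK$ and hence a non-sharp $\tilde\eta\,\zeta_\ast^2$ heterogeneity term. Extracting the without-replacement cancellation — so that heterogeneity appears only at order $\tilde\eta^2/M$ — while simultaneously keeping the coupled distance and drift inequalities consistent with the single constraint $\tilde\eta\le\tfrac{1}{6L}$ is the delicate part, and is exactly where SFL departs from both PFL and plain SGD-RR.
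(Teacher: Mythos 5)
Your proposal is correct and follows essentially the same route as the paper: the same decomposition of the round update into an ideal aggregate $-\eta MK\nabla F(\rvx^{(r)})$ plus a drift/error term, the same key step of bounding the sequential client drift $\sum_{m,k}\E\norm{\rvx_{m,k}^{(r)}-\rvx^{(r)}}^2$ via the without-replacement partial-sum variance (the $\sqrt{m}$ cancellation that yields the $1/M$ factor on the $\tilde\eta^2$ heterogeneity term), and the same weighted-telescoping and learning-rate-tuning scheme, which is exactly the argument the paper sketches and formalizes in \cite{li2023convergence}.
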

\begin{proof}
	We provide intuitive proof sketches of Theorem~\ref{thm:SFL} as done in \cite{karimireddy2020scaffold}. Ideally, we want to update the model with the gradients of the global objective function. For any local gradient in some training round of SFL, it can be decomposed into two vectors (or estimated by Taylor formula),
	\begin{align*}
		\nabla F_m(\rvx_{m,k}) \approx \left(\nabla F_m (\rvx) + \nabla^2 F_m(\rvx) (\rvx_{m,k} - \rvx) \right).
	\end{align*}
	Then, the global update of SFL can be written as
	\begin{flalign*}
		\Delta_{\text{SFL}}&= -\eta  \sum_{m=1}^M \sum_{k=0}^{K-1} \left\{\nabla F_m (\rvx_{m,k})\approx \left(\nabla F_m (\rvx) + \nabla^2 F_m(\rvx) (\rvx_{m,k} - \rvx) \right) \right\}&&\\
		&= \underbrace{-\eta MK\nabla F(\rvx)}_{\text{optimization vector}} \underbrace{- \eta \sum_{m=1}^M \sum_{k=0}^{K-1}\nabla^2 F_m(\rvx) (\rvx_{m,k} - \rvx)}_{\text{error vector}}.&&
	\end{flalign*}
	The optimization vector is beneficial while the error vector is detrimental. Thus, our goal is to suppress the error vector. Theorem~\ref{thm:SFL} is aimed to prove that $\sum_{m=1}^M \sum_{k=0}^{K-1}\norm{\rvx_{m,k} - \rvx}^2$ is bounded ($\nabla^2 F_m(\rvx) \approx L$). Intuitively, for there are about $mK$ update steps between $\rvx$ and $\rvx_{m,k}$, it is estimated to be $\gO\left(\sum_{m=1}^M \sum_{k=0}^{K-1} (\eta \sqrt{m} K \zeta)^2 \right) = \gO\left(\eta^2 M^2 K^3 \zeta^2 \right)$, where $\sqrt{m}$ is due to the shuffling-based manner. The formal proofs are in \cite{li2023convergence}.
\end{proof}

The effective learning rate $\tilde\eta \coloneqq \eta MK$ is used in the upper bounds as done in \cite{karimireddy2020scaffold, wang2020tackling}. All these upper bounds consist of two parts: the optimization part (the first term) and the error part (the last three terms). Setting $\tilde\eta$ larger makes the optimization part vanishes at a higher rate, yet causes the error part to be larger. This implies that we need to choose an appropriate $\tilde\eta$ to achieve a balance between these two parts, which is actually done in Corollary~\ref{cor:SFL}. Here we choose the appropriate learning rate with a prior knowledge of the total training rounds $R$ \citep{karimireddy2020scaffold}.

\begin{corollary}\label{cor:SFL}
	By choosing an appropriate learning rate for the results of Theorem~\ref{thm:SFL}, we can obtain the upper bounds of SFL:
	\setlist[itemize]{label=}
	\begin{itemize}[leftmargin=0.5em]
		\item \textbf{Strongly convex}: If $\tilde\eta=\eta MK \asymp  \min \{\frac{1}{L},\frac{1}{\mu R} \}$ for Theorem~\ref{thm:SFL}, then
		\begin{flalign*}
			\E\left[F(\bar\rvx^{(R)})-F(\rvx^\ast)\right] = \tilde\gO\left(\frac{\sigma^2}{\mu MKR} + \frac{L\sigma^2}{\mu^2MKR^2} + \frac{L\zeta_\ast^2}{\mu^2MR^2} + \mu D^2 \exp\left(\frac{-\mu R}{L}\right)\right).&&
		\end{flalign*}
		\item \textbf{General convex}: If $\tilde\eta=\eta MK \asymp  \min \{\frac{1}{L},\frac{D}{c_1^{1/2}R^{1/2}},\frac{D^{2/3}}{c_2^{1/3}R^{2/3}} \}$ with $c_1 \asymp \frac{\sigma^2}{MK}$ and $c_2 \asymp \frac{L\sigma^2}{MK} + \frac{L\zeta^2}{M}$ for Theorem~\ref{thm:SFL}, then
		\begin{flalign*}
			\E\left[F(\bar\rvx^{(R)})-F(\rvx^\ast)\right] = \gO\left(\frac{\sigma D}{\sqrt{MKR}} + \frac{\left(L\sigma^2D^4\right)^{1/3}}{(MK)^{1/3}R^{2/3}} + \frac{\left(L\zeta_\ast^2D^4\right)^{1/3}}{M^{1/3}R^{2/3}} + \frac{LD^2}{R}\right). &&
		\end{flalign*}
		\item \textbf{Non-convex}: If $\tilde\eta=\eta MK \asymp  \min \{\frac{1}{L(1+\beta^2/M)},\frac{A^{1/2}}{c_1^{1/2}R^{1/2}},\frac{A^{1/3}}{c_2^{1/3}R^{2/3}} \}$ with $c_1 \asymp \frac{L\sigma^2}{MK}$ and $c_2 \asymp \frac{L^2\sigma^2}{MK} + \frac{L^2\zeta^2}{M}$ for Theorem~\ref{thm:SFL}, then
		\begin{flalign*}
			\min_{0\leq r\leq R} \E\left[\Norm{\nabla F(\rvx^{(r)})}^2\right] = \gO\left(\frac{ \left(L\sigma^2A\right)^{1/2}}{\sqrt{MKR}} + \frac{\left(L^2\sigma^2A^2\right)^{1/3}}{(MK)^{1/3}R^{2/3}} + \frac{\left(L^2\zeta^2A^2\right)^{1/3}}{M^{1/3}R^{2/3}} + \frac{LA(1+\frac{\beta^2}{M})}{R}\right). &&
		\end{flalign*}
	\end{itemize}
	Here $D\coloneqq\norm{x^{(0)}-x^\ast}$ for the convex cases and $A \coloneqq F(\rvx^{(0)}) - F^\ast$ for the non-convex case.
\end{corollary}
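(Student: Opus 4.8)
The plan is to observe that each of the three bounds in Theorem~\ref{thm:SFL} is an instance of one of two canonical ``step-size tuning'' templates in $\tilde\eta$, and then to optimize $\tilde\eta$ by invoking the corresponding standard tuning lemma (of the kind used in \cite{karimireddy2020scaffold}). All three bounds share the common error structure $b\tilde\eta + e\tilde\eta^2$, where the coefficients $b$ and $e$ are read off directly from Theorem~\ref{thm:SFL}; the only structural difference is whether the optimization term decays polynomially ($\propto 1/(\tilde\eta R)$, for the general convex and non-convex cases) or exponentially ($\propto \mu\exp(-\mu\tilde\eta R/2)$, for the strongly convex case). Thus no new analytic estimate is needed beyond the bounds already proved; the corollary is purely a matter of balancing terms subject to the admissibility constraint on $\tilde\eta$.

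For the general convex and non-convex cases, the bound takes the form
\begin{align*}
	\Psi \leq \frac{r_0}{\tilde\eta R} + b\tilde\eta + e\tilde\eta^2, \qquad \tilde\eta \leq \eta_{\max},
\end{align*}
with $(r_0, b, e, \eta_{\max}) = \bigl(3D^2,\ \tfrac{12\sigma^2}{MK},\ \tfrac{18L\sigma^2}{MK}+\tfrac{18L\zeta_\ast^2}{M},\ \tfrac{1}{6L}\bigr)$ in the general convex case and $(r_0, b, e, \eta_{\max}) = \bigl(10A,\ \tfrac{20L\sigma^2}{MK},\ \tfrac{75L^2\sigma^2}{4MK}+\tfrac{75L^2\zeta^2}{4M},\ \tfrac{1}{6L(1+\beta^2/M)}\bigr)$ in the non-convex case. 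The tuning lemma then says that choosing $\tilde\eta = \min\bigl\{\eta_{\max},\ (r_0/(bR))^{1/2},\ (r_0/(eR))^{1/3}\bigr\}$, which balances $r_0/(\tilde\eta R)$ against $b\tilde\eta$ and against $e\tilde\eta^2$ respectively, yields
\begin{align*}
	\Psi = \gO\!\left(\frac{r_0}{\eta_{\max}R} + \sqrt{\frac{b\,r_0}{R}} + \frac{(e\,r_0^2)^{1/3}}{R^{2/3}}\right).
\end{align*}
Identifying $c_1 \asymp b$ and $c_2 \asymp e$ recovers exactly the prescribed learning-rate choices in the statement, and substituting $(r_0,b,e,\eta_{\max})$ reproduces the claimed rates: e.g.\ in the general convex case $r_0/(\eta_{\max}R) \asymp LD^2/R$, $\sqrt{b\,r_0/R} \asymp \sigma D/\sqrt{MKR}$, and the cube-root term splits along $e$ into the $(L\sigma^2D^4)^{1/3}/(MK)^{1/3}$ and $(L\zeta_\ast^2 D^4)^{1/3}/M^{1/3}$ contributions.

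For the strongly convex case I would instead use the exponentially-decaying variant of the tuning lemma. Here the bound has the form $\Psi \leq r_0\mu\exp(-\mu\tilde\eta R/2) + b\tilde\eta + e\tilde\eta^2$ with $r_0 = \tfrac{9}{2}D^2$ and the same $(b,e)$ as the general convex case, subject to $\tilde\eta \leq \tfrac{1}{6L}$. Choosing $\tilde\eta \asymp \min\{\tfrac{1}{L},\ \tfrac{1}{\mu R}\}$, up to a logarithmic factor absorbed into $\tilde\gO$ that drives the exponential term down, gives
\begin{align*}
	\Psi = \tilde\gO\!\left(r_0\mu\exp\!\left(-\frac{\mu R}{L}\right) + \frac{b}{\mu R} + \frac{e}{\mu^2 R^2}\right),
\end{align*}
and substituting $(r_0,b,e)$ produces the four stated terms. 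The hypothesis $R \geq 6\kappa$ guarantees $1/(\mu R) \lesssim 1/L$, so the logarithmically-tuned choice remains compatible with the constraint $\tilde\eta \leq 1/(6L)$.

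The calculation is essentially bookkeeping, so the only points requiring care are: (i) correctly propagating the active-constraint term $r_0/(\eta_{\max}R)$ --- in the non-convex case this is precisely where the factor $(1+\beta^2/M)$ in the final $LA(1+\beta^2/M)/R$ term originates, since $\eta_{\max}=\tfrac{1}{6L(1+\beta^2/M)}$; and (ii) tracking the logarithmic factor in the strongly convex tuning so that the exponential term is genuinely negligible while both the constraint $\tilde\eta\le 1/(6L)$ and the hypothesis $R\ge 6\kappa$ are respected (and noting that the numerical constant inside the exponent is treated loosely, as is standard). Since everything reduces to the tuning lemma applied to bounds already established in Theorem~\ref{thm:SFL}, this is stated as a corollary rather than a theorem.
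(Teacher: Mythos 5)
Your proposal is correct and takes essentially the same route as the paper: the paper also treats the corollary as pure step-size tuning, invoking the standard lemmas of \cite{karimireddy2020scaffold} (Lemmas~7 and~8 of \citealt{li2023convergence}) with the coefficients $(r_0,b,e,\eta_{\max})$ read off from Theorem~\ref{thm:SFL}, using the exponential-decay variant for the strongly convex case and the polynomial variant otherwise. One small remark: your balancing choice $(r_0/(eR))^{1/3}\propto R^{-1/3}$ for the third entry of the min is the one that actually attains the claimed $R^{-2/3}$ error terms (the exponent $R^{2/3}$ appearing inside the min in the corollary's statement would not balance), so your bookkeeping is the right one.
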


Similar to Theorem~\ref{thm:SFL}, all these upper bounds consist of two parts, the optimization part (the last term), and the error part (the first three terms). Specifically, the first two terms (containing $\sigma$) is called stochasticity terms, the third term (containing $\zeta_\ast$, $\zeta$) is called heterogeneity terms, the last term is called optimization terms.

Generally, for a sufficiently large number of training rounds $R$, the convergence rate is determined by the first term for all cases, resulting in rates of $\tilde\gO(1/MKR)$, $\gO(1/\sqrt{MKR})$, $\gO(1/\sqrt{MKR})$ for the strongly convex, general convex and non-convex cases, respectively.

Recall that SGD-RR can be seen as one special case of SFL, where one step of GD is performed on each local objective $F_m$, which implies $K=1$ and $\sigma=0$. We now compare the upper bounds of SFL with those of SGD-RR to exam the tightness. As shown in \cite{mishchenko2020random}'s Corollaries~1, 2, 3, the upper bounds of SGD-RR are $\tilde\gO\left({\color{blue}\frac{L}{\mu}}\left(\frac{L\zeta_\ast^2}{\mu^2MR^2} + \mu D^2\exp\left(\frac{-\mu {\color{red}\boldsymbol{M}}R}{L}\right)\right)\right)$, $\gO\left(\frac{\left(L\zeta_\ast^2D^4\right)^{1/3}}{M^{1/3}R^{2/3}} + \frac{LD^2}{R}\right)$, $\gO\left(\frac{\left(L^2\zeta^2A^2\right)^{1/3}}{M^{1/3}R^{2/3}} + \frac{L A}{R}\right)$ for the strongly convex, general convex and non-convex cases, respectively. We see that our bounds match those of SGD-RR in the general convex, and non-convex cases. For the strongly convex case, the bound of SGD-RR shows an advantage on the optimization term (marked in red). This advantage is due to the advanced technique of Shuffling Variance introduced by \citet[][Definition~2]{mishchenko2020random}. We leave the investigation on introducing this advanced technique to SFL for future work.

\subsection{Lower Bounds of SFL}\label{subsec:lower bound}

The lower bounds of SFL are stated in Theorem~\ref{thm:lower bound} and Theorem~\ref{thm:lower bound2}. Theorem~\ref{thm:lower bound} is for arbitrary learning rates $\eta>0$ and Theorem~\ref{thm:lower bound2} is for small learning rates $0 < \eta \lesssim \frac{1}{LMK}$.

\begin{theorem}\label{thm:lower bound}
	There exist a multi-dimensional global objective function, whose local objective functions are $\mu$-strongly convex (Definition~\ref{def:strong convexity}) and $L$-smooth (Definition~\ref{def:smoothness}), and satisfy Assumptions~\ref{asm:stochasticity:uniform}~and~\ref{asm:heterogeneity:average}, and an initialization point $\rvx^{(0)}$ such that for any $\eta>0$ and {$R\geq 1$}, $M\geq 4$, $K\geq 1$, the last-round global parameters $\rvx^{(R)}$ satisfy
	\begin{align*}
		\E\left[ F(\rvx^{(R)}) - F(\rvx^\ast) \right] = \Omega \left( \frac{\sigma^2}{\mu MKR} + \frac{\zeta^2}{\mu MR^2}\right).
	\end{align*}
\end{theorem}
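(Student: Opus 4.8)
The plan is to establish the two terms of the bound separately and then combine them on one instance whose coordinates split into two orthogonal blocks --- one carrying the stochasticity, the other the heterogeneity. Because the local gradients then decouple across blocks, SFL (Algorithm~\ref{algorithm1}) evolves independently on each, the suboptimality $\E[F(\rvx^{(R)})-F(\rvx^\ast)]$ is the sum of the two block contributions, and it suffices to lower-bound each term on its own block and add.

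For the statistical term $\tfrac{\sigma^2}{\mu MKR}$ I would use identical quadratic local objectives $F_m(\rvx)=\tfrac{\mu}{2}\norm{\rvx}^2$ (so this block is homogeneous, $\zeta=0$) with mean-zero stochastic gradients of variance $\sigma^2$. On this block SFL degenerates to plain SGD run for exactly $T=MKR$ steps, and the scalar recursion $\rvx_{t+1}=(1-\eta\mu)\rvx_t-\eta\xi_t$ has a noise floor $\E\norm{\rvx_T}^2\gtrsim \eta\sigma^2/\mu$ once the transient $(1-\eta\mu)^{2T}\norm{\rvx^{(0)}}^2$ has decayed; balancing the two over $\eta$ reproduces the classical minimax rate $\Omega(\sigma^2/(\mu MKR))$.

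The heterogeneity term $\tfrac{\zeta^2}{\mu MR^2}$ is the heart of the argument and I would treat it deterministically ($\sigma=0$). The clean handle is Jensen's inequality, $\E[F(\rvx^{(R)})-F(\rvx^\ast)]\ge \tfrac{\mu}{2}\norm{\E[\rvx^{(R)}]-\rvx^\ast}^2$, so it is enough to produce a persistent \emph{bias} in the expected iterate. A homogeneous-curvature construction is provably useless for this: if all $\nabla^2 F_m$ coincide, the one-round map has commuting factors, the zero-sum offsets $\{\nabla F_m(\rvx^\ast)\}$ cancel in expectation over the random order, and only a mean-zero fluctuation survives, which after optimizing $\eta$ yields merely an $R^{-3}$ floor. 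To force a surviving bias I would make both the curvatures and the offsets heterogeneous: split the $M\ge 4$ clients into two balanced groups with opposite offsets of size $\asymp\zeta$ and differing Hessians, and leave the smoothness constant $L$ as a free design parameter. Expanding the expected one-round update in the step size, the first-order contribution cancels by the zero-sum offsets, but the second-order contribution retains the permutation average of curvature-times-offset along the shuffled sequence --- a term $\asymp\tilde\eta^2\,\tfrac{1}{M^2}\sum_m\nabla^2 F_m(\rvx^\ast)\nabla F_m(\rvx^\ast)$ --- which does not vanish. Propagating this per-round drift through the geometric round-to-round contraction $I-\tilde\eta\nabla^2 F$ gives a stationary offset $\norm{\E[\rvx^{(R)}]}$ that grows with $\tilde\eta=\eta MK$, while the initialization term decays like $\exp(-\tilde\eta\mu R)$; minimizing over $\eta$ places the balance at $\tilde\eta\asymp \tfrac{1}{\mu R}$ (as in Corollary~\ref{cor:SFL}) and yields $\Omega(\zeta^2/(\mu MR^2))$, which is automatically $K$-free.

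I expect the main obstacle to be isolating and controlling exactly this surviving drift. Since the first-order term cancels identically, the whole bound rests on a single second-order coefficient obtained from the permutation average of an ordered, noncommuting product of per-client maps; evaluating that average cleanly --- including the within-client contribution of the $K$ local steps --- and bounding the higher-order remainder uniformly in $\eta$ is the delicate step. A secondary but essential point is calibrating the two groups' curvatures and offsets together with the free choice of $L$ (taking $L\asymp\sqrt{M}\,\mu$) so that the drift surviving the contraction has magnitude $\asymp\sqrt{M}\,\zeta$; this is precisely what upgrades the naive $M^{-2}$ dependence to the correct $M^{-1}$. Finally one checks that the combined two-block instance simultaneously meets Assumptions~\ref{asm:stochasticity:uniform} and~\ref{asm:heterogeneity:average} with the intended $\sigma$, and with $\beta$ absorbing the curvature heterogeneity and $\zeta^2=\tfrac{1}{M}\sum_m\norm{\nabla F_m(\rvx^\ast)}^2$ binding at the optimum.
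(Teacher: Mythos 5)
Your overall architecture matches the paper's: the paper also splits the instance into orthogonal blocks, reduces the stochasticity block to constant-stepsize SGD (plus the classical oracle bound for the $\sigma^2/(\mu MKR)$ term), and finishes via a bias in the iterate combined with (strong) convexity. The genuine gap is in the heterogeneity block. The mechanism you propose---client-heterogeneous \emph{smooth} Hessians, with the bias coming from the second-order permutation average $\asymp\tilde\eta^2\frac{1}{M^2}\sum_m\nabla^2F_m(\rvx^\ast)\nabla F_m(\rvx^\ast)$---is quantitatively too weak to yield $\Omega\bigl(\zeta^2/(\mu MR^2)\bigr)$, under any calibration of the two groups' curvatures. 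Concretely, for two groups of quadratics ($h_A$ with offset $+\zeta$, $h_B$ with offset $-\zeta$), the per-round expected drift is at most $\asymp\eta^2MK^2\zeta\,\lvert h_A-h_B\rvert$ (in fact the $K^2$ parts of the within-client and cross-client second-order terms cancel, leaving only $\asymp\eta^2MK\zeta\,\lvert h_A-h_B\rvert$), while the per-round contraction is $\asymp\eta MK\bar h$ with $\bar h=(h_A+h_B)/2\geq\lvert h_A-h_B\rvert/2$. The stationary bias is therefore $\lesssim\eta K\zeta$, and---crucially---in a lower bound you must survive the algorithm's best choice of $\eta$: it can take $\eta\asymp 1/(\bar hMKR)$, just enough to kill the transient, giving a floor of at most $\asymp\bar h(\eta K\zeta)^2\asymp\zeta^2/(\bar hM^2R^2)\leq\zeta^2/(\mu M^2R^2)$, short of the target by a factor of $M$. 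Your calibration $L\asymp\sqrt{M}\mu$ makes this worse (floor $\asymp\zeta^2/(\mu M^{5/2}R^2)$), because the contraction is governed by $\bar h$, which is dragged up together with $\lvert h_A-h_B\rvert$; you cannot force $\tilde\eta\asymp1/(\mu R)$ on the algorithm. This is precisely the known obstruction in the SGD-RR literature: smooth/quadratic constructions stall at $R^{-3}$ or lose polynomial factors in $M$, which is why \citet{cha2023tighter}---whose construction this paper extends from $K=1$ to general $K$---had to go beyond them.

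What the paper does instead (Appendix~\ref{app:lower bound-heterogeneity}, Table~\ref{tab:lower bounds:heterogeneity}) has \emph{no} cross-client curvature heterogeneity at all: every client has the same piecewise curvature $(\lambda_0\mathbbm{1}_{x<0}+\lambda\mathbbm{1}_{x\geq0})$ with $\lambda_0/\lambda=1010$, and only the linear offsets $\pm\zeta$ differ across clients. The asymmetry is placed in space (across $x=0$) rather than across clients, and the mechanism is rectification of exactly the first-order fluctuation you discard as ``mean-zero'': within a round the shuffled partial sums satisfy $\E\lvert\gA_{m,k}\rvert\gtrsim\sqrt{m}$ (Lemma~\ref{lem:lower bound:partial sum absolute value bounds}), so the iterate makes excursions of size $\eta K\zeta\sqrt{m}$, and since negative excursions are damped $1010$ times more strongly than positive ones, the conditional-expectation analysis converts them into a per-round drift $\asymp\lambda_0\eta^2M^{3/2}K^2\zeta$---note $M^{3/2}K^2$, not $MK$. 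Against the contraction $\asymp\lambda_0\eta MK$ this gives a stationary bias $\asymp\eta M^{1/2}K\zeta\asymp\zeta/(\lambda_1 M^{1/2}R)$ even at the algorithm's best admissible $\eta$, hence an objective gap $\asymp\zeta^2/(\mu MR^2)$. So the missing idea is not a cleverer calibration of client-wise Hessians but a different resource entirely: a Hessian that is discontinuous at the optimum, turning the $\sqrt{m}$-scale shuffling random walk itself into bias.
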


\begin{proof}
	The proofs are in Appendices~\ref{app:lower bound},~\ref{app:lower bound-stochasticity}~and~\ref{app:lower bound-heterogeneity}. We construct the lower bounds for stochasticity terms and heterogeneity terms, separately. For stochasticity, we let all the local objective functions be the same, that is $F=F_1=\cdots=F_M$, and then aim to derive the lower bound for SGD (see Appendix~\ref{app:lower bound-stochasticity}). For heterogeneity, we let the local component functions be the same inside each client $F_m = f_m(\cdot;\xi_m^1)= \cdots= f_m(\cdot;\xi_m^{|\gD_m|})$ for each $m$, and then aim to extend the works of SGD-RR to SFL, that is, from performing one update step to performing multiple update steps on each local objective function (see Appendix~\ref{app:lower bound-heterogeneity}). We use the techniques in \cite{woodworth2020minibatch}'s Theorem~2, \cite{yun2022minibatch}'s Theorem~4 and Proposition 5 and \cite{cha2023tighter}'s Theorem 3.1.
\end{proof}

This lower bound, which holds for arbitrary $\eta>0$, matches the error terms in the strongly convex case in Corollary~\ref{cor:SFL}, up to a factor of $\kappa$ and some polylogarithmic factors. In the next theorem, for small learning rates $\eta \lesssim \frac{1}{LMK}$, we can remove the gap of $\kappa$.

\begin{theorem}\label{thm:lower bound2}
	Under the same conditions of Theorem~\ref{thm:lower bound} (unless explicitly stated), there exist a multi-dimensional global objective function and an initialization point, such that for $\eta\leq \frac{1}{101LMK}$, the arbitrary weighted average global parameters $\bar\rvx^{(R)}$ satisfy the lower bounds:
	\setlist[itemize]{label=}
	\begin{itemize}[leftmargin=0.5em]
		\item \textbf{Strongly convex}: If $R\geq \frac{1}{1010}\kappa$ and $\kappa \geq 1010$, then
		\begin{flalign*}
			\E\left[F(\bar\rvx^{(R)})-F(\rvx^\ast)\right] = \Omega \left( \frac{\sigma^2}{\mu MKR} + \frac{L\sigma^2}{\mu^2 MKR^2} + \frac{L\zeta^2}{\mu^2 MR^2}\right).&&
		\end{flalign*}
		\item \textbf{General convex}: If $R\geq 51^3\max\left\{ \frac{\sigma}{LM^{1/2}K^{1/2}D},  \frac{L^2 MKD^2}{\sigma^2}, \frac{\zeta}{LM^{1/2}D},  \frac{L^2 MD^2}{\zeta^2}\right\}$, then
		\begin{flalign*}
			\E\left[F(\bar\rvx^{(R)})-F(\rvx^\ast)\right] = \Omega\left(\frac{\sigma D}{\sqrt{MKR}} + \frac{\left(L\sigma^2D^4\right)^{1/3}}{(MK)^{1/3}R^{2/3}} + \frac{\left(L\zeta^2D^4\right)^{1/3}}{M^{1/3}R^{2/3}}\right). &&
		\end{flalign*}
	\end{itemize}
\end{theorem}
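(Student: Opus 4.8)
The plan is to mirror the decomposition already used for Theorem~\ref{thm:lower bound}: I would split the target error floor into a pure stochasticity term, a reshuffling-variance-from-noise term, and a reshuffling-variance-from-heterogeneity term, build a separate scalar hard instance for each, and then place the three instances on orthogonal coordinate blocks of a single separable objective (consistent with the ``multi-dimensional'' instance promised in the statement). Because the SFL dynamics of Algorithm~\ref{algorithm1} decouple across blocks of a separable function and the suboptimalities add, the combined instance inherits the sum (equivalently, the maximum up to constants) of the three individual lower bounds, which is exactly the claimed $\Omega(\cdot)$. The freedom to choose $\rvx^{(0)}$ on each block is what lets one $\eta$-independent bound hold for \emph{every} $\eta\le\frac{1}{101LMK}$: on each block I would calibrate the initial distance so that small $\eta$ triggers an undecayed bias floor while larger $\eta$ triggers the corresponding variance floor, and the minimum over the admissible $\eta$-range lands on the target term.

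For the leading term $\frac{\sigma^2}{\mu MKR}$ I would take all local objectives identical, collapsing SFL into plain SGD with $T=MKR$ noisy steps on $\frac{\mu}{2}x^2$; the standard strongly-convex SGD bias/variance tradeoff then yields $\Omega(\sigma^2/(\mu MKR))$ after tuning the initialization. The key observation for the remaining two error terms is that $\frac{L\sigma^2}{\mu^2 MKR^2}$ and $\frac{L\zeta^2}{\mu^2 MR^2}$ share the prefactor $\frac{L}{\mu^2 MR^2}$, with the per-round ``signal'' being either the noise averaged over a client's $K$ local steps, $\sigma^2/K$, or the between-client heterogeneity $\zeta^2$ (indeed, in the error terms of Theorem~\ref{thm:SFL} one passes from one to the other by the substitution $\zeta^2\mapsto\sigma^2/K$). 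I would therefore generate both from the same reshuffling-variance construction: for the heterogeneity term set the within-client noise to zero (identical component functions inside each client, so $\sigma=0$ on that block), reducing the block to a deterministic sequential pass over $M$ heterogeneous quadratics with $K$ steps each, reshuffled over $R$ rounds; for the drift term let the functions be identical but noisy. In both cases the per-round update is a $K$-fold composition of the local SGD map, and the without-replacement ordering of the $M$ components produces the characteristic $R^{-2}$ dependence.

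The small-learning-rate restriction $\eta\le\frac{1}{101LMK}$ is what makes these last two terms accessible and is precisely why they are absent from Theorem~\ref{thm:lower bound}: in this regime the $K$-fold local map is a small perturbation of the identity, so its reshuffling variance is governed by its leading second-order term in $\tilde\eta$, exactly the $L\tilde\eta^2$ effects that the upper bound of Theorem~\ref{thm:SFL} isolates; for large $\eta$ these are swamped and only the coarser $\Omega(\sigma^2/(\mu MKR)+\zeta^2/(\mu MR^2))$ floor survives. Concretely I would adapt the SGD-RR lower-bound constructions of \citet{yun2022minibatch} (Theorem~4 and Proposition~5) and \citet{cha2023tighter} (Theorem~3.1) from one gradient step per function to $K$ steps per client, and use the local-SGD construction of \citet{woodworth2020minibatch} (Theorem~2) to certify that noise over $K$ steps acts as an effective heterogeneity of size $\sigma^2/K$. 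The general-convex bounds would follow by the same three-block scheme with the $\mu$-strongly-convex quadratics replaced by general-convex ones and the initialization retuned; the $\tfrac13$-power terms $\frac{(L\sigma^2D^4)^{1/3}}{(MK)^{1/3}R^{2/3}}$ and $\frac{(L\zeta^2D^4)^{1/3}}{M^{1/3}R^{2/3}}$ arise from balancing the initialization term $D^2/(\tilde\eta R)$ against the respective $L\tilde\eta^2$ floor, which is the regime enforced by the large-$R$ hypothesis $R\ge 51^3\max\{\cdots\}$.

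The hardest part will be the $K>1$ generalization of the reshuffling-variance lower bound. The existing SGD-RR analyses exploit that each epoch applies every component map exactly once; with $K$ inner steps per client the per-round map is a composition that entangles each client's curvature with the step count, and I must show its reshuffling variance scales as $\frac{L}{\mu^2 MR^2}$ times $\sigma^2/K$ (resp.\ $\zeta^2$), with the factor $K$ genuinely entering the noise term but cancelling in the heterogeneity term, rather than acquiring spurious $K$-dependence in either. A secondary difficulty is that the conclusion must hold for an \emph{arbitrary} weighted average $\bar\rvx^{(R)}$ rather than the last iterate: for the bias-type floors I would enforce this by making the relevant coordinate sign-definite across all rounds, so that any convex combination stays bounded away from the optimum, while for the variance-type floors I would rely on the averaged-iterate formulations already present in the cited SGD-RR lower bounds.
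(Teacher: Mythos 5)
Your plan coincides with the paper's own proof of this theorem: the paper likewise restricts to the two small-$\eta$ regimes of the constructions behind Theorem~\ref{thm:lower bound}, re-choosing the curvature pair as $\lambda_0=L$, $\lambda_1=\mu$ to gain the extra $\kappa$ factor, places the stochasticity and heterogeneity instances on orthogonal coordinates of one separable objective, handles the arbitrary weighted average exactly by your sign-definiteness argument (showing $\E[x^{(r)}]$ stays bounded below by the target floor in every round, so any convex combination does too), and derives the general convex case by re-tuning the strong-convexity parameter ($\lambda \asymp L^{1/3}\sigma^{2/3}/(M^{1/3}K^{1/3}R^{2/3}D^{2/3})$, resp.\ the $\zeta$ analogue) so that the bias and variance floors balance, with the large-$R$ hypothesis guaranteeing the tuned instance remains admissible. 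The only detail to make explicit in a write-up is that your ``general-convex quadratics'' must be realized as strongly convex quadratics with curvature tuned to $R$ (as in the Woodworth/Cha constructions you cite), since the piecewise-quadratic drift mechanism needs strictly positive curvature.
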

\begin{proof}
	The proofs are in Appendix~\ref{app:subsec:cor-lower bound}. We use similar techniques in \cite{woodworth2020minibatch,cha2023tighter}.
\end{proof}

Theorem~\ref{thm:lower bound2} provides the lower bounds in the convex cases with any arbitrary weighted average parameter $\bar\rvx^{(R)}$ for small learning rates $\eta\lesssim \frac{1}{LMK}$. Although the constraint on small learning rates seems stringent, Theorem~\ref{thm:SFL} indicates that such small learning rates $\tilde \eta = \eta MK \lesssim \frac{1}{L}$ also exist in our upper bounds. Therefore, it is justified to use the lower bounds to assess the tightness of our upper bounds.

In the strongly convex case in Corollary~\ref{cor:SFL}, if $R \gtrsim \kappa$, the learning rate choice becomes
\[
\tilde\eta \asymp \min\left\{ \frac{1}{L}, \frac{1}{\mu R} \right\} \asymp \frac{1}{\mu R} \not\asymp \frac{1}{L} ,
\]
which yields the upper bound of $\tilde\gO\left(\frac{\sigma^2}{\mu MKR} + \frac{L\sigma^2}{\mu^2MKR^2} + \frac{L\zeta^2}{\mu^2MR^2}\right)$ for SFL. This upper bound exactly matches the lower bound in Theorem~\ref{thm:lower bound2} (ignoring polylogarithmic factors). Notably, the upper bound differs from the original bound in Corollary~\ref{cor:SFL} due to the optimization term (the last term) in Corollary~\ref{cor:SFL} being present only when $\tilde\eta \asymp \frac{1}{L}$ in the convex cases (see the proofs of \citealt{li2023convergence}'s Lemmas~7 and 8). Moreover, it is appropriate to compare the upper bounds (with $\zeta_\ast$) and the lower bounds (with $\zeta$), since Assumption~\ref{asm:heterogeneity:average} is stronger than Assumption~\ref{asm:heterogeneity:optimum} \citep{cha2023tighter}.

In the general convex case, with the same logic as in the strongly convex case, if $R\gtrsim \max\left\{ \frac{\sigma}{LM^{1/2}K^{1/2}D},  \frac{L^2 MKD^2}{\sigma^2}, \frac{\zeta}{LM^{1/2}D},  \frac{L^2 MD^2}{\zeta^2}\right\}$, then the upper bound of SFL becomes $\gO\left(\frac{\sigma D}{\sqrt{MKR}} + \frac{\left(L\sigma^2D^4\right)^{1/3}}{(MK)^{1/3}R^{2/3}} + \frac{\left(L\zeta_\ast^2D^4\right)^{1/3}}{M^{1/3}R^{2/3}}\right)$. It matches the lower bound in Theorem~\ref{thm:lower bound2}.

\textit{Key points and limitations of Subsection~\ref{subsec:lower bound}.} The matching lower bounds in Theorem~\ref{thm:lower bound} and Theorem~\ref{thm:lower bound2} verify that our upper bounds are tight in the convex cases for the sufficiently large number of training rounds $R$. However, the lower bounds for small $R$ are loose and the lower bounds for the non-convex case are still lacking, for both SGD-RR and SFL.

\section{Comparison Between PFL and SFL}\label{sec:comparison}
Unless otherwise stated, our comparisons are in terms of training rounds, which is also adopted in \cite{gao2021evaluation}. This comparison (running for the same number of total training rounds $R$) is fair when considering the same total computation cost for both methods. We summarize the existing convergence results of PFL in Table~\ref{tab:comparison}.

\begin{table}[phtb]
	\vspace{-2ex}
	\centering
	\renewcommand{\arraystretch}{1}
	\resizebox{\linewidth}{!}{
		\begin{threeparttable}[b]
			\begin{tabular}{p{13.5em}l}
				\toprule
				Method &Upper Bound\\\midrule
				\multicolumn{2}{c}{Strongly Convex}\\
				\midrule
				PFL \citep{karimireddy2020scaffold} &$\frac{\sigma^2}{\mu MKR} + \frac{L\sigma^2}{\mu^2KR^2} + \frac{L\zeta^2}{\mu^2 R^2} + \mu D^2 \exp\left(\frac{-\mu R}{L}\right)
				$ \tnote{\color{black}(1)}\\[1.5ex]
				PFL \citep{koloskova2020unified} &$\frac{\sigma_\ast^2}{\mu MKR} + \frac{L\sigma_\ast^2}{\mu^2KR^2} + \frac{L\zeta_\ast^2}{\mu^2 R^2} + LKD^2 \exp\left(\frac{-\mu R}{L}\right)
				$ \tnote{\color{black}(2)}\\[1.5ex]
				PFL (Theorem~\ref{thm:PFL-1}) &$\frac{\sigma^2}{\mu MKR} + \frac{L\sigma^2}{\mu^2KR^2} + \frac{L\zeta_\ast^2}{\mu^2 R^2} + \mu D^2 \exp\left(\frac{-\mu R}{L}\right)
				$\\[1.5ex]
				\rowcolor{noteblue!20}
				PFL \citep{woodworth2020minibatch} &$\frac{\sigma^2}{\mu MKR} + \frac{L\sigma^2}{\mu^2KR^2} + \frac{L\hat\zeta^2}{\mu^2R^2} + \mu D^2 \exp\left(\frac{-\mu {\color{red}\boldsymbol{K}}R}{L}\right)$ \tnote{\color{black}(3)}\\[1ex]
				\rowcolor{notegreen!20}
				SFL (Theorem~\ref{thm:SFL}) &$\frac{\sigma^2}{\mu MKR} + \frac{L\sigma^2}{\mu^2{\color{red}\boldsymbol{M}}KR^2} + \frac{L\zeta_\ast^2}{\mu^2{\color{red}\boldsymbol{M}}R^2} + \mu D^2 \exp\left(\frac{-\mu R}{L}\right)$\\\midrule
				
				\multicolumn{2}{c}{Convex}\\
				\midrule
				PFL \citep{karimireddy2020scaffold} &$\frac{\sigma D}{\sqrt{MKR}} + \frac{\left(L\sigma^2D^4\right)^{1/3}}{K^{1/3}R^{2/3}} + \frac{\left(L\zeta^2D^4\right)^{1/3}}{R^{2/3}} + \frac{LD^2}{R}
				$\\[1.5ex]
				PFL \citep{koloskova2020unified} &$\frac{\sigma_\ast D}{\sqrt{MKR}} + \frac{\left(L\sigma_\ast^2D^4\right)^{1/3}}{K^{1/3}R^{2/3}} + \frac{\left(L\zeta_\ast^2D^4\right)^{1/3}}{R^{2/3}} + \frac{LD^2}{R}
				$\\[1.5ex]
				\rowcolor{noteblue!20}
				PFL \citep{woodworth2020minibatch} &$\frac{\sigma D}{\sqrt{MKR}} + \frac{\left(L\sigma^2D^4\right)^{1/3}}{K^{1/3}R^{2/3}} + \frac{\left(L\hat\zeta^2D^4\right)^{1/3}}{R^{2/3}} + \frac{LD^2}{{\color{red}\boldsymbol{K}}R}
				$\\[1.5ex]
				\rowcolor{notegreen!20}
				SFL (Theorem~\ref{thm:SFL}) &$\frac{\sigma D}{\sqrt{MKR}} + \frac{\left(L\sigma^2D^4\right)^{1/3}}{{\color{red}\boldsymbol{M^{1/3}}}K^{1/3}R^{2/3}} + \frac{\left(L\zeta_\ast^2D^4\right)^{1/3}}{{\color{red}\boldsymbol{M^{1/3}}}R^{2/3}} + \frac{LD^2}{R}$\\\midrule

				\multicolumn{2}{c}{Non-convex}\\
				\midrule
				PFL \citep{karimireddy2020scaffold, koloskova2020unified} &$\frac{ \left(L\sigma^2A\right)^{1/2}}{\sqrt{MKR}} + \frac{\left(L^2\sigma^2A^2\right)^{1/3}}{K^{1/3}R^{2/3}} + \frac{\left(L^2\zeta^2A^2\right)^{1/3}}{R^{2/3}} + \frac{LA}{R}
				$ \tnote{\color{black}(4)}\\
				\rowcolor{notegreen!20}
				SFL (Theorem~\ref{thm:SFL}) &$\frac{ \left(L\sigma^2A\right)^{1/2}}{\sqrt{MKR}} + \frac{\left(L^2\sigma^2A^2\right)^{1/3}}{{\color{red}\boldsymbol{M^{1/3}}}K^{1/3}R^{2/3}}+\frac{\left(L^2\zeta^2A^2\right)^{1/3}}{{\color{red}\boldsymbol{M^{1/3}}}R^{2/3}} + \frac{LA}{R}$\tnote{\color{black}(5)}\\
				\bottomrule
			\end{tabular}
			\begin{tablenotes}
				\begin{footnotesize}
					\item [\tnote{\color{black}(1)}] (i) We use $\frac{3L\eta^3 K^3 \sigma^2}{K}$ (see the last inequality of the proof of their Lemma~8) while \cite{karimireddy2020scaffold} use $\frac{\eta^2 K^2 \sigma^2}{2K}$ with $\eta \leq 8LK$ (their Lemma~8), which causes the difference between their original bounds and our recovered bounds. (ii) This difference also exists in the other two cases. (iii) Their Assumption~A1 is essentially equivalent to Assumption~\ref{asm:heterogeneity:average}. For simplicity, we let $B=1$ in their Assumption~A1 for all three cases.
					\item [\tnote{\color{black}(2)}] Even the weaker Assumption~\ref{asm:stochasticity:optimum} is used in \cite{koloskova2020unified}, we do not consider it is a improvement over ours in this paper, given the discussions in Subsection~\ref{subsec:assumption}.
					\item [\tnote{\color{black}(3)}] Applying \cite{karimireddy2020scaffold}'s Lemma~1 instead of their Theorem~3 yields this bound. Notably, \cite{woodworth2020minibatch} assume the average of the local parameters for all iterations can be obtained, which is in fact impractical in FL. Similar assumptions are made in \cite{khaled2020tighter, koloskova2020unified}. In this paper, we omit this difference.
					\item [\tnote{\color{black}(4)}] We let $P=1, M=0$ in \cite{koloskova2020unified}'s Assumption~3b.
					\item [\tnote{\color{black}(5)}] We let $\beta=0$ in Assumption~\ref{asm:heterogeneity:average}.
				\end{footnotesize}
			\end{tablenotes}
		\end{threeparttable}}
	\caption{Upper bounds of PFL and SFL with absolute constants and polylogarithmic factors omitted. We highlight the upper bounds of ``PFL under Assumption~\ref{asm:heterogeneity:max}''/``SFL'' with a blue/green background. Main differences are marked in red fonts.}
	\label{tab:comparison}
\end{table}

\subsection{Comparison under Assumption~\ref{asm:heterogeneity:optimum}}\label{subsec:comparison-1}
\begin{theorem}\label{thm:PFL-1}
	Under the same conditions as those of the strongly convex case in Theorem~\ref{thm:SFL}, there exist $\tilde\eta= \eta K\asymp \min \left\{\frac{1}{L},\frac{1}{\mu R}\right\}$ and $w_r=(1-\frac{\mu\tilde\eta}{2})^{-(r+1)}$, such that for $R\gtrsim \kappa$,
	\begin{align*}
		\E\left[F(\bar\rvx^{(R)})-F(\rvx^\ast)\right] = \tilde\gO\left(\frac{\sigma^2}{\mu MKR} + \frac{L\sigma^2}{\mu^2KR^2} + \frac{L\zeta_\ast^2}{\mu^2R^2} + \mu D^2 \exp\left(\frac{-\mu R}{L}\right)\right).
	\end{align*}
\end{theorem}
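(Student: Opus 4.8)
The plan is to follow the standard one-round descent analysis for parallel FL (Local SGD/FedAvg, as in \citet{karimireddy2020scaffold, koloskova2020unified}), specialized to the weak at-optimum heterogeneity of Assumption~\ref{asm:heterogeneity:optimum}, and then telescope over rounds. Writing the PFL global update (Algorithm~\ref{algorithm2}) as $\rvx^{(r+1)} = \rvx^{(r)} - \frac{\eta}{M}\sum_{m=1}^M\sum_{k=0}^{K-1}\rvg_{m,k}^{(r)}$ with effective learning rate $\tilde\eta \coloneqq \eta K$, I would expand $\E\norm{\rvx^{(r+1)}-\rvx^\ast}^2$ and take the conditional expectation over the stochastic gradients. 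Unbiasedness turns the cross term into an inner product against $\frac{1}{M}\sum_{m,k}\nabla F_m(\rvx_{m,k}^{(r)})$, while the second-moment term decomposes (after adding and subtracting the mean gradients) into a deterministic part plus a stochastic-variance part. Because the $M$ clients inject independent noise in parallel, the variance part carries a factor $1/M$, i.e.\ it is bounded by $\frac{\tilde\eta^2\sigma^2}{MK}$; this is the \emph{only} place where the client count reduces an error, and it is what produces the leading $\frac{\sigma^2}{\mu MKR}$ term.

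The crux is the client-drift lemma, namely a bound on $\frac{1}{MK}\sum_{m=1}^M\sum_{k=0}^{K-1}\E\norm{\rvx_{m,k}^{(r)}-\rvx^{(r)}}^2$. Since each local trajectory starts at the common anchor $\rvx^{(r)}$ and evolves under its own SGD, I would unroll $\rvx_{m,k+1}^{(r)}=\rvx_{m,k}^{(r)}-\eta\rvg_{m,k}^{(r)}$, use independence of the noise to split off a $\gO(\eta^2 K\sigma^2)$ stochastic contribution, and use $L$-smoothness to control $\norm{\nabla F_m(\rvx_{m,k}^{(r)})}$. The delicate point is that Assumption~\ref{asm:heterogeneity:optimum} bounds only $\frac1M\sum_m\norm{\nabla F_m(\rvx^\ast)}^2=\zeta_\ast^2$ at the optimum, so I must split $\nabla F_m(\rvx_{m,k}^{(r)})=\big(\nabla F_m(\rvx_{m,k}^{(r)})-\nabla F_m(\rvx^\ast)\big)+\nabla F_m(\rvx^\ast)$, absorb the smooth difference into the contraction (after relating $\norm{\rvx^{(r)}-\rvx^\ast}$ to $F(\rvx^{(r)})-F(\rvx^\ast)$), and keep $\zeta_\ast^2$ as the residual. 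This yields a drift bound of order $\gO\big(\eta^2 K\sigma^2 + \eta^2K^2\zeta_\ast^2 + \eta^2K^2 L\,(F(\rvx^{(r)})-F(\rvx^\ast))\big)$. Crucially, because the drifts are bounded per client \emph{before} averaging, the heterogeneity piece $\eta^2K^2\zeta_\ast^2=\tilde\eta^2\zeta_\ast^2$ is not divided by $M$; this asymmetry — variance reduced by $M$, heterogeneity not — is exactly what distinguishes the PFL bound ($\frac{L\zeta_\ast^2}{\mu^2R^2}$) from the SFL bound ($\frac{L\zeta_\ast^2}{\mu^2MR^2}$), where the sequential reshuffled structure gives the extra $1/M$ on the heterogeneity term.

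Plugging the drift lemma into the descent step and invoking $\mu$-strong convexity to convert the inner product into both a contraction factor and a negative $-c\,\tilde\eta\,\E[F(\rvx^{(r)})-F(\rvx^\ast)]$ progress term, I would obtain, for $\tilde\eta\le\frac{1}{6L}$, a one-round recursion
\begin{align*}
\E\norm{\rvx^{(r+1)}-\rvx^\ast}^2 &\le \Big(1-\tfrac{\mu\tilde\eta}{2}\Big)\E\norm{\rvx^{(r)}-\rvx^\ast}^2 - c\,\tilde\eta\,\E\big[F(\rvx^{(r)})-F(\rvx^\ast)\big] \\
&\quad + \frac{\tilde\eta^2\sigma^2}{MK} + \frac{L\tilde\eta^3\sigma^2}{K} + L\tilde\eta^3\zeta_\ast^2 .
\end{align*}
Multiplying by the weights $w_r=(1-\tfrac{\mu\tilde\eta}{2})^{-(r+1)}$ and summing telescopes the distance terms; since $\sum_{r=0}^R w_r$ is geometric with ratio $(1-\tfrac{\mu\tilde\eta}{2})^{-1}$, it contributes a factor $\asymp\frac{1}{\mu\tilde\eta}$ that cancels the leading $\tilde\eta$ and turns $D^2$ into the decaying term $\mu D^2\exp(-\tfrac{\mu\tilde\eta R}{2})$, while dividing the accumulated error by $\tilde\eta$ leaves $\frac{\tilde\eta\sigma^2}{MK}+\frac{L\tilde\eta^2\sigma^2}{K}+L\tilde\eta^2\zeta_\ast^2$. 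This is the raw analog of the strongly convex case of Theorem~\ref{thm:SFL}, but with $M$ removed from the last two error terms.

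Finally I would tune the step size: choosing $\tilde\eta=\eta K\asymp\min\{\tfrac1L,\tfrac{1}{\mu R}\}$ and using $R\gtrsim\kappa$ so that the binding choice is $\tilde\eta\asymp\tfrac1{\mu R}$, the decaying term becomes $\mu D^2\exp(-\mu R/L)$ and the three error terms reduce to $\frac{\sigma^2}{\mu MKR}+\frac{L\sigma^2}{\mu^2KR^2}+\frac{L\zeta_\ast^2}{\mu^2R^2}$, giving the claimed $\tilde\gO$ bound. I expect the main obstacle to be the drift lemma under the at-optimum assumption: one must cleanly separate the smoothness-controlled part of each local gradient (which folds back into the $F(\rvx^{(r)})-F(\rvx^\ast)$ progress term) from the residual $\nabla F_m(\rvx^\ast)$ governed by $\zeta_\ast$, track the precise powers of $\tilde\eta$ and $K$ so the footnote-level constant $\frac{3L\tilde\eta^3\sigma^2}{K}$ comes out correctly, and confirm that $M$ enters only through the stochastic variance. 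Because this is the PFL counterpart of Theorem~\ref{thm:SFL}, the formal details closely parallel those in \citet{li2023convergence}.
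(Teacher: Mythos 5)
Your proposal is correct and follows essentially the same route as the paper: the paper's proof simply plugs the standard PFL one-round recursion (the drift lemma under Assumption~\ref{asm:heterogeneity:optimum} plus descent step, as in \cite{koloskova2020unified}) into \cite{karimireddy2020scaffold}'s Lemma~1, which is precisely the weighted telescoping with $w_r=(1-\frac{\mu\tilde\eta}{2})^{-(r+1)}$ and step-size tuning $\tilde\eta \asymp \min\{\frac{1}{L},\frac{1}{\mu R}\}$ that you carry out by hand. The recursion you derive, with error terms $\frac{\tilde\eta^2\sigma^2}{MK}+\frac{L\tilde\eta^3\sigma^2}{K}+L\tilde\eta^3\zeta_\ast^2$ and the heterogeneity term not divided by $M$, is exactly the one the paper invokes, so the two proofs differ only in that yours is self-contained while the paper's is by citation.
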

\begin{proof}
	Applying \cite{karimireddy2020scaffold}'s Lemma~1 instead of \cite{koloskova2020unified}'s Lemma~15 to the final recursion in \cite{koloskova2020unified} yields this theorem. The detailed proofs (specialized for PFL) are in \cite{li2023convergence}.
\end{proof}
To the best of our knowledge, the existing tightest upper bounds that uses Assumption~\ref{asm:heterogeneity:optimum} to catch the heterogeneity for PFL are introduced in \cite{koloskova2020unified}. Many works \citep{woodworth2020minibatch, yun2022minibatch, glasgow2022sharp} have constructed lower bounds to show these bounds are almost the tightest for the convex cases. \cite{glasgow2022sharp} has shown that this upper bound for the general convex case is not improvable.

For the following comparisons in this subsection, we mainly focus on the strongly convex case. For fairness, we slightly improve the bound of PFL in the strongly convex case in Theorem~\ref{thm:PFL-1} by combining the works of \cite{karimireddy2020scaffold,koloskova2020unified}. Unless otherwise stated, the conclusions also hold for the other two cases.
\begin{itemize}[leftmargin=1.5em]
	\item \textit{The upper bounds of SFL are better than PFL on heterogeneous data.} As shown in Table~\ref{tab:comparison} (Theorems~\ref{thm:SFL} and~\ref{thm:PFL-1}), the upper bound of SFL is better than that of PFL, with an advantage of $1/M$ on the second and third terms (marked in red). This benefits from its sequential and shuffling-based training manner of SFL.
	\item \textit{Partial client participation.} In the more challenging cross-device settings, only a small fraction of clients participate in each training round. Following the work in \cite{karimireddy2020scaffold, yang2021achieving}, we provide the upper bounds for PFL and SFL with partial client participation as follows:
	\begin{align*}
		\text{PFL:} \quad &\tilde\gO\left(\frac{\sigma^2}{\mu SKR} + {\color{black}\frac{\zeta_\ast^2}{\mu R}\frac{M-S}{S(M-1)}} + \frac{L\sigma^2}{\mu^2KR^2} + \frac{L\zeta_\ast^2}{\mu^2 R^2} + \mu D^2 \exp\left(\frac{-\mu R}{L}\right)\right), \\
		\text{SFL:} \quad&\tilde\gO\left(\frac{\sigma^2}{\mu SKR}+{\color{black}\frac{\zeta_\ast^2}{\mu R}\frac{(M-S)}{S(M-1)}} + \frac{L\sigma^2}{\mu^2{\color{red}\boldsymbol{S}}KR^2} + \frac{L\zeta_\ast^2}{\mu^2{\color{red}\boldsymbol{S}}R^2}+\mu D^2 \exp\left(\frac{-\mu R}{L}\right)\right),
	\end{align*}
	where a subset of clients $\gS$ (its size is $\abs{\gS}=S$) are selected randomly without replacement in each training round. There are additional terms (the second terms) for both PFL and SFL, which is due to partial client participation and random sampling \citep{yang2021achieving}. \textit{It can be seen that the advantage of $1/S$ (marked in red) of SFL still exists, similar to the full client participation setup.} The proofs are in \cite{li2023convergence}.

	Notably, the proofs of SFL with partial client participation are nontrivial considering $\E_{\pi}\left[ \frac{1}{SK}\sum_{m\in \gS, k} \nabla F_{\pi_m}(\rvx_{m,k}) \right] \neq \frac{1}{MK}\sum_{m, k}\E\left[\nabla F_{m}(\rvx_{m,k}) \right]$ (updates in different clients are not independent) and we cannot transform them into the full participation setup directly as done in PFL \citep{karimireddy2020scaffold, yang2021achieving}.
\end{itemize}

\textit{Key points of Subsection~\ref{subsec:comparison-1}.} The discussions above show that the upper bounds of SFL are better than PFL with both full client participation and partial client participation under Assumption~\ref{asm:heterogeneity:optimum} in the convex cases and under Assumption~\ref{asm:heterogeneity:average} in the non-convex case.

\subsection{Comparison under Assumption~\ref{asm:heterogeneity:max}}\label{subsec:comparison-2}

Since it is hard to achieve an improvement for SFL even with the stronger Assumption~\ref{asm:heterogeneity:max}, we next compare Corollary~\ref{cor:SFL} with \cite{woodworth2020minibatch}'s Theorem~3 (under Assumption~\ref{asm:heterogeneity:max}) to show that PFL can outperform SFL when the heterogeneity is very small. For comparison on bounds with different heterogeneity assumptions, we note that if Assumption~\ref{asm:heterogeneity:max} holds, then Assumption~\ref{asm:heterogeneity:optimum} holds, and $\zeta_\ast \leq \hat\zeta$.

As shown in Table~\ref{tab:comparison}, the results of PFL under Assumption~\ref{asm:heterogeneity:max} are highlighted with a blue background and the results of SFL (under Assumption~\ref{asm:heterogeneity:optimum}) are highlighted with a green background. These bounds closely resembles each other, with three error terms (the first three terms containing $\sigma$, $\zeta$) and one optimization term (the last one). To emphasize the role of heterogeneity, we let $\sigma = 0, \mu=L=D=1$ as done in \cite{woodworth2020minibatch}.

In the strongly convex case, it can be seen that the upper bound of PFL shows better on its optimization term, while worse in the error terms. Consequently, to make the upper bound of PFL smaller, one sufficient (not necessary) condition is $\frac{\hat\zeta^2}{R^2} \lesssim \exp \left(-KR\right)$, or equivalently $\hat\zeta^2 \lesssim R^2 \cdot \exp \left(-KR\right)$, which implies that $\hat\zeta$ should be very small, or the level of heterogeneity is very low. In this condition, the optimization terms become dominant for both PFL and SFL,
\[
\frac{\zeta_\ast^2}{MR^2} \lesssim \frac{\hat\zeta^2}{R^2} \lesssim \exp \left(-KR\right) \lesssim \exp \left(-R\right),
\]
and then the bound of PFL will be better than that of SFL. However, similarly, once $\zeta_\ast^2 \gtrsim MR^2\exp(-R)$, the error terms will become dominant and SFL becomes better.

In the general convex case, with the same logic as the strongly convex case, the sufficient (not necessary) condition is $\hat\zeta^2 \lesssim 1/(K^3R)$ , which still implies that $\hat\zeta$ should be very small.

\textit{Key points of Subsection~\ref{subsec:comparison-2}.} The discussions above show that the upper bounds of PFL can be better than SFL only when the heterogeneity is very small under Assumption~\ref{asm:heterogeneity:max} in the convex cases. However, it is unclear whether this superiority still exists under Assumption~\ref{asm:heterogeneity:optimum} in the convex cases, and in the non-convex case.

\section{Experiments}\label{sec:exp}
We conduct experiments on quadratic functions (Subsection~\ref{subsec:exp:QF}), logistic regression (Subsection~\ref{subsec:exp:LR}) and deep neural networks (Subsection~\ref{subsec:exp:DNN}) to validate our theoretical finding that SFL outperforms PFL in heterogeneous settings, at least when the level of heterogeneity is relatively high. The code is available at \url{https://github.com/liyipeng00/SFL}.

\subsection{Experiments on Quadratic Functions}\label{subsec:exp:QF}

According to the analyses in Subsections~\ref{subsec:comparison-1} and \ref{subsec:comparison-2}, SFL outperforms PFL in heterogeneous settings (at least when the level of heterogeneity is relatively high). Here we show that the counterintuitive result (in contrast to \citealt{gao2021evaluation}) can appear even for simple one-dimensional quadratic functions \citep{karimireddy2020scaffold}.

To further catch the heterogeneity, in addition to Assumption~\ref{asm:heterogeneity:optimum}, we also consider Hessian of objective functions \citep{karimireddy2020scaffold,glasgow2022sharp,patel2024limits}:
\begin{align*}
\max_{m} \norm{\nabla^2 F_m (\rvx) - \nabla^2 F(\rvx)} \leq \delta.
\end{align*}
Larger value of $\delta$ means higher heterogeneity on Hessian.
\begin{align*}
\norm{\nabla^2 F(\rvx) - \nabla^2 F(\rvy)} \leq H \norm{\rvx - \rvy}.
\end{align*}
Larger value of $H$ means more drastic Hessian change.

As shown in Table~\ref{tab:simulation settings}, we use ten groups of objective functions with various degrees of heterogeneity. In fact, we construct the lower bounds in Theorem~\ref{thm:lower bound} with similar functions. As suggested by our theory, we set the learning rate of SFL be half of that of PFL. The experimental results of Table~\ref{tab:simulation settings} are shown in Figure~\ref{fig:quadratic functions}.

Overall, SFL outperforms PFL in all settings except the settings $\delta=0$ and $H=0$ (Groups 1, 6), which coincides with our theoretical conclusion. We attribute the unexpected cases to the limitations of existing works under Assumptions~\ref{asm:heterogeneity:optimum},~\ref{asm:heterogeneity:average} and \ref{asm:heterogeneity:max}, which omit the function of the global aggregation and thus underestimate the capacity of PFL \citep{wang2024unreasonable}. More specifically, the second-order information (Hessian) is not fully studied in existing works \citep{patel2023still,patel2024limits}.

\begin{table}[ptbh]
	\renewcommand{\arraystretch}{1}
	\centering
	\setlength{\tabcolsep}{0.2em}
	\resizebox{\linewidth}{!}{
		\begin{tabular}{c|c|c|c|c}
			\toprule
			Group 1 &Group 2 &Group 3 &Group 4 &Group 5\\
			\midrule
			$\begin{cases}F_1=\frac{1}{2}x^2+x\\F_2 = \frac{1}{2}x^2-x\end{cases}$ 
			&$\begin{cases}F_1=\frac{3}{4}x^2 + x\\ F_2=\frac{1}{4}x^2 - x\end{cases}$ 
			&$\begin{cases}F_1=x^2 + x\\ F_2= - x\end{cases}$
			&$\begin{cases}F_1=(\frac{3}{4}\mathbbm{1}_{x<0} + \frac{1}{2} \mathbbm{1}_{x\geq 0})x^2 + x\\ F_2 = (\frac{3}{4}\mathbbm{1}_{x<0} + \frac{1}{2} \mathbbm{1}_{x\geq 0})x^2 - x\end{cases}$
			&$\begin{cases}F_1=(1\mathbbm{1}_{x<0} + \frac{1}{2} \mathbbm{1}_{x\geq 0})x^2 + x\\ F_2 = (1\mathbbm{1}_{x<0} + \frac{1}{2} \mathbbm{1}_{x\geq 0})x^2 - x\end{cases}$
			\\[3ex]
			$\zeta_\ast=1, \delta=0, H = 0$  &$\zeta_\ast=1, \delta=\frac{1}{2}, H=0$ &$\zeta_\ast=1, \delta=1, H=0$ &$\zeta_\ast=1, \delta=0, H=\frac{1}{2}$ &$\zeta_\ast=1, \delta=0, H=1$\\
			\midrule\midrule
			Group 6 &Group 7 &Group 8 &Group 9 &Group 10\\\midrule
			$\begin{cases}F_1=\frac{1}{2}x^2+10x\\F_2 = \frac{1}{2}x^2-10x\end{cases}$ 
			&$\begin{cases}F_1=\frac{3}{4}x^2 + 10x\\ F_2=\frac{1}{4}x^2 - 10x\end{cases}$ 
			&$\begin{cases}F_1=x^2 + 10x\\ F_2= - 10x\end{cases}$
			&$\begin{cases}F_1=(\frac{3}{4}\mathbbm{1}_{x<0} + \frac{1}{2} \mathbbm{1}_{x\geq 0})x^2 + 10x\\ F_2 = (\frac{3}{4}\mathbbm{1}_{x<0} + \frac{1}{2} \mathbbm{1}_{x\geq 0})x^2 - 10x\end{cases}$
			&$\begin{cases}F_1=(1\mathbbm{1}_{x<0} + \frac{1}{2} \mathbbm{1}_{x\geq 0})x^2 + 10x\\ F_2 = (1\mathbbm{1}_{x<0} + \frac{1}{2} \mathbbm{1}_{x\geq 0})x^2 - 10x\end{cases}$
			\\[3ex]
			$\zeta_\ast=10, \delta=0, H = 0$  &$\zeta_\ast=10, \delta=\frac{1}{2}, H=0$ &$\zeta_\ast=10, \delta=1, H=0$ &$\zeta_\ast=10, \delta=0, H=\frac{1}{2}$ &$\zeta_\ast=10, \delta=0, H=1$\\
			\bottomrule
	\end{tabular}}
	\caption{Settings of the experiments on quadratic functions. Each group has two local objectives ($M=2$). Strictly speaking, the functions in Groups 4, 5, 9, 10 are not quadratic functions.}
	\label{tab:simulation settings}
\end{table}

\begin{figure}[ptbh]
	\centering
	\includegraphics[width=\linewidth]{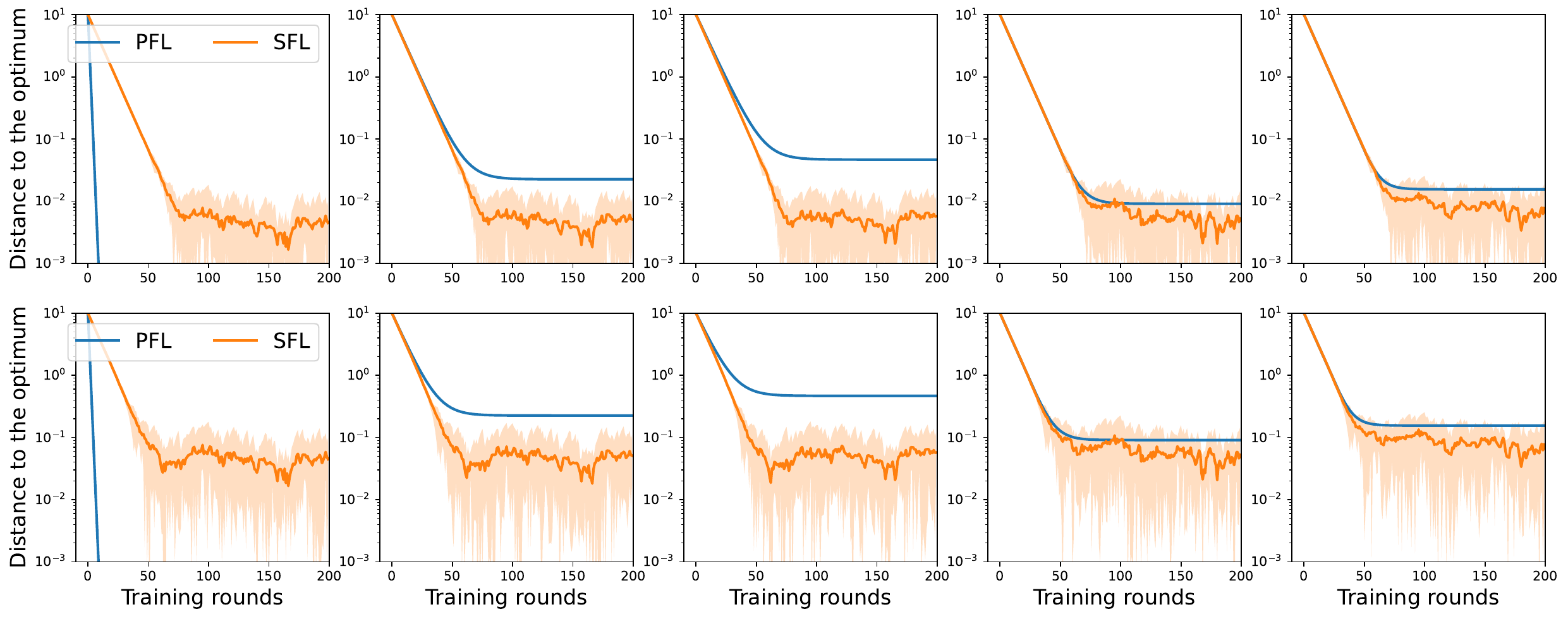}
	\caption{Results of the experiments on quadratic functions. It displays the experimental results of ten groups in Table~\ref{tab:simulation settings}. The top (bottom) row shows the first (last) five groups from left to right. We set $K=10$. The shaded areas show the min-max values across 10 random seeds.}
	\label{fig:quadratic functions}
\end{figure}

\subsection{Experiments on Logistic Regression}\label{subsec:exp:LR}

We consider the classic logistic regression for the binary classification problem \citep{khaled2020tighter,mishchenko2020random, mishchenko2022proximal, mishchenko2022proxskip, malinovsky2023federated,sadiev2023federated}. Specifically, the local objective function $F_m$ is defined as
\begin{align*}
	F_m(\rvx)&= - \left( b_m \log \left( h \left(a_m^\transpose \rvx \right)\right) + (1-b_m) \log \left( 1-h \left( a_m^\transpose \rvx\right) \right) \right) + \frac{1}{2}\omega \norm{\rvx}^2,
\end{align*}
where $\rvx\in \R^d$ is the model parameters, $a_n \in \R^d$, $b_n \in \{0,1\}$ are the data samples, $h: x\to 1/(1+e^{-x})$ is the sigmoid function and $\omega$ is the L2 regularization parameter.

We use two data sets ``a9a'' and ``w8a'' from LIBSVM library \citep{chang2011libsvm}. We partition them into $M=1000$ clients by Extended Dirichlet strategy \citep{li2023convergence}, with each client containing data samples from $C=1, 2$ labels. Larger value of $C$ means higher data heterogeneity. We set the number of local steps to $K=5$, the number of participating clients to $S=10$, and the mini-batch size to 8. The local solver is SGD with learning rate being constant, momentum being 0 and weight decay being 0. We tune the learning rate by the grid search. We run each experiment with 10 different random seeds. 

The experimental results of PFL and SFL are in Figure~\ref{fig:LR}. It can be observed that when the level of heterogeneity is relatively high ($C=1$), the performance of SFL is better than that of PFL, and when the level of heterogeneity is low ($C=2$), the performances are close. This is consistent with our theoretical finding.

\begin{figure}[ptbh]
	\centering
	\includegraphics[width=\linewidth]{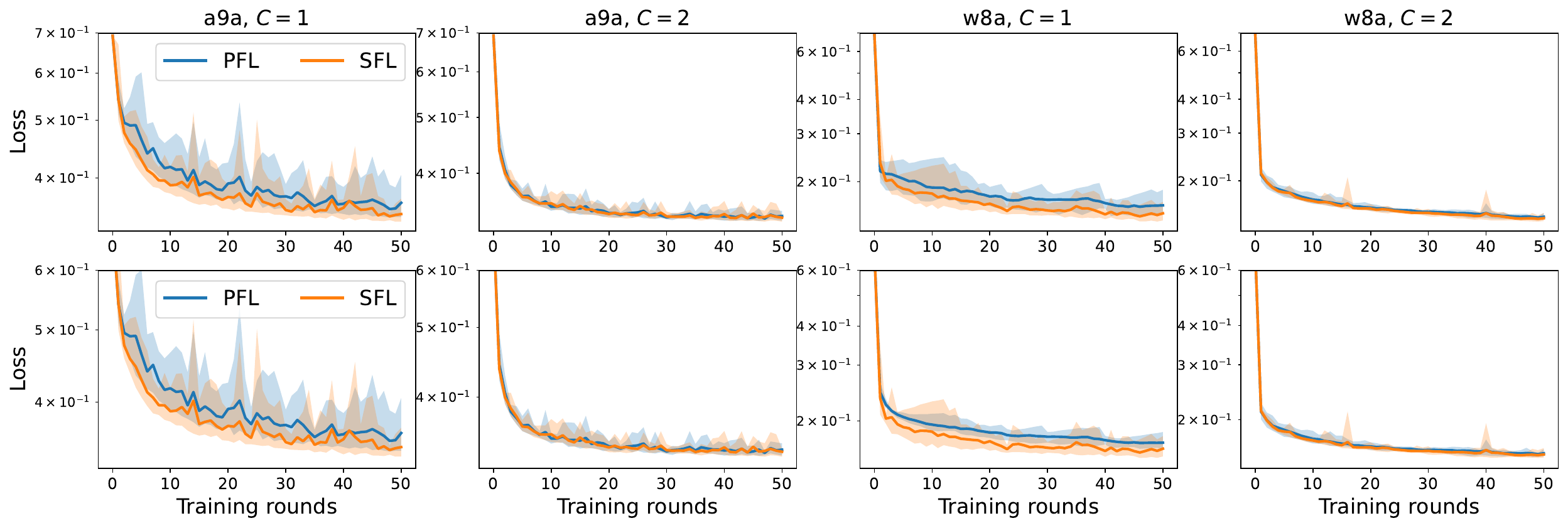}
	\caption{Training loss results of PFL and SFL. The top row shows the results when $\omega = 0.0$ and the bottom row shows the results when $\omega = 0.0001$. The shaded areas show the min-max values across 10 random seeds.}
	\label{fig:LR}
\end{figure}

\subsection{Experiments on Deep Neural Networks}\label{subsec:exp:DNN}

\textit{Setup.} We consider the common CV tasks, with data sets including Fashion-MNIST \citep{xiao2017fashion}, CIFAR-10 \citep{krizhevsky2009learning}, CINIC-10 \citep{darlow2018cinic}. Specifically, we train a CNN model from \cite{wang2022unified} on Fashion-MNIST and a VGG-9 model \citep{simonyan2014very} from \cite{lin2020ensemble} on CIFAR-10 and CINIC-10. We partition the training sets of Fashion-MNIST/CIFAR-10/CINIC-10 into 500/500/1000 clients by Extended Dirichlet strategy \citep{li2023convergence}, with each client containing data samples from $C=1, 2, 5$ labels. Larger value of $C$ means higher data heterogeneity. We spare the original test sets for computing test accuracy. We fix the number of participating clients per round to $S=10$. We fix the number of local update steps to $K=5$ and the mini-batch size to 20 (about one single pass over the local data for each client) \citep{reddi2021adaptive}. The local solver is SGD with learning rate being constant, momentem being 0 and weight decay being 0. We apply gradient clipping to both algorithms and tune the learning rate by grid search with a grid of $\{10^{-2.5}, 10^{-2.0}, 10^{-1.5}, 10^{-1.0}, 10^{-0.5}\}$.

\textit{SFL outperforms PFL on heterogeneous data.} The accuracy results on training data and test data for various tasks are collected in Table~\ref{tab:cross-device settings}. In particular, the test accuracy curves on CIFAR-10 are shown in Figure~\ref{fig:DNN}. It can be observed (i) that when the level of heterogeneity is relatively high (for example, $C=1,2$) the performance of SFL is much better than that of PFL, and (ii) that when the level of heterogeneity is low, the performances of both are close to each other. This is consistent with our theoretical finding.

\begin{figure}[ptbh]
	\centering
	\includegraphics[width=\linewidth]{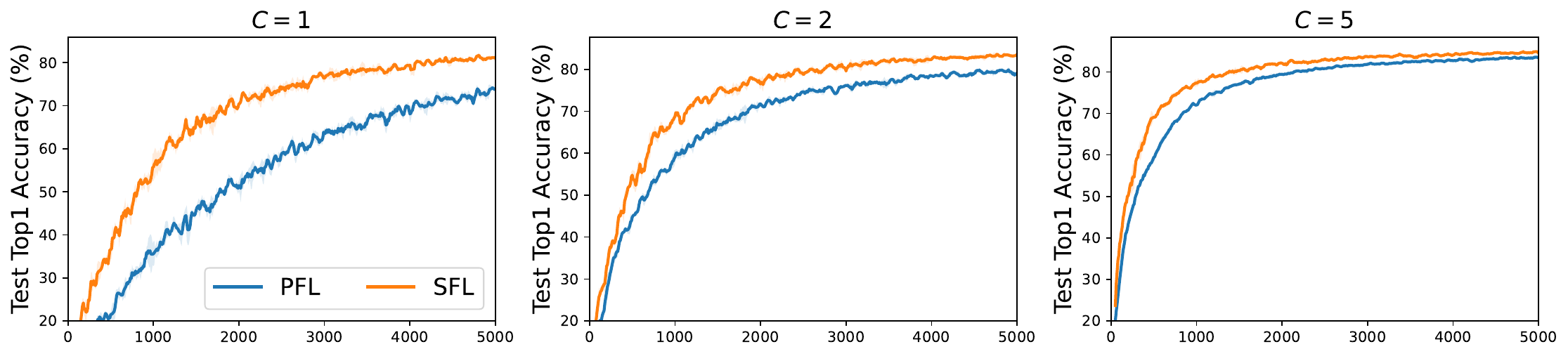}
	\caption{Test accuracy results of PFL and SFL on CIFAR-10. For visualization, we apply moving
		average over a window length of 5 data points. The shaded areas show the standard deviation across 3 random seeds.}
	\label{fig:DNN}
\end{figure}

\begin{table}[ptbh]
	\renewcommand{\arraystretch}{1}
	\centering
	\setlength{\tabcolsep}{0.5em}{
		\resizebox{\linewidth}{!}{
			\begin{tabular}{cccccccc}
				\toprule
				\multirow{2}{*}{Data set}  &\multirow{2}{*}{Method}  &\multicolumn{2}{c}{$C=1$} &\multicolumn{2}{c}{$C=2$} &\multicolumn{2}{c}{$C=5$} \\\cmidrule{3-8}
				&&Train &Test &Train &Test &Train &Test\\\midrule
				\multirow{2}{4.5em}{Fashion-MNIST}
				&PFL &86.71\tiny{$\pm$1.87} &85.77\tiny{$\pm$1.96} &89.73\tiny{$\pm$1.23} &88.55\tiny{$\pm$1.19} &92.27\tiny{$\pm$0.57} &90.70\tiny{$\pm$0.50} \\
				&SFL &88.86\tiny{$\pm$1.60} &87.60\tiny{$\pm$1.56} &91.33\tiny{$\pm$1.49} &89.66\tiny{$\pm$1.41} &92.83\tiny{$\pm$0.69} &90.92\tiny{$\pm$0.64}  \\\midrule
				
				\multirow{2}{4.5em}{CIFAR-10}
				&PFL &76.48\tiny{$\pm$2.03} &73.84\tiny{$\pm$1.90} &85.92\tiny{$\pm$1.77} &78.99\tiny{$\pm$1.43} &94.55\tiny{$\pm$0.36} &83.47\tiny{$\pm$0.48} \\
				&SFL &89.60\tiny{$\pm$2.29} &81.05\tiny{$\pm$1.78} &94.01\tiny{$\pm$0.97} &83.34\tiny{$\pm$0.68} &96.72\tiny{$\pm$0.50} &84.73\tiny{$\pm$0.44} \\\midrule
				
				\multirow{2}{4.5em}{CINIC-10}
				&PFL &53.36\tiny{$\pm$3.80} &52.27\tiny{$\pm$3.61} &65.38\tiny{$\pm$2.01} &61.96\tiny{$\pm$1.81} &74.97\tiny{$\pm$0.95} &68.45\tiny{$\pm$0.81} \\
				&SFL &65.40\tiny{$\pm$3.57} &61.52\tiny{$\pm$3.14} &73.58\tiny{$\pm$2.32} &67.31\tiny{$\pm$1.87} &79.58\tiny{$\pm$1.42} &70.82\tiny{$\pm$1.05} \\
				\bottomrule
	\end{tabular}}}
	\caption{Training and test accuracy results of PFL and SFL on Fashion-MNIST, CIFAR-10 and CINIC-10. We run PFL and SFL for 2000/5000/5000 training rounds on Fashion-MNIST/CIFAR-10/CINIC-10. Results are computed across 3 random seeds and the last 100 training rounds.}
	\label{tab:cross-device settings}
\end{table}

\section{Conclusion}
In this paper, we have derived the upper bounds of SFL for the strongly convex, general convex and non-convex objective functions on heterogeneous data. We validate that the upper bounds of SFL are tight by constructing the corresponding lower bounds of SFL in the strongly convex and general convex cases. We also make comparisons between the upper bounds of SFL and those of PFL. In the convex cases, the comparison results show a subtle difference under different heterogeneity assumptions. That is, under Assumption~\ref{asm:heterogeneity:optimum}, the upper bounds of SFL are better than those of PFL strictly, while under Assumption~\ref{asm:heterogeneity:max}, the upper bounds of SFL are still better unless the level of heterogeneity is very low. In the non-convex case under Assumption~\ref{asm:heterogeneity:average}, the upper bounds of SFL are better without exception. Experiments on quadratic functions, logistic regression and deep neural networks validate the theoretical finding that SFL outperforms PFL on heterogeneous data, at least, when the level of heterogeneity is relatively high.

Although this work has proved that SFL outperforms PFL on heterogeneous data with the standard assumptions, we believe the comparisons are still open. Are there any other conditions to overturn this conclusion? For example, new assumptions beyond the standard assumptions, new factors beyond data heterogeneity, new algorithms beyond vanilla PFL and SFL. One possible future direction is the convergence of PFL and SFL under Hessian assumptions to explain the unexpected results (Groups 1, 6) in Subsection~\ref{subsec:exp:QF}. Another promising future direction is to investigate whether the server extrapolation can be applied to SFL to achieve faster convergence as proven in PFL \citep{jhunjhunwala2023fedexp, li2024power, li2024convergence}.

\section*{Acknowledgments}
This work was supported in part by the National Natural Science Foundation of China under Grant 62371059, and in part by the Fundamental Research Funds for the Central Universities under Grant 2242022k60006.

We thank the  anonymous reviewers of NeurIPS 2023 and JMLR for their insightful suggestions.

\newpage
\appendix

\begin{center}
	\LARGE {Appendix}
\end{center}
\vskip 2ex\hrule\vskip 2ex
{
	\hypersetup{linktoc=page}
	\parskip=0.5ex
	\startcontents[sections]
	\printcontents[sections]{l}{1}{\setcounter{tocdepth}{3}}
}
\vskip 2ex\hrule\vskip 2ex

\newpage
\section{Notations}\label{app:notation}
Table~\ref{tab:notations} summarizes the notations appearing in this paper.

\begin{table}[ht]
	\renewcommand{\arraystretch}{1.1}
	\centering
	\resizebox{\linewidth}{!}{
		\begin{tabular}{cl}
			\toprule
			Symbol &Description\\ \midrule
			$R, r$ &number, index of training rounds \\
			$M, m$ &number, index of clients \\
			$K, k$ &number, index of local update steps \\
			$\gS, S$ &the set of participating clients and its size \\
			$\pi$ &$\{\pi_1, \pi_2, \ldots, \pi_M\}$ is a permutation of $[M]$\\ 
			$\eta$, $\tilde\eta$ &learning rate, effective learning rate ($\eta_{\text{SFL}} \coloneqq \eta MK$ and $\eta_{\text{PFL}} \coloneqq \eta K$)\\
			$L, \mu, \kappa$ & constants in Asm.~\ref{def:smoothness} and Asm.~\ref{def:strong convexity}; conditional number $\kappa\coloneqq L/\mu$ \\
			$\sigma_\ast, \sigma$ &constants in Asm.~\ref{asm:stochasticity:optimum} and Asm.~\ref{asm:stochasticity:uniform} for stochasticity\\
			$\zeta_\ast, \zeta (\beta), \hat\zeta$ &constants in Asm.~\ref{asm:heterogeneity:optimum}, Asm.~\ref{asm:heterogeneity:average} and Asm.~\ref{asm:heterogeneity:max} for heterogeneity\\
			$\delta, H$ &constants in Subsection~\ref{subsec:exp:QF} for Hessian\\
			$F,F_m,f_m$ &global objective, local objective and local component function\\
			$\rvx^{(r)}$ &global model parameters in the $r$-th round \\
			$\rvx_{m,k}^{(r)}$ &local model parameters of the $m$-th client after $k$ local steps in the $r$-th round \\[1ex]
			$\rvg_{\pi_m,k}^{(r)}$ &$\rvg_{\pi_m,k}^{(r)} \coloneqq \nabla f_{\pi_m}(\rvx_{m,k}^{(r)};\xi_{m,k}^{(r)})$ is the stochastic gradients of $F_{\pi_m}$ regarding $\rvx_{m,k}^{(r)}$ \\[1ex]
			$C$ &each client containing data samples from $C$ labels (Subsec.~\ref{subsec:exp:DNN})\\
			\bottomrule
	\end{tabular}}
	\caption{Summary of key notations.}
	\label{tab:notations}
\end{table}

\section{Proofs of Theorem~\ref{thm:lower bound} and Theorem~\ref{thm:lower bound2}}\label{app:lower bound}
In this section, we use the results in Appendices~\ref{app:lower bound-stochasticity} and~\ref{app:lower bound-heterogeneity} to compose the final lower bounds.

For clarity, we have summarized the lower bounds and the corresponding setups for these regimes in Tables~\ref{tab:lower bounds:stochasticity} and~\ref{tab:lower bounds:heterogeneity}. Since we use the typical objective functions in Appendices~\ref{app:lower bound-stochasticity} and~\ref{app:lower bound-heterogeneity}, we omit the step to verify that these functions satisfy the assumptions in Theorem~\ref{thm:lower bound} (see the proofs of \citealt{cha2023tighter}'s Theorem 3.1 about this step if needed).

\subsection{Proof of Theorem~\ref{thm:lower bound}}\label{app:subsec:lower bound-proof}
\begin{proof}
For the stochasticity terms, we let $\lambda = \mu$ for $\eta \leq \frac{1}{102010\lambda NR}$; we let $\lambda_1= \mu$ and $\lambda_0 = 1010\mu$ for $\frac{1}{102010\lambda_1 NR}\leq\eta\leq \frac{1}{101\lambda_0 N}$; we let $\lambda = 1010\mu$ for the other regimes in Table~\ref{tab:lower bounds:stochasticity}. There exist a 4-dimensional global objective function and an initialization point $\rvx^{(0)} = \left[ \frac{\sigma}{\mu}, \frac{1}{8160800}\frac{\sigma}{\mu N^{\frac{1}{2}}R}, 0, \frac{\sigma}{\mu} \right]^\transpose$, such that for $\kappa= \frac{L}{\mu}\geq 2020$, $M\geq 4$ and $R\geq 1$, the lower bound for stochasticity terms satisfies
\begin{align*}
	\E\left[F(\rvx^{(R)})  - F(\rvx^\ast)\right] = \Omega \left( \min \left\{\frac{\sigma^2}{\mu}, \frac{\sigma^2}{\mu NR^2}, \frac{\sigma^2}{\mu N}, \frac{\sigma^2}{\mu}\right\}\right) = \Omega \left( \frac{\sigma^2}{\mu NR^2}\right),
\end{align*}
where $\rvx^{(R)} = \left[ x_1^{(R)}, x_2^{(R)}, x_3^{(R)}, x_4^{(R)} \right]^\transpose$ and $\rvx^{\ast} = \left[ 0, 0, 0, 0\right]^\transpose$.

Notably, these dimensions are orthogonal. For one single round in SFL, with $N=MK$, the lower bound is $\Omega \left( \frac{\sigma^2}{\mu MKR^2}\right)$. It is well known that any first-order method which accesses at most $MKR$ stochastic gradients with variance $\sigma^2$ for a $\mu$-strongly convex objective will suffer error at least $\gO\left( \frac{\sigma^2}{\mu MKR} \right)$ in the worst case \citep{nemirovskij1983problem, woodworth2020local,woodworth2020minibatch}. Therefore, we get the lower bound of $\Omega \left( \frac{\sigma^2}{\mu MKR}\right)$ for stochasticity terms.

For the heterogeneity terms, we let $\lambda = \mu$ for $\eta \leq \frac{1}{102010\lambda MKR}$; we let $\lambda_1= \mu$ and $\lambda_0 = 1010\mu$ for $\frac{1}{102010\lambda_1 MKR}\leq\eta\leq \frac{1}{101\lambda_0 MK}$; we let $\lambda = 1010\mu$ for the other regimes. There exist a 5-dimensional global objective function and an initialization point $\rvx^{(0)} = \left[ \frac{\zeta}{\mu}, \frac{1}{81608000}\frac{\zeta}{\mu M^{\frac{1}{2}}R}, 0, 0, \frac{\zeta}{\mu} \right]^\transpose$, such that for $\kappa = \frac{L}{\mu}\geq 1010$, $M\geq 4$ and $R\geq 1$, the lower bound for heterogeneity terms satisfies
\begin{align*}
	\E\left[F(\rvx^{(R)})  - F(\rvx^\ast)\right] = \Omega \left( \min \left\{\frac{\zeta^2}{\mu}, \frac{\zeta^2}{\mu MR^2}, \frac{\zeta^2}{\mu M},\frac{\zeta^2}{\mu}, \frac{\zeta^2}{\mu} \right\}\right) = \Omega \left( \frac{\zeta^2}{\mu MR^2}\right).
\end{align*}

Combining these cases, we get the final lower bound of $\Omega \left( \frac{\sigma^2}{\mu MKR} + \frac{\zeta^2}{\mu MR^2}\right)$.
\end{proof}

\subsection{Proof of Theorem~\ref{thm:lower bound2}}\label{app:subsec:cor-lower bound}
\begin{proof}
	In this theorem, by using the small $\eta = \gO \left( \frac{1}{L MK} \right)$, we can extend the lower bound in Theorem~\ref{thm:lower bound} to arbitrary weighted average global parameters $\bar\rvx^{(R)} = \frac{\sum_{r=0}^{R} w_r \rvx^{(r)}}{\sum_{r=0}^{R} w_r}$, and even the general convex case.
	
	When choosing the small $\eta = \gO \left( \frac{1}{L MK} \right)$, we only need to consider the first two regimes $\eta \leq \frac{1}{102010\lambda MKR}$ and $\frac{1}{102010\lambda_1MKR} \leq \eta\leq \frac{1}{101\lambda_0 MK}$ for both stochasticity terms and heterogeneity terms. Take the heterogeneity terms as an example. In these two regimes, we can lower bound $\E\left[x^{(r)}\right]$, that is, $\E\left[x^{(r)}\right] \geq \frac{51004}{51005} D_1$ ($D_1 = \abs{x^{(0)}-x^\ast}$ is the initial distance for the first dimension) for $\eta \leq \frac{1}{102010\lambda MKR}$ and $\E\left[ x^{(r)} \right]\geq \frac{\zeta}{81608000\lambda_1 M^{\frac{1}{2}} R}$ for $\frac{1}{102010\lambda_1MKR} \leq \eta\leq \frac{1}{101\lambda_0 MK}$. With these lower bounds, the arbitrary weighted average parameters $\bar x^{(R)}$ can also be bounded
	\begin{align*}
		\bar x^{(R)} = \frac{\sum_{r=0}^{R} w_r x^{(r)}}{\sum_{r=0}^{R} w_r} \geq \frac{\sum_{r=0}^{R} w_r c}{\sum_{r=0}^{R} w_r} \geq c,
	\end{align*}
	where $c = \frac{51004}{51005} D_1$ for $\eta \leq \frac{1}{102010\lambda MKR}$ and $c=\frac{\zeta}{81608000\lambda_1 M^{\frac{1}{2}}R}$ for $\frac{1}{102010\lambda_1MKR} \leq \frac{1}{101\lambda_0 MK}$. The stochasticity terms are similar and thus be omitted here. Refer to the proofs of \cite{cha2023tighter}'s Theorem~3.3 for details if needed. In summary, we can get the same lower bound for $\bar\rvx^{(R)}$ and $\rvx^{(R)}$ in the first two regimes for stochasticity and heterogeneity.
	
	\subsubsection{Strongly Convex Case}
	For the stochasticity terms, we let $\lambda = \mu$ for $0<\eta\leq \frac{1}{102010\lambda NR}$; we let $\lambda_1 = \mu, \lambda_0 = L$ for $\frac{1}{102010\lambda_1 NR}\leq \eta\leq \frac{1}{101\lambda_0 N}$ (with $N=MK$) in Table~\ref{tab:lower bounds:stochasticity}. There exist a 2-dimensional global objective function and an initialization point $\rvx^{(0)} = \left[ \frac{\sigma}{\mu}, \frac{1}{8160800}\frac{\sigma}{\mu M^{\frac{1}{2}}K^{\frac{1}{2}}R} \right]^\transpose$, such that for $\kappa \geq 1010$, $M\geq 4$ and $R\geq \frac{1}{1010} \kappa$, the lower bound for stochasticity terms satisfies
	\begin{align*}
		\E\left[F(\bar\rvx^{(R)})  - F(\rvx^\ast)\right] = \Omega \left( \min \left\{\frac{\sigma^2}{\mu}, \frac{L\sigma^2}{\mu^2 MKR^2}\right\}\right) = \Omega \left( \frac{L\sigma^2}{\mu^2 MKR^2}\right).
	\end{align*}
	Therefore, with the same logic in the proof of Theorem~\ref{thm:lower bound}, we get the lower bound of $\Omega \left( \frac{\sigma^2}{\mu MKR} + \frac{L\sigma^2}{\mu^2 MKR^2}\right)$ for stochasticity terms.
	
	For heterogeneity terms, we let $\lambda = \mu$ for $0<\eta\leq \frac{1}{102010\lambda MKR}$; we let $\lambda_1 = \mu, \lambda_0 = L$ for $\frac{1}{102010\lambda_1 MKR}\leq \eta\leq \frac{1}{101\lambda_0 MK}$ in Table~\ref{tab:lower bounds:heterogeneity}. There exist a 2-dimensional global objective function and an initialization point $\rvx^{(0)} = \left[ \frac{\zeta}{\mu}, \frac{1}{81608000}\frac{\zeta}{\mu M^{\frac{1}{2}}R} \right]^\transpose$, such that for $\kappa \geq 1010$, $M\geq 4$ and $R\geq \frac{1}{1010} \kappa$, the lower bound for heterogeneity terms satisfies
	\begin{align*}
		\E\left[F(\bar\rvx^{(R)})  - F(\rvx^\ast)\right] = \Omega \left( \min \left\{\frac{\zeta^2}{\mu}, \frac{L\zeta^2}{\mu^2 MR^2}\right\}\right) = \Omega \left( \frac{L\zeta^2}{\mu^2 MR^2}\right).
	\end{align*}
	Therefore, we get the lower bound of $\Omega \left( \frac{L\zeta^2}{\mu^2 MR^2}\right)$ for heterogeneity terms.
	
	Combining them, we get the lower bound of $\Omega \left( \frac{\sigma^2}{\mu MKR} + \frac{L\sigma^2}{\mu^2 MKR^2} + \frac{L\zeta^2}{\mu^2 MR^2}\right)$.
	
	\subsubsection{General Convex Case}
	As done in \cite{woodworth2020minibatch}'s Theorem~2 and \cite{cha2023tighter}'s Corollary 3.5, we need to choose $\lambda$, $\lambda_0$ and $\lambda_1$ more carefully for the general convex case.

	For the stochasticity terms, we let $\lambda = \frac{L^{1/3}\sigma^{2/3}}{M^{1/3} K^{1/3} R^{2/3} D^{2/3}}$ for the first regime; we let $\lambda_1 = \frac{L^{1/3}\sigma^{2/3}}{M^{1/3} K^{1/3} R^{2/3} D^{2/3}}$, $\lambda_0 = L$ for the second regime in Table~\ref{tab:lower bounds:stochasticity}. For the heterogeneity terms, we let $\lambda = \frac{L^{1/3}\zeta^{2/3}}{M^{1/3}R^{2/3}D^{2/3}}$ for the first regime; we let $\lambda_1 = \frac{L^{1/3}\zeta^{2/3}}{M^{1/3}R^{2/3}D^{2/3}}$, $\lambda_0 = L$ for the second regime in Table~\ref{tab:lower bounds:heterogeneity}. Here $D$ is the initial distance to the optimum $\rvx^\ast = [0,0,0,0]^\transpose$.
	
	Considering that $D$ is affected by both stochasticity and heterogeneity, we consider a 4-dimensional global objective function. We let $D_1$, $D_2$, $D_3$ and $D_4$ are the initial distance in the first, second, third and forth dimensions, respectively. Then, if
	\begin{align*}
		\rvx^{(0)} = \left[ D_1, \frac{1}{8160800}\frac{\sigma^{1/3}D^{2/3}}{L^{1/3}M^{1/6}K^{1/6}R^{1/3}}, D_1, \frac{1}{81608000}\frac{\zeta^{1/3}D^{2/3}}{L^{1/3} M^{1/6}R^{1/3}} \right]^\transpose,
	\end{align*}
	then
	\begin{align*}
		\E\left[F(\bar\rvx^{(R)})-F(\rvx^\ast)\right] &= \Omega \left( \min \left\{\frac{L^{1/3}\sigma^{2/3}D_1^2}{M^{1/3}K^{1/3}R^{2/3}D^{2/3}}, \frac{L^{1/3}\sigma^{2/3}D^{4/3}}{ M^{1/3}K^{1/3} R^{2/3}}\right\} \right) \\
		&\quad +\Omega \left( \min \left\{\frac{L^{1/3}\zeta^{2/3}D_1^2}{M^{1/3}R^{2/3}D^{2/3}}, \frac{L^{1/3}\zeta^{2/3}D^{4/3}}{ M^{1/3} R^{2/3}}\right\} \right).
	\end{align*}
	Then, since
	\begin{align*}
		2D_1^2 &= D^2 - D_2^2- D_4^2 \\
		&= D^2 \left(1- \frac{1}{8160800^2} \frac{\sigma^{2/3}}{L^{2/3}M^{1/3}K^{1/3}R^{2/3}D^{2/3}} - \frac{1}{81608000^2}\frac{\zeta^{2/3}}{L^{2/3}M^{1/3}R^{2/3}D^{2/3}}\right)\\
		&\geq D^2 \left( 1- \frac{1}{8160800^2} - \frac{1}{81608000^2}\right) \geq \frac{1}{2}D^2,
	\end{align*}
	where we use the conditions $R\geq \frac{ \sigma }{LM^{1/2}K^{1/2}D}$ and $R\geq \frac{ \zeta }{LM^{1/2}D}$. Thus, we get
	\begin{align*}
		\E\left[F(\bar\rvx^{(R)})-F(\rvx^\ast)\right] &= \Omega \left( \frac{L^{1/3}\sigma^{2/3}D^{4/3}}{ M^{1/3}K^{1/3} R^{2/3}} + \frac{L^{1/3}\zeta^{2/3}D^{4/3}}{ M^{1/3} R^{2/3}}\right).
	\end{align*}
	Adding the classic bound $\Omega \left(\frac{\sigma D}{\sqrt{MKR}} \right)$ for the general convex case \citep{woodworth2020local,woodworth2020minibatch} yields the final result.
	
	To ensure that the objective functions satisfy the assumptions, we use the conditions
	\begin{align*}
		2\lambda \leq L &\implies R\geq 2^{\frac{3}{2}}\cdot  \frac{\sigma}{LM^{1/2}K^{1/2}D} ,\\
		R\geq \frac{1}{1010} \frac{\lambda_0}{\lambda_1} &\implies R\geq \frac{1}{1010^3} \cdot \frac{L^2 MKD^2}{\sigma^2}.
	\end{align*}
	for stochasticity terms and the conditions
	\begin{align*}
		2\lambda \leq L &\implies R\geq 2^{\frac{3}{2}}\cdot  \frac{\zeta}{LM^{1/2}D} ,\\
		R\geq \frac{1}{1010} \frac{\lambda_0}{\lambda_1} &\implies R\geq \frac{1}{1010^3} \cdot \frac{L^2 MD^2}{\zeta^2}.
	\end{align*}
	for heterogeneity terms. Note that the choices of $\lambda$, $\lambda_0$ and $\lambda_1$ are different for stochasticity terms and heterogeneity terms. For simplicity, we can use a stricter condition 
	\begin{align*}
		R\geq 4\max\left\{\frac{\sigma}{LM^{1/2}K^{1/2}D},  \frac{L^2 MKD^2}{\sigma^2},\frac{\zeta}{LM^{1/2}D},  \frac{L^2 MD^2}{\zeta^2} \right\}.
	\end{align*}
\end{proof}

\section{Proof of Stochasticity Terms in Theorem~\ref{thm:lower bound}}\label{app:lower bound-stochasticity}
\begin{proof}
	We assume $F_1= F_2= \cdots=F_M=F$, and then the task is to construct the lower bound of vanilla SGD, where one objective function is sampled with replacement for updates in each step: $x_{n+1} = x_n - \eta \nabla f_{\pi_n} (x_n)$. The results are summarized in Table~\ref{tab:lower bounds:stochasticity}. Notably, $\lambda_0, \lambda_1$ and $\lambda$ in different regimes can be different.
	\begin{table}[h]
		\centering
		\resizebox{\linewidth}{!}{
			\begin{threeparttable}[b]
				\renewcommand{\arraystretch}{1.2}
				\begin{tabular}{lllll}
					\toprule
					Bound &Regime &Objective functions &Initialization &$\nabla^2 F_m \in$\\
					\midrule
					$\Omega\left(\frac{\sigma^2}{\lambda}\right)$ &$\eta\leq \frac{1}{102010\lambda NR}$ &$f_{\pi_n} (x) = \lambda x^2$ &$x^{(0)}=\frac{\sigma}{\lambda}$ &$[2\lambda, 2\lambda]$\\\cmidrule{1-5}
					
					$\Omega\left(\frac{\lambda_0\sigma^2}{\lambda_1^2 NR^2}\right)$ &\makecell[l]{$\frac{1}{102010\lambda_1 NR}\leq\eta$\\[2ex] and $\eta \leq\frac{1}{101\lambda_0 N}$} &\makecell[l]{$f_{\pi_n} (x) $\\ $= (\lambda_0\mathbbm{1}_{x<0} + \lambda \mathbbm{1}_{x\geq 0})\frac{x^2}{2} + \sigma\tau_n x$} &\makecell[l]{$x^{(0)}=$\\$\frac{1}{8160800}\frac{\sigma}{\lambda_1 N^{\frac{1}{2}}R}$} &$\left[\frac{\lambda_0}{1010}, \lambda_0\right]$\\\cmidrule{1-5}
				
					$\Omega\left(\frac{\sigma^2}{\lambda N}\right)$ &$\frac{1}{101\lambda N}\leq\eta\leq\frac{1}{\lambda}$ &$f_{\pi_n} (x) = \frac{\lambda}{2}x^2 + \sigma\tau_n x$ &$x^{(0)}=0$ &$[\lambda, \lambda]$\\\cmidrule{1-5}
					
					$\Omega\left(\frac{\sigma^2}{\lambda}\right)$ &$\eta\geq\frac{1}{\lambda}$ &$f_{\pi_n} (x) = \lambda x^2$ &$x^{(0)}=\frac{\sigma}{\lambda}$ &$[2\lambda, 2\lambda]$\\
					\bottomrule
				\end{tabular}
		\end{threeparttable}}
		\caption{Lower bounds of SFL for stochasticity terms. It requires that $\lambda = \frac{\lambda_0}{1010}$ and $R\geq \frac{1}{1010} \frac{\lambda_0}{\lambda_1}$ in the regime $\frac{1}{102010\lambda_1 NR} \leq \eta \leq \frac{1}{101\lambda_0N}$. We set $N=MK$ in SFL.}
		\label{tab:lower bounds:stochasticity}
	\end{table}

\subsection{Lower Bounds for $0< \eta \leq \frac{1}{102010\lambda NR}$ and $\eta \geq \frac{1}{\lambda}$}
In this regime, we consider the following objective functions
\begin{align*}
	&f_{\pi_n} (x) = \lambda x^2, \\
	&f(x) = \E\left[f_{\pi_n} (x)\right] = \lambda x^2.
\end{align*}
We can soon build the relationship between $x^{(R)}$ and $x^{(0)}$: $x^{(R)} = (1-2\lambda\eta)^{NR} x^{(0)}$.

\subsubsection{Lower Bound for $0<\eta \leq \frac{1}{102010\lambda NR}$}
Since $\eta \leq \frac{1}{\lambda NR}$ and $(1-\frac{1}{51005}\cdot \frac{1}{x})^x$ is monotonically increasing when $x \geq 1$, we have
\begin{align*}
	&x^{(R)} = (1-2\lambda\eta)^{NR} x^{(0)} \geq (1-\frac{1}{51005}\cdot \frac{1}{NR})^{NR} x^{(0)} \geq \frac{51004}{51005}x^{(0)} \tag{$\because N\geq 1$}\\
	&F(x^{(R)}) = \lambda \left(x^{(R)}\right)^2 \geq \frac{51004^2}{51005^2}\lambda\left(x^{(0)}\right)^2.
\end{align*}
If $x^{(0)} = \frac{\sigma}{\lambda}$, we can get
\begin{align*}
	\E\left[F(x^{(R)})-F^\ast\right] = \E\left[F(x^{(R)})\right] \geq \frac{51004^2}{51005^2}\lambda\left(x^{(0)}\right)^2 = \Omega\left(\frac{\sigma^2}{\lambda}\right).
\end{align*}

\subsubsection{Lower Bound for $\eta \geq \frac{1}{\lambda}$}
Since $\eta \geq \frac{1}{\lambda}$ implies that $(1-2\lambda\eta)^2 \geq 1$, we have
\begin{align*}
	F(x^{(R)}) = \lambda \left(x^{(R)}\right)^2 = \lambda(1-2\lambda\eta)^{2NR} \left(x^{(0)}\right)^2 \geq \lambda \left(x^{(0)}\right)^2.
\end{align*}
If $x^{(0)} = \frac{\sigma}{\lambda}$, we can get
\begin{align*}
	\E\left[F(x^{(R)})-F^\ast\right] = \E\left[F(x^{(R)})\right] \geq \lambda\left(x^{(0)}\right)^2 = \frac{\sigma^2}{\lambda} = \Omega\left(\frac{\sigma^2}{\lambda}\right).
\end{align*}

\subsection{Lower Bound for $\frac{1}{102010\lambda_1 NR}\leq \eta \leq \frac{1}{101\lambda_0 N}$}
In this regime, we consider the following functions
\begin{align*}
	&f_{\pi_n} (x) = (\lambda_0\mathbbm{1}_{x<0} + \lambda \mathbbm{1}_{x\geq 0})\frac{x^2}{2} + \tau_{n}\sigma x, \\
	&f(x) = \E \left[f_{\pi_n}(x)\right] = (\lambda_0\mathbbm{1}_{x<0} + \lambda \mathbbm{1}_{x\geq 0})\frac{x^2}{2} \quad \text{($\lambda_0/\lambda\geq 1010$)},
\end{align*}
where $\tau_n$ is a random variable with equal probabilities of being either ``$+1$'' or ``$-1$''. Next, we focus on a single round $r$, including $N$ local steps in total, and thus we drop the superscripts $r$ for a while, for example, replacing $x_{n}^{(r)}$ with $x_{n}$. Unless otherwise stated, the expectation is conditioned on $x_{0}$ when we focus on one single round.

The relationship between the current parameter $x_n$ and the initial parameter $x_0$ satisfies
\begin{align}
	x_n = x_0 - \eta \sum_{i=0}^{n-1} (\lambda_0\mathbbm{1}_{x_i<0} + \lambda \mathbbm{1}_{x_i\geq 0}) x_{i} - \eta\sigma\gE_n.\label{eq:proof:lower bound-stochasticity:curr-init}
\end{align}

\subsubsection{Lower Bound of $\E\left[ x^{(r+1)}\mid x^{(r)} \geq 0 \right]$}

We first bound $\E\left[ x^{(r+1)}\mid x^{(r)} \geq 0 \right]$. Since $(\lambda_0\mathbbm{1}_{x<0} + \lambda \mathbbm{1}_{x\geq 0}) x \leq \lambda_0 x$ and $(\lambda_0\mathbbm{1}_{x<0} + \lambda \mathbbm{1}_{x\geq 0}) x \leq \lambda x$, we have
\begin{align}
	\E\left[ (\lambda_0\mathbbm{1}_{x_n<0} + \lambda \mathbbm{1}_{x_n\geq 0}) x_{n} \right] &= \Pr(\gE_n > 0) \E\left[ (\lambda_0\mathbbm{1}_{x_n<0} + \lambda \mathbbm{1}_{x_n\geq 0}) x_{n} \mid \gE_n >0\right] \nonumber\\
	&\quad+ \Pr(\gE_n \leq 0) \E\left[ (\lambda_0\mathbbm{1}_{x_n<0} + \lambda \mathbbm{1}_{x_n\geq 0}) x_{n} \mid \gE_n \leq0\right]\nonumber\\
	&\leq \lambda_0 \Pr(\gE_n > 0) \E\left[ x_{n} \mid \gE_n >0\right] + \lambda \Pr(\gE_n \leq 0) \E\left[ x_{n} \mid \gE_n \leq0\right].\label{eq:proof:lower bound-stochasticity:curr lower bound}
\end{align}
According to Eq.~\eqref{eq:proof:lower bound-stochasticity:curr-init}, we can get
\begin{align*}
	\E\left[ x_{n} \mid \gE_n >0\right] &= \E\left[ x_0 - \eta \sum_{i=0}^{n-1} (\lambda_0\mathbbm{1}_{x_i<0} + \lambda \mathbbm{1}_{x_i\geq 0}) x_{i} - \eta\sigma\gE_n \mid \gE_n >0\right]\\
	&\leq x_0 + \lambda_0 \eta \sum_{i=0}^{n-1} \E\left[\abs{x_i - x_0}\mid \gE_n >0\right] - \eta \sigma \E\left[\gE_n \mid \gE_n >0\right].
\end{align*}
Then using $\E\left[\abs{x_i - x_0}\right] \geq \E\left[\abs{x_i - x_0}\mid \gE_n >0\right] \Pr(\gE_n >0)$ with $\Pr(\gE_n >0) \geq \frac{1}{4}$ for the second term, and $\E\left[\abs{\gE_n}\right] = 2\E\left[\gE_n>0\mid \gE_n>0\right]\Pr(\gE_n>0)$ with $\Pr(\gE_n >0) \leq \frac{1}{2}$, we can get
\begin{align*}
	\E\left[ x_{n} \mid \gE_n >0\right] &\leq x_0 + 4\lambda_0 \eta  \sum_{i=0}^{n-1} \E\left[\abs{x_i - x_0}\right] - \frac{1}{2}\eta \sigma \E\left[\abs{\gE_n}\right]\\
	&\leq (1+\frac{1}{2525})x_0 - \frac{6}{100}\eta \sigma \sqrt{n},
\end{align*}
where Lemmas~\ref{lem:lower bound-stochasticity:diff curr-init},~\ref{lem:lower bound-stochasticity:partial sum} and $\lambda_0 \eta N \leq \frac{1}{101}$ are applied in the last inequality. We bound $\E\left[ x_{n} \mid \gE_n \leq 0\right]$ with a looser bound as
\begin{align*}
	\E\left[ x_{n} \mid \gE_n \leq 0\right] &\leq x_0 + \E\left[ \abs{x_{n}-x_{0}} \mid \gE_n \leq 0\right]\\
	&\leq x_0 + 2\E\left[ \abs{x_{n}-x_{0}} \right] \tag{$\because \Pr(\gE_n\leq 0) \geq \frac{1}{2}$}\\
	&\leq \frac{51}{50}\lambda_0 x_0 + \frac{101}{50} \eta \sigma \sqrt{n}.
\end{align*}
Then, back to Ineq.~\eqref{eq:proof:lower bound-stochasticity:curr lower bound}, we have
\begin{align*}
	\E\left[ (\lambda_0\mathbbm{1}_{x_n<0} + \lambda \mathbbm{1}_{x_n\geq 0}) x_{n} \right] 
	&\leq \frac{1}{2}\lambda_0 \left( (1+\frac{1}{2525})x_0 - \frac{6}{100}\eta \sigma \sqrt{n} \right) + \frac{3}{4} \lambda \left( \frac{51}{50}x_0 + \frac{101}{50} \eta \sigma \sqrt{n} \right)\\
	&\leq \frac{253}{505}x_0 - \frac{1}{40} \eta \sigma\sqrt{n}.
\end{align*}
Now, we have
\begin{align*}
	\E[x_N] &= x_0 - \eta \sum_{n=0}^{N-1} (\lambda_0\mathbbm{1}_{x_n<0} + \lambda \mathbbm{1}_{x_n\geq 0}) x_{n} \tag{$\because \E[\gE_N] = 0$}\\
	&\geq x_0 - \eta \sum_{n=0}^{N-1} \left(\frac{253}{505}\lambda_0 x_0 - \frac{1}{40} \eta \sigma\sqrt{n} \right)\\
	&\geq \left( 1-\frac{2}{3}\lambda_0 \eta N \right)x_0 + \frac{1}{60} \eta N^{\frac{3}{2}}\sigma.
\end{align*}
That is,
\begin{align*}
	\E\left[x^{(r+1)}\mid x^{(r)}\geq 0\right] \geq \left(1-\frac{2}{3}\lambda_0 N\eta\right)\E\left[x^{(r)}\mid x^{(r)}\geq 0\right]+ \frac{1}{60} \eta N^{\frac{3}{2}}\sigma.
\end{align*}

\subsubsection{Lower Bound of $\E\left[ x^{(r+1)}\mid x^{(r)} < 0 \right]$}
With similar analyses in Subsection~\ref{subsubsection:intermediate func-1} and \cite{cha2023tighter}' Lemma B.4, we get
\begin{align*}
	\E\left[x^{(r+1)}\mid x^{(r)}< 0\right] \geq \left(1-\frac{2}{3}\lambda_0 N\eta\right)\E\left[x^{(r)}\mid x^{(r)}< 0\right].
\end{align*}

\subsubsection{Relationship Between $\Pr(x^{(r)}) \geq 0$ and $\Pr(x^{(r)}) < 0$.}
With similar analyses in Subsection~\ref{subsubsection:intermediate func-2} and \cite{cha2023tighter}' Lemma B.4, we get
\begin{align*}
	\Pr\left(x^{(r)}\geq 0\right) \geq \frac{1}{2},
\end{align*}
when $x^{(0)} \geq 0$.

\subsubsection{Lower Bound for $\frac{1}{102010\lambda_1 NR}\leq \eta \leq \frac{1}{101\lambda_0 N}$}
With the above bounds for $\E\left[ x^{(r+1)}\mid x^{(r)} \geq 0 \right]$ and $\E\left[ x^{(r+1)}\mid x^{(r)} < 0 \right]$, we have
\begin{align*}
	\E\left[x^{(r+1)}\right] &= \E\left[x^{(r+1)}\mid x^{(r)}\geq 0\right]\Pr\left(x^{(r)}\geq 0\right) + \E\left[x^{(r+1)}\mid x^{(r)}< 0\right]\Pr\left(x^{(r)}< 0\right)\\
	&\geq \left(1-\frac{2}{3}\lambda_0 N\eta\right)x^{(r)} + \frac{1}{120}\lambda_0 N^{\frac{3}{2}}\eta^2\sigma  .\tag{$\because \Pr\left(x^{(r)}\geq 0\right)\geq \frac{1}{2}$}
\end{align*}
If $x^{(r)} \geq \frac{1}{8160800}\cdot \frac{\sigma}{\lambda_1 N^{\frac{1}{2}}R}$, then using $\eta \geq \frac{1}{102010\lambda_1 NR}$, we have
\begin{align*}
	x^{(r+1)} &\geq \left(1-\frac{2}{3}\lambda_0 N\eta\right)x^{(r)} + \frac{1}{120}\lambda_0 N^{\frac{3}{2}}\eta^2\sigma \\
	&\geq \left(1-\frac{2}{3}\lambda_0 N\eta\right)\cdot \frac{1}{8160800}\cdot \frac{\sigma}{\lambda_1 N^{\frac{1}{2}}R} + \frac{1}{120}\lambda_0 N^{\frac{3}{2}}\eta\sigma \cdot \frac{1}{102010\lambda_1 NR} \\
	&\geq \frac{1}{8160800}\cdot \frac{\sigma}{\lambda_1 N^{\frac{1}{2}}R}.
\end{align*}
Therefore, if we set $x^{(0)}\geq \frac{1}{8160800}\cdot \frac{\sigma}{\lambda_1 N^{\frac{1}{2}}R}$, then the final parameters will also maintain $x^{(R)}\geq \frac{1}{8160800}\cdot \frac{\sigma}{\lambda_1 N^{\frac{1}{2}}R}$. Then, noting that $\frac{\lambda_0}{\lambda} \geq 1010$, we can choose $\frac{\lambda_0}{\lambda} = 1010$. Then,
\begin{align*}
	\E\left[ F(x^{(R)})-F(x^\ast) \right] = \E\left[ F(x^{(R)})\right]
	\geq \frac{1}{2}\cdot \frac{\lambda_0}{1010} \E\left[ \left(x^{(R)}\right)^2 \right]
	&=\Omega\left( \frac{\lambda_0\sigma^2}{\lambda_1^2 NR^2} \right).
\end{align*}

\subsection{Lower Bound for $\frac{1}{101\lambda N} \leq \eta < \frac{1}{\lambda}$}
In this regime, we consider the following functions
\begin{align*}
	&f_{\pi_n} (x) = \frac{1}{2}\lambda x^2 + \tau_{n}\sigma x, \\
	&f(x) = \E \left[f_{\pi_n}(x)\right] = \frac{1}{2}\lambda x^2,
\end{align*}
where $\tau_n$ is a random variable with equal probabilities of being either ``$+1$'' or ``$-1$''.

According to the result in \cite{cha2023tighter}'s Appendix~B.3, we have
\begin{align*}
	&x_N = (1-\lambda \eta)^N x_0 - \eta \sigma \sum_{n=1}^{N} (1-\lambda\eta)^{N-n} \tau_{n} \\
	&\E\left[ x_N^2 \right] = (1-\lambda \eta)^{2N} x_0^2 + \eta^2 \sigma^2 \E\left( \sum_{n=1}^{N} (1-\lambda \eta)^{N-n} \tau_{n} \right)^2.
\end{align*}
Similar to \cite{safran2020good}'s Lemma~1, we have
\begin{align*}
	\E\left( \sum_{n=1}^{N} (1-\lambda \eta)^{N-n} \tau_{n} \right)^2 &= \sum_{n=1}^{N} (1-\lambda\eta)^{2(N-n)} \E\left[\tau_{n}^2\right] + \sum_{n=1}^{N} \sum_{i\neq n}^{N} (1-\lambda\eta)^{2N-n-i}\E\left[\tau_{n}\tau_{i}\right]\\
	&=\sum_{n=1}^{N} (1-\lambda\eta)^{2(N-n)} .\tag{$\because \E[\tau_n \tau_i] = \E[\tau_n]\E[\tau_i]=0$}
\end{align*}
Thus, we get
\begin{align*}
	&\eta^2\sigma^2\sum_{n=1}^{N} (1-\lambda\eta)^{2(N-n)}= \frac{1-(1-\lambda\eta)^{2N}}{1-(1-\lambda\eta)^2}\eta^2\sigma^2
	\geq \frac{1-\exp(-2\lambda\eta N)}{\lambda\eta (2-\lambda\eta)}\eta^2\sigma^2 \geq 0.009\frac{\eta\sigma^2}{\lambda},
\end{align*}
where we use $\frac{1}{101\lambda N} \leq \eta \leq \frac{1}{\lambda}$. Then,
\begin{align*}
	&\E\left[\left(x^{(R)}\right)^2\right] \geq (1-\lambda\eta)^{2NR} \left(x^{(0)}\right)^2 + \sum_{r=0}^{R-1}(1-\lambda\eta)^{2Nr}0.009\frac{\eta\sigma^2}{\lambda}  \geq 0.009\frac{\sigma^2}{\lambda^2N},\\
	&\E\left[F(x^{(R)})-F^\ast\right] = \E\left[F(x^{(R)})\right] = \frac{\lambda}{2} \E\left[\left(x^{(R)}\right)^2\right] = \Omega\left(\frac{\sigma^2}{\lambda N}\right).
\end{align*}

Now we complete the proofs for all regimes for stochasticity terms in Theorem~\ref{thm:lower bound}. The setups and final results are summarized in Table~\ref{tab:lower bounds:stochasticity}. 
\end{proof}

\subsection{Helpful Lemmas for Stochasticity Terms}

\begin{lemma}
	\label{lem:lower bound-stochasticity:partial sum}
	Let $\tau_1, \tau_2,\dots,\tau_n$ be independent random variables, each with equal probabilities of being either ``$+1$'' or ``$-1$''. Let $\gE_n \coloneqq \sum_{i=1}^n \tau_i$ (with $\gE_0 = 0$). Then for any $n\geq 0$,
	\begin{align*}
		\frac{\sqrt{n}}{5}\leq \E\abs{\gE_n} \leq \sqrt{n}.
	\end{align*}
\end{lemma}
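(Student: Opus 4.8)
The plan is to prove the two inequalities separately, relying only on the low-order moments of the simple symmetric random walk $\gE_n$. The case $n=0$ is immediate since $\gE_0=0$, so I assume $n\geq 1$. The single computation underpinning everything is that, because the $\tau_i$ are independent with zero mean and $\tau_i^2=1$, all cross terms vanish and $\E[\gE_n^2]=\sum_{i=1}^n\E[\tau_i^2]=n$. For the upper bound I would then simply apply Jensen's inequality to the concave map $t\mapsto\sqrt t$ (equivalently Cauchy--Schwarz), giving $\E\abs{\gE_n}=\E\sqrt{\gE_n^2}\leq\sqrt{\E[\gE_n^2]}=\sqrt n$, which is exactly the right-hand inequality with no slack to spare.

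For the lower bound the idea is to interpolate the second moment between the first and fourth moments. By Hölder's inequality with exponents $3/2$ and $3$, writing $\gE_n^2=\abs{\gE_n}^{2/3}\cdot\abs{\gE_n}^{4/3}$, one obtains $\E[\gE_n^2]\leq(\E\abs{\gE_n})^{2/3}(\E[\gE_n^4])^{1/3}$, which rearranges to $\E\abs{\gE_n}\geq(\E[\gE_n^2])^{3/2}/(\E[\gE_n^4])^{1/2}$. It then remains to compute the fourth moment: expanding $(\sum_i\tau_i)^4$ and keeping only the index patterns that survive independence (all four indices equal, or two coincident pairs) gives $\E[\gE_n^4]=n+3n(n-1)=3n^2-2n\leq 3n^2$. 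Substituting $\E[\gE_n^2]=n$ yields $\E\abs{\gE_n}\geq n^{3/2}/\sqrt{3n^2}=\sqrt n/\sqrt3\geq\sqrt n/5$, which is in fact stronger than what is claimed.

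The main obstacle — really the only nonroutine step — is recognizing the $L^1$--$L^2$--$L^4$ interpolation that converts the desired lower bound on $\E\abs{\gE_n}$ into an upper bound on the easily computable fourth moment; everything else reduces to elementary moment bookkeeping. An alternative would be to invoke the Khintchine inequality, whose optimal $L^1$ constant $1/\sqrt2$ also comfortably clears the $1/5$ threshold, but the Hölder argument above is self-contained, and because the target constant $5$ is so crude there is no need to track sharp constants anywhere.
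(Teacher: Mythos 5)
Your proof is correct, and the lower-bound half takes a genuinely different route from the paper. The paper proves the upper bound exactly as you do (Jensen/Cauchy--Schwarz on $\E[\gE_n^2]=n$), but for the lower bound it derives the exact recursion $\E\abs{\gE_n}=\E\abs{\gE_{n-1}}+\Pr(\gE_{n-1}=0)$ (the terms $\Pr(\gE_{n-1}\tau_n>0)$ and $\Pr(\gE_{n-1}\tau_n<0)$ cancel by symmetry and independence), then lower-bounds the central binomial coefficient $\binom{n}{n/2}2^{-n}\geq\frac{1}{2\sqrt{n}}$ via Gosper-type estimates and sums the resulting series, treating even and odd $n$ separately. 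Your argument instead interpolates moments: Hölder with exponents $3/2$ and $3$ gives $n=\E[\gE_n^2]\leq(\E\abs{\gE_n})^{2/3}(\E[\gE_n^4])^{1/3}$, and the fourth-moment computation $\E[\gE_n^4]=n+3n(n-1)\leq 3n^2$ (which is correct; the surviving index patterns are exactly the ones you list) yields $\E\abs{\gE_n}\geq\sqrt{n/3}$, strictly stronger than the claimed $\sqrt{n}/5$. Your route is shorter, fully self-contained (no binomial asymptotics), and gives a better constant; what the paper's combinatorial recursion buys is that it is the same technique reused for the companion Lemma~\ref{lem:lower bound:partial sum absolute value bounds}, where the signs $\tau_i$ come from a random permutation of $\frac{M}{2}$ $+1$'s and $\frac{M}{2}$ $-1$'s and are therefore \emph{not} independent — there the cross terms in $\E[\gE_n^4]$ no longer vanish and your clean moment bookkeeping would not carry over verbatim, while the symmetry-plus-recursion argument still does. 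For the present lemma, with genuinely i.i.d.\ signs, your proof is a valid and arguably preferable replacement.
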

\begin{proof}
	For the upper bound, similar to \cite{rajput2020closing}'s Lemma 12 and \cite{cha2023tighter}'s Lemma B.5, we have
	\begin{align*}
		\E\abs{\gE_{n}}
		&= \E\left[\abs{\sum_{i=1}^{n}\tau_i}\right]
		&\leq \sqrt{\E\left[\left(\sum_{i=1}^{n}\tau_i\right)^2\right]}
		&\leq \sqrt{\sum_{i=1}^{n}\E[\left(\tau_i\right)^2]+2\sum_{i<j\leq m-1}\E[\tau_i\tau_j]}
		&= \sqrt{n},
	\end{align*}
	where $\E[\tau_i\tau_j] = 0$ for $i\neq j$, due to independence.
	
	For the lower bound, similar to \cite{rajput2020closing}'s Lemma 12, we have
	\begin{align*}
		\E\abs{\gE_{n}} = \E\abs{\gE_{n-1}} + \Pr(\gE_{n-1} \tau_n = 0) + \Pr(\gE_{n-1} \tau_n > 0) - \Pr(\gE_{n-1} \tau_n < 0).
	\end{align*}
	It can be seen that the last two terms can be canceled out,
	\begin{align*}
		\Pr(\gE_{n-1} \tau_n > 0) &= \Pr(\gE_{n-1}> 0, \tau_n > 0) + \Pr(\gE_{n-1}< 0, \tau_n < 0) \\
		&=\Pr(\gE_{n-1}> 0) \cdot \Pr(\tau_n > 0) + \Pr(\gE_{n-1}< 0) \Pr(\tau_n < 0)\\
		&= \frac{1}{2}\Pr(\gE_{n-1}> 0) + \frac{1}{2}\Pr(\gE_{n-1}< 0).
	\end{align*}
	Then, since $\Pr(\gE_{n-1} \tau_n = 0) = \Pr(\gE_{n-1} = 0) = \mathbbm{1}_{n-1 \ \text{is even}} \frac{{n-1 \choose n-1/2}}{2^{n-1}}$, we have
	\begin{align*}
		\E\abs{\gE_{n}} = \E\abs{\gE_{n-1}} + \mathbbm{1}_{n-1 \ \text{is even}} \frac{{n-1 \choose n-1/2}}{2^{n-1}}.
	\end{align*}
	According to \cite{cha2023tighter}'s Lemma B.8, ${n \choose n/2}$ can be estimated as $2^{n} \cdot \frac{\sqrt{2n + \alpha_n}}{\sqrt{\pi} (n + \alpha_{n/2})}$ with $0.333 \leq \alpha_{n/2}, \alpha_n \leq 0.354$ \citep{mortici2011gospers}, so we can get ${n \choose n/2}/2^{n} \geq \frac{1}{2\sqrt{n}}$.
	
	When $n\geq 2$ is an even integer,
	\begin{align*}
		\E\abs{\gE_{n}} &= \E\abs{\gE_{n-1}} + \mathbbm{1}_{n-1 \ \text{is even}} \frac{{n-1 \choose n-1/2}}{2^{n-1}}\\
		&= \E\abs{\gE_{n-2}} + \mathbbm{1}_{n-2\ \text{is even}} \frac{{n-1 \choose n-1/2}}{2^{n-1}}\\
		&\ \vdots\\
		&=\E\abs{\gE_2} + \sum_{p = 1}^{n/2-1}\frac{1}{2\sqrt{n-2p}} \tag{$\because \E\abs{\gE_2}=1$}\\
		&\geq \frac{n}{2} \cdot \frac{1}{2\sqrt{n}} \geq \frac{\sqrt{n}}{5}.
	\end{align*}
	When $n\geq 1$ is an odd integer,
	\begin{align*}
		\E\abs{\gE_{n+1}} = \E\abs{\gE_{n}}+\mathbbm{1}_{n \ \text{is even}} \frac{{n \choose n/2}}{2^{n}} \implies \E\abs{\gE_{n}} = \E\abs{\gE_{n+1}} \geq \frac{\sqrt{n}}{5}.
	\end{align*}
	Now we complete the proof of the bounds of $\E\abs{\gE_{n}}$ for any $n\geq 0$.
\end{proof}

\begin{lemma}
	\label{lem:lower bound-stochasticity:partial sum probability}
	Let $\tau_1, \tau_2,\dots,\tau_n$ be independent random variables, each with equal probabilities of being either ``$+1$'' or ``$-1$''. Let $\gE_n \coloneqq \sum_{i=1}^n \tau_i$ (with $\gE_0 = 0$). Then, for any $n\geq 0$, the probability distribution of $\gE_n$ is symmetric with respect $0$, and for any $n\geq 1$,
	\begin{align*}
		\frac{1}{4}\leq \Pr(\gE_{n}>0) = \Pr(\gE_{n}<0) \leq \frac{1}{2}.
	\end{align*}
\end{lemma}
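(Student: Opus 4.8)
The plan is to derive everything from a single symmetry observation combined with an elementary bound on the probability that $\gE_n$ lands exactly at $0$. First I would establish the symmetry. Since each $\tau_i$ is distributed identically to $-\tau_i$ and the $\tau_i$ are independent, the random vector $(\tau_1,\dots,\tau_n)$ has the same law as $(-\tau_1,\dots,-\tau_n)$; summing coordinates shows that $\gE_n$ and $-\gE_n$ are equal in distribution. Consequently $\Pr(\gE_n = c) = \Pr(\gE_n = -c)$ for every integer $c$, which is the claimed symmetry about $0$, and in particular $\Pr(\gE_n>0) = \Pr(\gE_n<0)$.

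Next, using that the three events $\{\gE_n>0\}$, $\{\gE_n<0\}$, $\{\gE_n=0\}$ partition the sample space, I would write $2\Pr(\gE_n>0) = 1 - \Pr(\gE_n=0)$. The upper bound $\Pr(\gE_n>0)\le \tfrac12$ is then immediate from $\Pr(\gE_n=0)\ge 0$. For the lower bound $\Pr(\gE_n>0)\ge\tfrac14$ it therefore suffices to show $\Pr(\gE_n=0)\le\tfrac12$ for $n\ge1$.

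For the $=0$ probability I would split on parity. When $n$ is odd, $\gE_n$ is a sum of an odd number of $\pm1$'s and is hence odd, so $\Pr(\gE_n=0)=0$ and in fact $\Pr(\gE_n>0)=\tfrac12$. When $n$ is even, $\Pr(\gE_n=0)=\binom{n}{n/2}/2^n$, which I would control through the central-binomial ratio $a_m\coloneqq\binom{2m}{m}/4^m$ with $m=n/2$. A one-line ratio computation gives $a_{m+1}/a_m = \tfrac{2m+1}{2m+2}<1$, so $a_m$ is strictly decreasing in $m$ with $a_1 = \tfrac12$; hence $\Pr(\gE_n=0)\le\tfrac12$ for all even $n\ge2$, with equality only at $n=2$. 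This closes the lower bound.

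The delicate point, and the only place the inequality is tight, is the even case: at $n=2$ one has $\Pr(\gE_2>0)=\tfrac14$ exactly, so any slack in the estimate of $\Pr(\gE_n=0)$ would break the bound there. Everything thus rests on the monotonicity of the central binomial ratio, which the short ratio computation settles cleanly and self-containedly. Alternatively, one could invoke the estimate $\binom{n}{n/2}/2^n \le \tfrac{1}{\sqrt n}$ already used for Lemma~\ref{lem:lower bound-stochasticity:partial sum} (via \cite{cha2023tighter}'s Lemma~B.8), which yields $\Pr(\gE_n=0)\le\tfrac12$ for even $n\ge4$ and leaves only the base case $n=2$ to verify by hand.
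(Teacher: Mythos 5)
Your proof is correct and follows essentially the same route as the paper's: both reduce the lower bound to showing $\Pr(\gE_n=0)\le\tfrac12$ via the identity $2\Pr(\gE_n>0)=1-\Pr(\gE_n=0)$, and both establish that bound through the same monotone-ratio computation for the central binomial probability (your $a_{m+1}/a_m=\tfrac{2m+1}{2m+2}$ is exactly the paper's $g(n+2)/g(n)=\tfrac{n+1}{n+2}$, anchored at the value $\tfrac12$ for $n=2$). The only cosmetic difference is that you derive symmetry from the sign-flip invariance of the vector $(\tau_1,\dots,\tau_n)$ rather than from the binomial-coefficient identity $\binom{n}{(n+p)/2}=\binom{n}{(n-p)/2}$, which is a slightly cleaner way to reach the same conclusion.
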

\begin{proof}
	When $n=0$, $\Pr(\gE_{n}>0) = \Pr(\gE_{n}<0)=0$. The distribution is symmetric trivially. When $n>1$, $\Pr(\gE_{n}=p) = \begin{cases} \frac{{n \choose (n+p)/2}}{2^{n}} =  \frac{{n \choose (n-p)/2}}{2^{n}}, & n+p \mod 2 = 0\\0,&n+p \mod 2 \neq 0\end{cases}$ where the integer $p$ satisfies $-n \leq p \leq n$. Thus, the distribution is symmetric with respect to $0$.
	
	When any integer $n>1$, we have $\Pr(\gE_{n}=0) = \mathbbm{1}_{n \ \text{is even}} \frac{{n \choose n/2}}{2^{n}}$. Letting $g(n) = \frac{{n \choose n/2}}{2^{n}}$, it can be validated that $\frac{g(n+2)}{g(n)} = \frac{n+1}{n+2}<1$, so $\Pr(\gE_{n}=0) = g(n) \leq g(n-2) \leq \cdots \leq g(2) = \frac{1}{2}$ when $n$ is even. Therefore, $\frac{1}{2} \geq \Pr(\gE_{n}>0) = \Pr(\gE_{n}<0) = \frac{1}{2}(1- \Pr(\gE_{n}=0)) \geq \frac{1}{4}$.
\end{proof}

\begin{lemma}
	\label{lem:lower bound-stochasticity:diff curr-init}
	Suppose that $x_{0}\geq 0$, $\frac{\lambda_0}{\lambda} \geq 1010$ and $\eta \leq \frac{1}{101\lambda N}$. Then for $0\leq n \leq N$,
	\begin{align*}
		\E\abs{x_{n}- x_{0}} \leq \frac{1}{100} x_{0} + \frac{101}{100}\eta\sigma\sqrt{n}
	\end{align*}
\end{lemma}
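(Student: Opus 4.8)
The plan is to start from the unrolled single-round recursion~\eqref{eq:proof:lower bound-stochasticity:curr-init}, $x_n - x_0 = -\eta\sum_{i=0}^{n-1}(\lambda_0\mathbbm{1}_{x_i<0}+\lambda\mathbbm{1}_{x_i\geq 0})x_i - \eta\sigma\gE_n$, and to control the deviation $a_n\coloneqq\E\abs{x_n-x_0}$ (conditioned on the fixed $x_0\geq 0$, so $a_0=0$) by a discrete induction on $n$. First I would apply the triangle inequality term by term and take expectations, which gives
\[
a_n \leq \eta\sum_{i=0}^{n-1}\E\left[(\lambda_0\mathbbm{1}_{x_i<0}+\lambda\mathbbm{1}_{x_i\geq 0})\abs{x_i}\right] + \eta\sigma\,\E\abs{\gE_n}.
\]
For the stochastic term I would invoke the upper bound $\E\abs{\gE_n}\leq\sqrt{n}$ from Lemma~\ref{lem:lower bound-stochasticity:partial sum}. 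For the drift term, since $\lambda\leq\lambda_0$ the state-dependent curvature is at most $\lambda_0$, so each summand is bounded by $\lambda_0\E\abs{x_i}$, and then $\E\abs{x_i}\leq a_i + x_0$ by the triangle inequality together with $x_0\geq 0$.

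These estimates collapse the bound to the linear recursion $a_n \leq \eta\lambda_0\sum_{i=0}^{n-1}a_i + \eta\lambda_0 n\, x_0 + \eta\sigma\sqrt{n}$. I would then establish the claim $a_n\leq\frac{1}{100}x_0+\frac{101}{100}\eta\sigma\sqrt{n}$ by strong induction, substituting the inductive hypothesis for $a_0,\dots,a_{n-1}$ into the sum. The verification rests on two elementary facts supplied by the step-size condition, namely $\eta\lambda_0 n\leq\eta\lambda_0 N\leq\frac{1}{101}$ and $\eta\lambda_0\sum_{i=0}^{n-1}\sqrt{i}\leq\eta\lambda_0\,n^{3/2}\leq\frac{\sqrt{n}}{101}$ (the latter using $\eta\lambda_0\leq\frac{1}{101N}\leq\frac{1}{101 n}$ since $n\leq N$). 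With these, the coefficient of $x_0$ becomes $\frac{1}{101}\cdot\frac{1}{100}+\frac{1}{101}=\frac{1}{10100}+\frac{100}{10100}=\frac{1}{100}$, and the coefficient of $\eta\sigma\sqrt{n}$ becomes $\frac{101}{100}\cdot\frac{1}{101}+1=\frac{1}{100}+1=\frac{101}{100}$, so the induction closes with equality at exactly the target bound; the base case $a_0=0\leq\frac{1}{100}x_0$ is immediate from $x_0\geq 0$.

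The main obstacle I anticipate is handling the state-dependent coefficient $\lambda_0\mathbbm{1}_{x_i<0}+\lambda\mathbbm{1}_{x_i\geq 0}$ cleanly: one cannot assume $x_i$ remains nonnegative along the trajectory, so the worst-case curvature $\lambda_0$ must be used uniformly, and it is precisely this largest curvature $\lambda_0$ (not $\lambda$) that has to be small against $\eta N$ for the constants to balance. Landing the numerical constants exactly on $\frac{1}{100}$ and $\frac{101}{100}$ is the delicate part and is the reason the sharp smallness $\eta\lambda_0 N\leq\frac{1}{101}$ is required rather than a cruder step-size bound; once that is in hand, the remaining steps are routine.
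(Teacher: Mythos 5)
Your proof is correct and is essentially the argument the paper relies on: the paper's own ``proof'' is a pointer to \citet{cha2023tighter}'s Lemma~B.7 (and to the fully written heterogeneity analogue, Lemma~\ref{lem:lower bound:diff current-initial parameter upper bound}), both of which do exactly what you do --- unroll Eq.~\eqref{eq:proof:lower bound-stochasticity:curr-init}, bound the state-dependent curvature by the largest one, control the noise sum via $\E\abs{\gE_n}\le\sqrt{n}$ from Lemma~\ref{lem:lower bound-stochasticity:partial sum}, and close a strong induction; your constant bookkeeping ($\tfrac{1}{10100}+\tfrac{100}{10100}=\tfrac{1}{100}$ and $\tfrac{1}{100}+1=\tfrac{101}{100}$) checks out. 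One caveat deserves attention: the inequality $\eta\lambda_0 N\le\tfrac{1}{101}$ that your induction needs is \emph{not} literally implied by the lemma's printed hypothesis $\eta\le\tfrac{1}{101\lambda N}$ combined with $\lambda_0\ge 1010\lambda$. You correctly identified that the step-size bound must involve the largest curvature $\lambda_0$; indeed, if only $\eta\lambda N\le\tfrac{1}{101}$ were assumed and $\eta\lambda_0$ were allowed to be large, the conclusion would fail (a negative iterate is kicked back across zero with an overshoot of order $\eta\lambda_0\abs{x_i}$, which can dwarf $\eta\sigma\sqrt{n}$). Your reading agrees with the regime in which the lemma is actually invoked, $\eta\le\tfrac{1}{101\lambda_0 N}$ in Table~\ref{tab:lower bounds:stochasticity}, and with the paper's own internal convention --- the proof of Lemma~\ref{lem:lower bound:diff current-initial parameter upper bound} silently swaps the roles of $\lambda_0$ and $\lambda$, writing ``$\because\ \lambda_0\le\lambda$''. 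So this is a notational slip in the lemma statement rather than a gap in your argument, but in a self-contained write-up you should assume $\eta\lambda_0 N\le\tfrac{1}{101}$ outright rather than present it as a consequence of the printed hypothesis.
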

\begin{proof}
	With result of Lemma~\ref{lem:lower bound-stochasticity:partial sum}, the proof is identical to \cite{cha2023tighter}'s Lemma B.7, except the numerical factors, to be consistent with the constants used in Appendix~\ref{app:lower bound-heterogeneity}.
\end{proof}

\section{Proof of Heterogeneity Terms in Theorem~\ref{thm:lower bound}}\label{app:lower bound-heterogeneity}
\begin{proof}
	We assume $F_m= f_m(\cdot;\xi_m^1)= \cdots=f_m(\cdot;\xi_m^{\abs{\gD_m}})$ for each $m$, and then extend the works of SGD-RR to SFL, from performing one update step to performing multiple update steps on each local objective function. The results are summarized in Table~\ref{tab:lower bounds:heterogeneity}. Notably, $\lambda_0, \lambda_1$ and $\lambda$ in different regimes can be different.
	
	\begin{table}[h]
	\centering
	\resizebox{\linewidth}{!}{
		\begin{threeparttable}[b]
			\renewcommand{\arraystretch}{1.2}
			\begin{tabular}{lllll}
				\toprule
				Bound &Regime &Objective functions &Initialization &$\nabla^2 F_m\in $\\
				\midrule
				$\Omega\left(\frac{\zeta^2}{\lambda}\right)$ &$\eta\leq \frac{1}{102010\lambda MKR}$ &$F_m(x) = \lambda x^2$ &$x^{(0)}=\frac{\zeta}{\lambda}$ &$[2\lambda, 2\lambda]$\\\cmidrule{1-5}
				
				$\Omega\left(\frac{\lambda_2\zeta^2}{\lambda_1^2 MR^2}\right)$ &\makecell[l]{$\frac{1}{102010\lambda_1 MKR}\leq\eta$\\[2ex] and $\eta \leq\frac{1}{101\lambda_0 MK}$} &\makecell[l]{$F_m (x) = $\\$\begin{cases}
						(\lambda_0\mathbbm{1}_{x<0} + \lambda \mathbbm{1}_{x\geq 0})\frac{x^2}{2} + \zeta x, &\text{if} \ m\leq \frac{M}{2}\\
						(\lambda_0\mathbbm{1}_{x<0} + \lambda \mathbbm{1}_{x\geq 0})\frac{x^2}{2} - \zeta x, &\text{otherwise}
					\end{cases}$} &\makecell[l]{$x^{(0)}$ \\$=\frac{1}{81608000}\frac{\zeta}{\lambda_1 M^{\frac{1}{2}}R}$} &$\left[\frac{\lambda_0}{1010}, \lambda_0\right]$\\\cmidrule{1-5}
				
				$\Omega\left(\frac{\zeta^2}{\lambda M}\right)$ &$\frac{1}{101\lambda MK}\leq\eta\leq\frac{1}{\lambda K}$ &$F_m (x) = \begin{cases}
					\frac{\lambda}{2}x^2 + \zeta x, &\text{if} \ m\leq \frac{M}{2}\\
					\frac{\lambda}{2}x^2 - \zeta x, &\text{otherwise}
				\end{cases}$ &$x^{(0)}=0$ &$[\lambda, \lambda]$\\\cmidrule{1-5}
				
				$\Omega\left(\frac{\zeta^2}{\lambda}\right)$ &$\frac{1}{\lambda K}\leq \eta\leq\frac{1}{\lambda}$ &$F_m (x) = \begin{cases}
					\frac{\lambda}{2}x^2 + \zeta x, &\text{if} \ m\leq \frac{M}{2}\\
					\frac{\lambda}{2}x^2 - \zeta x, &\text{otherwise}
				\end{cases}$ &$x^{(0)}=0$ &$[\lambda, \lambda]$\\\cmidrule{1-5}
				
				$\Omega\left(\frac{\zeta^2}{\lambda}\right)$ &$\eta\geq\frac{1}{\lambda}$ &$F_m(x) = \lambda x^2$ &$x^{(0)}=\frac{\zeta}{\lambda}$ &$[2\lambda, 2\lambda]$\\
				\bottomrule
			\end{tabular}
	\end{threeparttable}}
	\caption{Lower bounds of SFL for heterogeneity terms. It requires that $\lambda_0 = 1010\lambda$ and $R\geq \frac{1}{1010} \frac{\lambda_0}{\lambda_1}$ in the regime $\frac{1}{102010\lambda_1 MKR}\leq\eta\leq\frac{1}{101\lambda_0 MK}$.}
	\label{tab:lower bounds:heterogeneity}
\end{table}

	\subsection{Lower Bounds for $0< \eta\leq \frac{1}{102010\lambda MKR}$ and $\eta \geq \frac{1}{\lambda}$}
	In this regime, we consider the following objective functions
	\begin{align*}
		&F_m (x) = \lambda x^2, \\
		&F(x) = \frac{1}{M} \sum_{m=1}^M F_m (x) = \lambda x^2.
	\end{align*}
	We can soon build the relationship between $x^{(R)}$ and $x^{(0)}$: $x^{(R)} = (1-2\lambda\eta)^{MKR} x^{(0)}$.
	
	\subsubsection{Lower Bound for $0<\eta \leq \frac{1}{102010\lambda MKR}$}
	Since $\eta\leq \frac{1}{102010\lambda MKR}$ and $\left(1-\frac{1}{51005}\cdot \frac{1}{x}\right)^x$ is monotonically increasing when $x \geq 1$, we have
	\begin{align*}
		&x^{(R)} = \left(1-2\lambda\eta\right)^{MKR} x^{(0)} \geq \left(1-\frac{1}{51005MKR}\right)^{MKR}x^{(0)} \geq \frac{51004}{51005} x^{(0)},  \tag{$\because M\geq 1$}\\
		&F(x^{(R)}) = \lambda \left(x^{(R)}\right)^2 \geq \frac{51004^2}{51005^2}\lambda\left(x^{(0)}\right)^2.
	\end{align*}
	If $x^{(0)} = \frac{\zeta}{\lambda}$, we can get
	\begin{align*}
		\E\left[F(x^{(R)})-F^\ast\right] = \E\left[F(x^{(R)})\right] \geq \frac{51004^2}{51005^2}\lambda\left(x^{(0)}\right)^2 = \frac{51004^2}{51005^2} \frac{\zeta^2}{\lambda} = \Omega\left(\frac{\zeta^2}{\lambda}\right).
	\end{align*}
	
	\subsubsection{Lower Bound for $\eta \geq \frac{1}{\lambda}$}
	Since $\eta \geq \frac{1}{\lambda}$ implies that $(1-2\lambda\eta)^2 \geq 1$, we have
	\begin{align*}
		F(x^{(R)}) = \lambda \left(x^{(R)}\right)^2 = \lambda(1-2\lambda\eta)^{2MKR} \left(x^{(0)}\right)^2 \geq \lambda \left(x^{(0)}\right)^2.
	\end{align*}
	If $x^{(0)} = \frac{\zeta}{\lambda}$, we can get
	\begin{align*}
		\E\left[F(x^{(R)})-F^\ast\right] = \E\left[F(x^{(R)})\right] \geq \lambda\left(x^{(0)}\right)^2 = \frac{\zeta^2}{\lambda} = \Omega\left(\frac{\zeta^2}{\lambda}\right).
	\end{align*}
	
	\subsection{Lower Bound for $\frac{1}{102010\lambda_1 MKR}\leq \eta \leq \frac{1}{101\lambda_0 MK}$}
	In this regime, we consider the following functions
	\begin{align*}
		&F_m (x) = \begin{cases}
			(\lambda_0\mathbbm{1}_{x<0} + \lambda \mathbbm{1}_{x\geq 0})\frac{x^2}{2} + \zeta x, &\text{if} \ m\leq \frac{M}{2}\\
			(\lambda_0\mathbbm{1}_{x<0} + \lambda \mathbbm{1}_{x\geq 0})\frac{x^2}{2} - \zeta x, &\text{otherwise}
		\end{cases},\\
		&F(x) = \frac{1}{M} \sum_{m=1}^M F_{m} (x) = (\lambda_0\mathbbm{1}_{x<0} + \lambda \mathbbm{1}_{x\geq 0})\frac{x^2}{2} \quad \text{($\lambda_0/\lambda\geq 1010$)}.
	\end{align*}
	Next, we focus on a single round $r$, and thus we drop the superscripts $r$ for a while. Unless otherwise stated, the expectation is conditioned on $x_{1,0}$ when considering one single round. The proofs in this regime have a similar structure as \cite{cha2023tighter}'s Theorem~3.1.
	
	In each training round, we sample a random permutation $\pi=(\pi_1, \pi_2, \ldots, \pi_M)$ ($\pi_m$ is its $m$-th element) from $\{1,2,\ldots, M\}$ as the clients' training order. Then, we can denote $F_{\pi_m}$ for $m \in \{1,2,\ldots,M\}$ as $F_{\pi_m}(x) = (\lambda_0\mathbbm{1}_{x<0} + \lambda \mathbbm{1}_{x\geq 0})\frac{x^2}{2} + \zeta \tau_m x$, where $\tau=(\tau_1, \tau_2, \ldots,\tau_M)$ ($\tau_m$ is its $m$-th element) is a random permutation of $\frac{M}{2}$ $+1$'s and $\frac{M}{2}$ $-1$'s. For example, assuming that $\pi=(4,2,3,1)$ (with $M=4$), we can get the corresponding coefficients $\tau=(-1, +1, -1, +1)$.
	
	Then, the relationship between $x_{m,k}$ and $x_{1,0}$ satisfies
	\begin{align*}
		x_{m,k}
		&= x_{1,0} - \eta \sum_{i=1}^{m}\sum_{j=0}^{K-1} \left((\lambda_0\mathbbm{1}_{x_{i,j}<0} + \lambda \mathbbm{1}_{x_{i,j}\geq 0}) x_{i,j}\right) - K\zeta\sum_{i=1}^{m-1}\tau_i\\
		&\quad\quad\quad-\eta \sum_{j=0}^{k-1}\left((\lambda_0\mathbbm{1}_{x_{i,j}<0} + \lambda \mathbbm{1}_{x_{i,j}\geq 0}) x_{m,j}  \right) - k\zeta\tau_{m}.
	\end{align*}
	For convenience, we write it as
	\begin{align}
		x_{m,k} = x_{1,0} -\eta \sum_{i=0}^{\gB_{m,k}-1} \left((\lambda_0\mathbbm{1}_{x_{b_1(i),b_2(i)} <0} + \lambda \mathbbm{1}_{x_{b_1(i),b_2(i)} \geq 0}) x_{b_1(i),b_2(i)}  \right) - K\eta\zeta\gA_{m,k},\label{eq:proof:lower bound-heterogeneity:curr-init}
	\end{align}
	where $b_1(i) \coloneqq \lfloor \frac{i}{K} \rfloor+1$, $b_2(i) \coloneqq i-K \lfloor \frac{i}{K} \rfloor$, $\gA_{m,k} \coloneqq \gE_{m-1}+ a_k \tau_m = \tau_1+\tau_2\cdots + a_k \tau_{m}$ ($\gE_m \coloneqq \tau_1+\tau_2\cdots + \tau_{m}$ and $a_k \coloneqq k/K$) and $\gB_{m,k} = \sum_{i=0}^{(m-1)K+k-1}1 = (m-1)K+k$.
	In particular, when $m=M+1$ and $k=0$, it follows that
	\begin{align}
		x_{M+1,0} 
		&= x_{1,0} -\eta \sum_{m=1}^{M} \sum_{k=0}^{K-1} \left((\lambda_0\mathbbm{1}_{x_{m,k}<0} + \lambda \mathbbm{1}_{x_{m,k}\geq 0}) x_{m,k}\right).\label{eq:proof:lower bound-heterogeneity:final-init}
	\end{align}
	Notably, the following notations $x_{m+1,0}$ and $x_{m,K}$ are equivalent. In this paper, we use $x_{m+1,0}$ instead of $x_{m,K}$.
	
	\subsubsection{Lower Bound of $\E\left[x^{(r+1)}\mid x^{(r)}\geq 0\right]$.}
	We first give a stricter upper bound of $\E\left[(\lambda_0\mathbbm{1}_{x_{m,k}<0} + \lambda \mathbbm{1}_{x_{m,k}\geq 0}) x_{m,k}\right]$ for $1\leq m \leq \frac{M}{2}+1$, and then give a general upper bound for $1\leq m \leq M$. These two upper bounds are then plugged into Eq.~\eqref{eq:proof:lower bound-heterogeneity:final-init}, yielding the targeted lower bound of $\E\left[x_{M+1,0} \mid x_{1,0}\geq 0 \right]$.
	
	Since $(\lambda_0\mathbbm{1}_{x<0} + \lambda \mathbbm{1}_{x\geq 0}) x \leq \lambda_0 x$ and $(\lambda_0\mathbbm{1}_{x<0} + \lambda \mathbbm{1}_{x\geq 0}) x \leq \lambda x$, we have
	\begin{align}
		&\E\left[(\lambda_0\mathbbm{1}_{x_{m,k}<0} + \lambda \mathbbm{1}_{x_{m,k}\geq 0}) x_{m,k}\right]\nonumber \\
		&= \E\left[(\lambda_0\mathbbm{1}_{x_{m,k}<0} + \lambda \mathbbm{1}_{x_{m,k}\geq 0}) x_{m,k}\mid \gA_{m,k}>0\right] \Pr(\gA_{m,k}>0) \nonumber\\
		&\quad+ \E\left[(\lambda_0\mathbbm{1}_{x_{m,k}<0} + \lambda \mathbbm{1}_{x_{m,k}\geq 0}) x_{m,k}\mid \gA_{m,k}\leq 0\right] \Pr(\gA_{m,k}\leq 0)\nonumber\\
		&\leq \lambda_0\E\left[x_{m,k}\mid \gA_{m,k}>0\right] \cdot \Pr(\gA_{m,k}>0) + \lambda\E\left[x_{m,k}\mid \gA_{m,k}\leq 0\right] \cdot \Pr(\gA_{m,k}\leq 0).\label{eq:proof:lower bound-heterogeneity:local gradient stricter bound-1}
	\end{align}
	Intuitively, this is a trick, using $\lambda_0$ for the former term and $\lambda$ for the latter term, so we can make the former term (which has a stricter bound) dominant by controlling the value of ${\lambda_0}/{\lambda}$. Next, we bound the terms on the right hand side in Ineq.~\eqref{eq:proof:lower bound-heterogeneity:local gradient stricter bound-1}.
	For the first term in Ineq.~\eqref{eq:proof:lower bound-heterogeneity:local gradient stricter bound-1}, according to Ineq.~\eqref{eq:proof:lower bound-heterogeneity:curr-init}, we have
	\begin{align*}
		&\E\left[x_{m,k}\mid \gA_{m,k}>0\right]\\
		&= \E\left[x_{1,0} -\eta \sum_{i=0}^{\gB_{m,k}-1} \left((\lambda_0\mathbbm{1}_{x_{b_1(i),b_2(i)} <0} + \lambda \mathbbm{1}_{x_{b_1(i),b_2(i)} \geq 0}) x_{b_1(i),b_2(i)}  \right) - K\eta\zeta\gA_{m,k}\mid \gA_{m,k}>0\right] \\
		&= x_{1,0}+\eta \sum_{i=0}^{\gB_{m,k}-1} \E\left[(\lambda_0\mathbbm{1}_{x_{b_1(i),b_2(i)} <0} + \lambda \mathbbm{1}_{x_{b_1(i),b_2(i)}\geq 0}) \abs{x_{b_1(i),b_2(i)}-x_{1,0}} \mid \gA_{m,k}>0\right] \\
		&\quad-\eta \sum_{i=0}^{\gB_{m,k}-1} \E\left[(\lambda_0\mathbbm{1}_{x_{b_1(i),b_2(i)} <0} + \lambda \mathbbm{1}_{x_{b_1(i),b_2(i)}\geq 0}) x_{1,0} \mid \gA_{m,k}>0\right] - K\eta\zeta\E\left[\gA_{m,k}\mid \gA_{m,k}>0\right]\\
		&\leq x_{1,0}+\lambda_0\eta \sum_{i=0}^{\gB_{m,k}-1} \E\left[\abs{x_{b_1(i),b_2(i)}-x_{1,0}} \mid \gA_{m,k}>0\right] - K\eta\zeta\E\left[\gA_{m,k}\mid \gA_{m,k}>0\right],
	\end{align*}
	where we use $\lambda\leq (\lambda_0\mathbbm{1}_{x<0} + \lambda \mathbbm{1}_{x\geq 0}) \leq \lambda_0$ and $x_{1,0} \geq 0$ in the last inequality. Then, we have
	\begin{align*}
		\Term{1}{\eqref{eq:proof:lower bound-heterogeneity:local gradient stricter bound-1}} 
		&\leq \lambda_0 x_{1,0}\Pr(\gA_{m,k}>0) 
		+\lambda_0^2\eta \sum_{i=0}^{\gB_{m,k}-1} \E\left[\abs{x_{b_1(i),b_2(i)}-x_{1,0}} \mid \gA_{m,k}>0\right] \Pr(\gA_{m,k}>0) \\
		&\quad - \lambda_0 K\eta\zeta\E\left[\gA_{m,k}\mid \gA_{m,k}>0\right]\Pr(\gA_{m,k}>0).
	\end{align*}
	Since $\E\left[\abs{x_{i,j}-x_{1,0}} \mid \gA_{m,k}>0\right]\Pr(\gA_{m,k}>0)$ and $\E\left[\abs{x_{i,j}-x_{1,0}} \mid \gA_{m,k}\leq0\right]\Pr(\gA_{m,k}\leq0)$ are non-negative, the term $\E\left[\abs{x_{i,j}-x_{1,0}} \mid \gA_{m,k}>0\right]\Pr(\gA_{m,k}>0)$ appearing in the second term in the preceding inequality can be bounded by $\E\left[\abs{x_{i,j}-x_{1,0}}\right]$ for any integers $i,j$. Since the probability distribution of $\gA_{m,k}$ is symmetric with respect to $0$ (Lemma~\ref{lem:lower bound:partial sum probability bounds}), we get
	\begin{align*}
		\E\left[\abs{\gA_{m,k}}\right]
		&=\E\left[\gA_{m,k}\mid \gA_{m,k}>0\right]\Pr(\gA_{m,k}>0) + \E\left[-\gA_{m,k}\mid \gA_{m,k}<0\right]\Pr(\gA_{m,k}<0)\\
		&=\E\left[\gA_{m,k}\mid \gA_{m,k}>0\right]\Pr(\gA_{m,k}>0) + \E\left[-\gA_{m,k}\mid -\gA_{m,k}>0\right]\Pr(-\gA_{m,k}>0)\\
		&=2\E\left[\gA_{m,k}\mid \gA_{m,k}>0\right]\Pr(\gA_{m,k}>0). \tag{by symmetry}
	\end{align*}
	After using $\Pr(\gA_{m,k}>0)\leq \frac{1}{2}$ by symmetry, $\E\left[\abs{x_{i,j}-x_{1,0}} \mid \gA_{m,k}>0\right]\Pr(\gA_{m,k}>0) \leq\E\left[\abs{x_{i,j}-x_{1,0}}\right]$ and $\E\left[\abs{\gA_{m,k}}\right] =2\E\left[\gA_{m,k}\mid \gA_{m,k}>0\right]\Pr(\gA_{m,k}>0)$ for the first, the second and the last terms on the right hand side, respectively, we have
	\begin{align}
		\Term{1}{\eqref{eq:proof:lower bound-heterogeneity:local gradient stricter bound-1}}
		&\leq \frac{1}{2}\lambda_0 x_{1,0}+\lambda_0^2\eta \sum_{i=0}^{\gB_{m,k}-1} \E\left[\abs{x_{b_1(i),b_2(i)}-x_{1,0}}\right]- \frac{1}{2}\lambda_0 K\eta\zeta\E\left[\abs{\gA_{m,k}}\right]. \label{eq:proof:lower bound-heterogeneity:local gradient stricter bound-2}
	\end{align}
	For the second term on the right hand side in Ineq.~\eqref{eq:proof:lower bound-heterogeneity:local gradient stricter bound-1}, we have
	\begin{align}
		\Term{2}{\eqref{eq:proof:lower bound-heterogeneity:local gradient stricter bound-1}}&\leq \lambda\E\left[\abs{x_{m,k}-x_{1,0}}\mid \gA_{m,k}\leq 0\right] \Pr(\gA_{m,k}\leq 0) + \lambda\E\left[x_{1,0}\mid \gA_{m,k}\leq 0\right] \Pr(\gA_{m,k}\leq 0)\nonumber\\
		&\leq \lambda \E\left[\abs{x_{m,k}-x_{1,0}}\right] + \frac{5}{6}\lambda x_{1,0},\label{eq:proof:lower bound-heterogeneity:local gradient stricter bound-3}
	\end{align}
	where we use $\Pr(\gA_{m,k}< 0) = \Pr(\gA_{m,k}> 0) \geq \frac{1}{6}$ (Lemma~\ref{lem:lower bound:partial sum probability bounds}) for the second term on the right hand side in the last inequality. Plugging Ineq.~\eqref{eq:proof:lower bound-heterogeneity:local gradient stricter bound-2} and Ineq.~\eqref{eq:proof:lower bound-heterogeneity:local gradient stricter bound-3} into Ineq.~\eqref{eq:proof:lower bound-heterogeneity:local gradient stricter bound-1}, we have
	\begin{align*}
		\E\left[(\lambda_0\mathbbm{1}_{x_{m,k}<0} + \lambda \mathbbm{1}_{x_{m,k}\geq 0}) x_{m,k}\right]&\leq \frac{1}{2}\lambda_0 x_{1,0} +\lambda_0^2\eta \sum_{i=0}^{\gB_{m,k}-1} \E\left[\abs{x_{b_1(i),b_2(i)}-x_{1,0}}\right]\\
		&\quad- \frac{1}{2}\lambda_0 K\eta\zeta\E\left[\abs{\gA_{m,k}}\right] + \lambda \E\left[\abs{x_{m,k}-x_{1,0}}\right] + \frac{5}{6}\lambda x_{1,0}.
	\end{align*}
	Using $\E\left[\abs{x_{i,j}-x_{1,0}}\right] \leq \frac{1}{100}x_{1,0} + \frac{101}{100}K\eta\zeta\sqrt{m-1+a_k^2}$ for $(i-1)K + j \leq (m-1)K + k$ (Lemma~\ref{lem:lower bound:diff current-initial parameter upper bound}), $\E\left[\abs{\gA_{m,k}}\right] \geq \frac{1}{20} \sqrt{m-1+a_k^2}$ (Lemma~\ref{lem:lower bound:partial sum absolute value bounds}) and ${\lambda_0}/{\lambda_0}\geq {1010}$ and $\lambda_0 \eta \gB_{m,k} \leq \lambda_0 MK\eta\leq \frac{1}{101}$, we can simplify it as
	\begin{align}
		\E\left[(\lambda_0\mathbbm{1}_{x_{m,k}<0} + \lambda \mathbbm{1}_{x_{m,k}\geq 0}) x_{m,k}\right]\leq \frac{501}{1000}\lambda_0 x_{1,0} + \frac{14}{1000}\lambda_0 K\eta\zeta\sqrt{m-1+a_k^2}.\label{eq:proof:lower bound-heterogeneity:local gradient stricter bound}
	\end{align}
	Ineq.~\eqref{eq:proof:lower bound-heterogeneity:local gradient stricter bound} holds for $1\leq m \leq \frac{M}{2}+1$ ($M\geq 4$) and $0\leq k \leq K-1$, due to the constraints of Lemmas~\ref{lem:lower bound:partial sum absolute value bounds}, \ref{lem:lower bound:partial sum probability bounds} and \ref{lem:lower bound:diff current-initial parameter upper bound}. Even though the constraint of Lemma~\ref{lem:lower bound:partial sum probability bounds} excludes the case $m=1,k=0$, we can verify $\E\left[(\lambda_0\mathbbm{1}_{x_{1,0}<0} + \lambda \mathbbm{1}_{x_{1,0}\geq 0}) x_{1,0}\right] = \lambda x_{1,0} \leq \frac{1}{1010}\lambda_0 x_{1,0}$ ($x_{1,0}\geq 0$).
	
	Next, we give a general upper bound of $\E\left[(\lambda_0\mathbbm{1}_{x_{m,k}<0} + \lambda \mathbbm{1}_{x_{m,k}\geq 0}) x_{m,k}\right]$ for $1\leq m \leq M$.
	\begin{align}
		\E\left[(\lambda_0\mathbbm{1}_{x_{m,k}<0} + \lambda \mathbbm{1}_{x_{m,k}\geq 0}) x_{m,k}\right] \tag{$\because (\lambda_0\mathbbm{1}_{x<0} + \lambda \mathbbm{1}_{x\geq 0}) x \leq \lambda x$ for all $x\in \R$}
		&\leq \E\left[\lambda x_{m,k}\right] \nonumber\\
		&\leq \lambda \E\left[\abs{x_{m,k}-x_{1,0}}\right] + \lambda x_{1,0} \nonumber\\
		&\leq \frac{1}{1000}\lambda_0 x_{1,0} + \frac{1}{1000}\lambda_0 K\eta\zeta\sqrt{m-1+a_k^2},\label{eq:proof:lower bound-heterogeneity:local gradient general bound}
	\end{align}
	where we use Lemma~\ref{lem:lower bound:diff current-initial parameter upper bound} and ${\lambda_0}/{\lambda}\geq {1010}$ in the last inequality. Notably, this inequality holds for $1\leq m \leq M$ and $0\leq k \leq K-1$ ($\because$ Lemma~\ref{lem:lower bound:diff current-initial parameter upper bound}).
	
	Recalling Eq.~\eqref{eq:proof:lower bound-heterogeneity:final-init}, we first separate the sum of $\E\left[(\lambda_0\mathbbm{1}_{x_{m,k}<0} + \lambda \mathbbm{1}_{x_{m,k}\geq 0}) x_{m,k}\right]$ into two parts, and then use the tighter bound, Ineq.~\eqref{eq:proof:lower bound-heterogeneity:local gradient stricter bound}, for $1\leq m\leq \frac{M}{2}+1$ (the first part) and the general bound, Ineq.~\eqref{eq:proof:lower bound-heterogeneity:local gradient general bound}, for $\frac{M}{2}+2\leq m\leq M$ (the second part).
	\begin{align*}
		\E\left[x_{M+1,0}-x_{1,0}\right]
		&= -\eta \sum_{m=1}^{\frac{M}{2}+1} \sum_{k=0}^{K-1} \E\left[(\lambda_0\mathbbm{1}_{x_{m,k}<0} + \lambda \mathbbm{1}_{x_{m,k}\geq 0}) x_{m,k}\right] \\
		&\quad- \eta \sum_{m=\frac{M}{2}+2}^{M} \sum_{k=0}^{K-1} \E\left[(\lambda_0\mathbbm{1}_{x_{m,k}<0} + \lambda \mathbbm{1}_{x_{m,k}\geq 0}) x_{m,k}\right]\\
		&\geq -\eta \sum_{m=1}^{\frac{M}{2}+1} \sum_{k=0}^{K-1} \left(\frac{501}{1000}\lambda_0 x_{1,0} - \frac{14}{1000} \lambda_0 K\eta\zeta\sqrt{m-1+a_k^2}\right) \\
		&\quad- \eta \sum_{m=\frac{M}{2}+2}^{M} \sum_{k=0}^{K-1} \left( \frac{1}{1000}\lambda_0 x_{1,0} + \frac{1}{1000}\lambda_0 K\eta\zeta \sqrt{m-1+a_k^2} \right).
	\end{align*}
	Then, after simplifying the preceding inequality, we get
	\begin{align*}
		&\E\left[x_{M+1,0}\right] \geq \left(1-\frac{2}{3}\lambda_0 MK\eta\right) x_{1,0} + \frac{1}{600}\lambda_0 M^{\frac{3}{2}}K^2\eta^2\zeta.
	\end{align*}
	Taking unconditional expectations and putting back the superscripts, we can get
	\begin{align}
		&\E\left[x^{(r+1)}\mid x^{(r)}\geq 0\right] \geq \left(1-\frac{2}{3}\lambda_0 MK\eta\right)\E\left[x^{(r)}\mid x^{(r)}\geq 0\right] + \frac{1}{600}\lambda_0 M^{\frac{3}{2}}K^2\eta^2\zeta.\label{eq:proof:lower bound-heterogeneity:bound-1}
	\end{align}
	where we note that the following notations are interchangeable: $x_{1,0}^{(r)}$ and $x^{(r)}$, $x_{M+1,0}^{(r)}$ and $x^{(r+1)}$. Notably, Ineq.~\eqref{eq:proof:lower bound-heterogeneity:bound-1} holds for $M\geq 4$, because of the constraints of Ineq.~\eqref{eq:proof:lower bound-heterogeneity:local gradient stricter bound}.
	
	\subsubsection{Lower Bound of $\E\left[x^{(r+1)}\mid x^{(r)}< 0\right]$.}\label{subsubsection:intermediate func-1}
	We introduce a new function $H(x)$ (see Section~\ref{subsection:lower bound:regime3} for details) as follows:
	\begin{align*}
		&H_m (x) = \begin{cases}
			\frac{\lambda_0}{2}x^2 + \zeta x, &\text{if} \ m\leq \frac{M}{2}\\
			\frac{\lambda_0}{2}x^2 - \zeta x, &\text{otherwise}
		\end{cases}\\
		&H(x) = \frac{1}{M} \sum_{m=1}^M H_m (x) = \frac{\lambda_0}{2}x^2.
	\end{align*}
	Let Algorithm~\ref{algorithm1} run on the two functions $F(x)$ and $H(x)$, where both algorithms start from the same initial point and share the same random variables $\{\tau_{m,k}^{(r)}\}_{k,m,r}$ for all training rounds. Then, according to Part 1 of \cite{cha2023tighter}'s Lemma B.4, the model parameters generated on $F(x)$ and $H(x)$ satisfy $(x_{m,k})_{F} \geq (x_{m,k})_{H}$. Here, we let both cases share the same initial point $x_{1,0}$ and the same random variables $\pi$ (accordingly, the same $\tau$).

	The relationship between $(x_{M+1,0})_H$ and $x_{1,0}$ satisfies:
	\begin{align*}
		\E\left[(x_{M+1,0})_H\right] &= \E\left[(1-\lambda_0\eta)^{MK}x_{1,0} - \eta\zeta \sum_{m=0}^{M-1} (1-\lambda_0\eta)^{mK}\tau_{M-m} \sum_{k=0}^{K-1}(1-\lambda_0\eta)^k\right]\\
		&= (1-\lambda_0\eta)^{MK}x_{1,0} - \eta\zeta \sum_{m=0}^{M-1} (1-\lambda_0\eta)^{mK}\E\left[\tau_{M-m}\right] \sum_{k=0}^{K-1}(1-\lambda_0\eta)^k\\
		&= (1-\lambda_0\eta)^{MK}x_{1,0}. \tag{$\because \E[\tau_{M-m}] = 0$}
	\end{align*}
	Then since $(1-z)^K\leq 1-Kz+K^2z^2$, $\forall x\in [0,1]$, we can get
	\begin{align*}
		(1-\lambda_0\eta)^{MK} &\leq 1-\lambda_0 MK\eta + \lambda_0^2 M^2K^2\eta^2 \leq 1-\frac{2}{3} \lambda_0 MK\eta.\tag{$\because \lambda_0 MK\eta \leq \frac{1}{101}$}
	\end{align*}
	Then using $(x_{m,k})_{F} \geq (x_{m,k})_{H}$ and $x_{1,0}<0$, we have
	\begin{align*}
		\E\left[(x_{M+1,0})_F\right] \geq \E\left[(x_{M+1,0})_H\right] = (1-\lambda_0\eta)^{MK}x_{1,0} \geq \left(1-\frac{2}{3} \lambda_0 MK\eta\right)x_{1,0}.
	\end{align*}
	Taking unconditional expectations and putting back the superscripts, we can get
	\begin{align}
		&\E\left[x^{(r+1)}\mid x^{(r)}< 0\right] \geq \left(1-\frac{2}{3}\lambda_0 MK\eta\right)\E\left[x^{(r)}\mid x^{(r)}< 0\right].\label{eq:proof:lower bound-heterogeneity:bound-2}
	\end{align}

	\subsubsection{Relationship Between $\Pr(x^{(r)}) \geq 0$ and $\Pr(x^{(r)}) < 0$.}\label{subsubsection:intermediate func-2}
	We still use the function $H(x)$ for comparison. This time, we let both cases share the same initial point $x^{(0)}$ and the same $\tau^{(0)}, \tau^{(1)},\ldots,\tau^{(r-1)}$ for the first $r$ rounds. For any round $r$, we can build the relationship of $(x^{(r)})_H$ with $x^{0}$:
	\begin{align*}
		(x^{(r)})_H &= (1-\lambda_0\eta)^{rMK}x^{(0)} - \eta\zeta\sum_{k=0}^{K-1}(1-\lambda_0\eta)^k\sum_{m=0}^{M-1} (1-\lambda_0\eta)^{mK} \sum_{s=0}^{r-1}(1-\lambda_0\eta)^{sMK}\tau_{M-m}^{(r-1-s)}.
	\end{align*}
	
	For all possible permutations $\tau^{(0)}, \tau^{(1)},\ldots,\tau^{(r-1)}$, we can find the corresponding permutations $(\tau^{(0)})', (\tau^{(1)})', \ldots, (\tau^{(r-1)})'$ satisfy $\tau_m^{(s)} = -(\tau_m^{(s)})'$ for all $m \in \{1,2,\ldots,M\}$ and $s \in \{0,1,\ldots,r-1\}$. Denoting the parameters obtained with $\tau^{(0)}, \tau^{(1)},\ldots,\tau^{(r-1)}$ and $(\tau^{(0)})', (\tau^{(1)})', \ldots, (\tau^{(r-1)})'$ as $(x^{(r)})_H$ and $(x^{(r)})_H'$, respectively, we have
	\begin{align*}
		\frac{1}{2}\left( (x^{(r)})_H + (x^{(r)})_H' \right) &= (1-\lambda_0\eta)^{rMK}x^{(0)}.
	\end{align*}
	where the second terms of $(x^{(r)})_H$ and $(x^{(r)})_H'$ are canceled out with $\frac{\tau_{M-m}^{(r-1-s)} + (\tau_{M-m}^{(r-1-s)})'}{2} = 0$.
	This means that for any possible parameter $(x^{(r)})_H$, there exits one corresponding parameter $(x^{(r)})_H'$ to make their average be $(1-\lambda_0\eta)^{rMK} x^{(0)}$, and further implies
	\begin{align*}
		\Pr\left((x^{(r)})_H\geq (1-\lambda_0\eta)^{rMK} x^{(0)}\right) \geq \frac{1}{2}.
	\end{align*}
	Then, for the same initial point $x^{(0)} \geq 0$, we have
	\begin{align*}
		\Pr\left((x^{(r)})_F\geq 0\right) \geq \Pr\left((x^{(r)})_H\geq 0\right) \geq \Pr\left((x^{(r)})_H\geq (1-\lambda_0\eta)^{rMK} x^{(0)}\right) \geq \frac{1}{2}.
	\end{align*}
	Intuitively, the total possible number of events (the permutations) are identical for both $F$ and $H$. Since $(x^{(r)})_F\geq (x^{(r)})_H$ for the same permutations, the permutations that make $(x^{(r)})_H\geq 0$ always make $(x^{(r)})_F\geq 0$, causing $\Pr\left((x^{(r)})_F\geq 0\right) \geq \Pr\left((x^{(r)})_H\geq 0\right)$. The reasoning for the second inequality is similar. The permutations that make $(x^{(r)})_H\geq (1-\lambda_0\eta)^{rMK}x^{(0)}$ always make $(x^{(r)})_H\geq 0$ for $x^{(0)} \geq 0$.

	\subsubsection{Lower Bound for $\frac{1}{102010\lambda_1 MKR}\leq \eta \leq \frac{1}{101\lambda_0 MK}$}
	Using Ineq.~\eqref{eq:proof:lower bound-heterogeneity:bound-1}, Ineq.~\eqref{eq:proof:lower bound-heterogeneity:bound-2} and $\Pr\left(x^{(r)}\geq 0\right)\geq \frac{1}{2}$ (when $x^{(0)} \geq 0$), we have
	\begin{align*}
		\E\left[x^{(r+1)}\right] &= \E\left[x^{(r+1)}\mid x^{(r)}\geq 0\right]\Pr\left(x^{(r)}\geq 0\right) + \E\left[x^{(r+1)}\mid x^{(r)}< 0\right]\Pr\left(x^{(r)}< 0\right)\\
		&\geq \left( \left(1-\frac{2}{3}\lambda_0 MK\eta\right)x^{(r)} + \frac{1}{600}\lambda_0 M^{\frac{3}{2}}K^2\eta^2\zeta \right)\Pr\left(x^{(r)}\geq 0\right) \\
		&\quad+ \left( \left(1-\frac{2}{3} \lambda_0 MK\eta\right)x^{(r)} \right) \Pr\left(x^{(r)}< 0\right)\\
		&\geq \left(1-\frac{2}{3}\lambda_0 MK\eta\right)x^{(r)} + \frac{1}{1200}\lambda_0 M^{\frac{3}{2}}K^2\eta^2\zeta  .\tag{$\because \Pr\left(x^{(r)}\geq 0\right)\geq \frac{1}{2}$}
	\end{align*}
	If $x^{(r)} \geq \frac{1}{81608000}\cdot \frac{\zeta}{\lambda_1 M^{\frac{1}{2}}R}$, then using $\eta \geq \frac{1}{102010\lambda_1 MKR}$, we have
	\begin{align*}
		x^{(r+1)} &\geq \left(1-\frac{2}{3}\lambda_0 MK\eta\right)x^{(r)} + \frac{1}{1200}\lambda_0 M^{\frac{3}{2}}K^2\eta^2\zeta \\
		&\geq \left(1-\frac{2}{3}\lambda_0 MK\eta\right) \cdot \frac{1}{81608000}\cdot \frac{\zeta}{\lambda_1 M^{\frac{1}{2}}R} + \frac{1}{1200}\lambda_0 M^{\frac{3}{2}}K^2\eta \zeta \cdot \frac{1}{102010\lambda_1 MKR}\\
		&\geq \frac{1}{81608000}\cdot \frac{\zeta}{\lambda_1 M^{\frac{1}{2}}R}.
	\end{align*}
	Therefore, if we set $x^{(0)}\geq \frac{1}{81608000}\cdot \frac{\zeta}{\lambda_1 M^{\frac{1}{2}}R}$, then the final parameters will also maintain $x^{(R)}\geq \frac{1}{81608000}\cdot \frac{\zeta}{\lambda_1 M^{\frac{1}{2}}R}$. Then, noting that $\frac{\lambda_0}{\lambda} \geq 1010$, we can choose $\frac{\lambda_0}{\lambda} = 1010$. Then,
	\begin{align*}
		\E\left[ F(x^{R})-F(x^\ast) \right] = \E\left[ F(x^{R})\right]
		&\geq \frac{1}{2}\cdot \lambda \E\left[ \left(x^{R}\right)^2 \right]\\
		&\geq \frac{1}{2}\cdot \frac{\lambda_0}{1010}\cdot \left(\frac{1}{81608000}\cdot \frac{\zeta}{\lambda_1 M^{\frac{1}{2}}R}\right)^2\\
		&=\Omega\left( \frac{\lambda_0\zeta^2}{\lambda_1^2 MR^2} \right).
	\end{align*}
	Notably, this inequality holds for $M\geq 4$ (see Ineq.~\ref{eq:proof:lower bound-heterogeneity:bound-1}).
	
	\subsection{Lower Bound for $\frac{1}{101\lambda MK}\leq \eta \leq \frac{1}{\lambda K}$ and $\frac{1}{\lambda K} \leq \eta \leq \frac{1}{\lambda}$}
	\label{subsection:lower bound:regime3}
	In these two regimes, we consider the following functions
	\begin{align*}
		&F_m (x) = \begin{cases}
			\frac{\lambda}{2}x^2 + \zeta x, &\text{if} \ m\leq \frac{M}{2}\\
			\frac{\lambda}{2}x^2 - \zeta x, &\text{otherwise}
		\end{cases},\\
		&F(x) = \frac{1}{M} \sum_{m=1}^M F_{m} (x) = \frac{\lambda}{2}x^2.
	\end{align*}
	
	In each training round, we sample a random permutation $\pi=(\pi_1, \pi_2, \ldots, \pi_M)$ ($\pi_m$ is its $m$-th element) from $\{1,2,\ldots, M\}$ as the clients' training order. Thus, we can denote $F_{\pi_m}$ for $m \in \{1,2,\ldots,M\}$ as $F_{\pi_m}(x) = \frac{\lambda}{2}x^2 + \zeta \tau_m x$,
	where $\tau=(\tau_1, \tau_2, \ldots,\tau_M)$ is a random permutation of $\frac{M}{2}$ $+1$'s and $\frac{M}{2}$ $-1$'s.
	
	For a single training round, we can get
	\begin{align*}
		x^{(r)} = (1-\lambda\eta)^{MK} x^{(r-1)} - \eta\zeta\sum_{m=0}^{M-1} (1-\lambda\eta)^{mK}\tau_{M-m} \sum_{k=0}^{K-1}(1-\lambda\eta)^k.
	\end{align*}
		
	Taking expectation conditional on $x^{(r-1)}$, we can get
	\begin{align*}
		\E\left(x^{(r)}\right)^2&= \E\left[\left((1-\lambda\eta)^{MK} x^{(r-1)} - \eta\zeta\sum_{m=0}^{M-1} (1-\lambda\eta)^{mK}\tau_{M-m} \sum_{k=0}^{K-1}(1-\lambda\eta)^k\right)^2\right]\\
		&=(1-\lambda\eta)^{2MK}\left(x^{(r-1)}\right)^2 + \eta^2\zeta^2\left(\sum_{k=0}^{K-1}(1-\lambda\eta)^k\right)^2\E\left[\left(\sum_{m=0}^{M-1}(1-\lambda\eta)^{mK}\tau_{m}\right)^2\right],
	\end{align*}
	where we note that the cross terms on the right hand side equal zero since $\E[\tau_{M-m}] = 0$. Following \cite{safran2020good}'s Lemma 1, we first focus on the term
	\begin{align*}
		\E\left[\left(\sum_{m=0}^{M-1}(1-\lambda\eta)^{mK}\tau_{m}\right)^2\right]
		&=\sum_{m=0}^{M-1} (1-\lambda\eta)^{2mK}\E\left[\tau_m^2\right] + \sum_{i=0}^{M-1}\sum_{j\neq i}^{M-1}(1-\lambda\eta)^{(i+j)K}\E\left[\langle \tau_{i},\tau_{j}\rangle \right].
	\end{align*}
	Since $\tau_m^2=1$ and $\E\left[\langle \tau_{i},\tau_{j}\rangle \right]=-\frac{1}{M-1}$ (see \cite{safran2020good}'s Lemma 2), we get
	\begin{align*}
		\E\left[\left(\sum_{m=0}^{M-1}(1-\lambda\eta)^{mK}\tau_{m}\right)^2\right]&=\sum_{m=0}^{M-1} (1-\lambda\eta)^{2mK} -\frac{1}{M-1} \sum_{i=0}^{M-1}\sum_{j\neq i}^{M-1}(1-\lambda\eta)^{(i+j)K}\\
		=&\left(1+\frac{1}{M-1}\right)\sum_{m=0}^{M-1} (1-\lambda\eta)^{2mK} - \frac{1}{M-1}\left(\sum_{m=0}^{M-1}(1-\lambda\eta)^{mK} \right)^2.
	\end{align*}
	Returning back to $\E\left[\left(x^{(r)}\right)^2\right]$ and using $\sum_{m=0}^{M-1}(1-\lambda\eta)^{mK} = \frac{1-(1-\lambda\eta)^{MK}}{1-(1-\lambda\eta)^{K}}$ and $\sum_{m=0}^{M-1} (1-\lambda\eta)^{2mk} = \frac{1-(1-\lambda\eta)^{2MK}}{1-(1-\lambda\eta)^{2K}}$, we can get
	\begin{align}
		&\E\left(x^{(r)}\right)^2 \nonumber\\
		&=(1-d)^{2MK}\left(x^{(r-1)}\right)^2 + \eta^2\zeta^2 \frac{M}{M-1} \cdot \frac{1}{d^2}\cdot \frac{1-(1-d)^{K}}{1+(1-d)^K}\cdot\left(1-(1-d)^{MK}\right)T(d),\label{eq:proof:lower bound-heterogeneity:parameter square recursion}
	\end{align}
	where we define $d = \lambda \eta$ and $T(d) = 1+(1-d)^{MK} - \frac{1}{M}\cdot \frac{1+(1-d)^{K}}{1-(1-d)^K }\cdot \left(1-(1-d)^{MK}\right)$. For convenience, we also define a intermediate variable $t= 1-(1-d)^K$. Then
	\begin{align}
		T(t) = \left(1-\frac{1}{M}\cdot \frac{2-t}{t}\right) + \left(1+\frac{1}{M}\cdot \frac{2-t}{t}\right)(1-t)^M.\label{eq:proof:lower bound-heterogeneity:composite function}
	\end{align}
	According to Lemma~\ref{lem:composite function}, $T(t)$ is monotonically increasing on the interval $0<t<1$, and $T(d)$ is monotonically increasing on the interval $0<d<1$.
	
	\subsubsection{Lower Bound for $\frac{1}{101\lambda MK} \leq \eta \leq \frac{1}{\lambda K}$} 
	In this regime, $T(d)$ is lower bounded by $T(\frac{1}{101MK})$ for $d\in [\frac{1}{101MK}, \frac{1}{K}]$. Here, we first lower bound $t(d)$, and then lower bound $T(d)$. According to the fact $(1-x)^n \leq 1-nx+\frac{1}{2}n^2x^2$ when $x\in (0,1)$ (it can be proved with Taylor expansion of $(1-x)^n$ at $x=0$.), we get
	\begin{align*}
		t \geq 1-(1-d)^K \geq 1-(1-dK + \frac{1}{2}d^2K^2) \geq dK - \frac{1}{2}d^2K^2 \geq \frac{1}{2}dK \geq \frac{1}{202M}
	\end{align*}
	Then following the proofs of \cite{safran2020good}'s Lemma~1, we deal with the lower bound of $T(t)$ on $t \in [\frac{1}{202M}, 1]$,
	\begin{align*}
		T &=405 \left( 1-\frac{1}{202M} \right)^M - 403 + \left(1- \left(1-\frac{1}{202M}\right)^M \right) \cdot \frac{1}{M}
	\end{align*}
	Since the first two terms are increasing as $M$ increases and the third term is positive, $T$ is lower bounded by one numerical constant, as long as there exists $M_0$ such that $405 \left( 1-\frac{1}{202M_0} \right)^{M_0} - 403> 0$ and $T(t)> 0$ for all $2\leq t\leq M_0$ (This can be done by a simple code). We get that $405 \left( 1-\frac{1}{202\cdot 1212} \right)^{1212} - 403> 1.3 \cdot 10^{-11}$ when $M_0 = 1212$. Hence, $T$ is lower bounded by some numerical constant $c$. Returning to $\E\left[\left(x^{(r)}\right)^2\right]$, we get
	\begin{align*}
		\E\left[\left(x^{(r)}\right)^2\right] &\geq (1-d)^{2MK}\left(x^{(r-1)}\right)^2+ c\cdot \eta^2\zeta^2 \frac{M}{M-1} \cdot \frac{1}{d^2}\cdot \frac{1-(1-d)^{K}}{1+(1-d)^K}\cdot\left(1-(1-d)^{MK}\right)\\
		&\geq (1-d)^{2MK}\left(x^{(r-1)}\right)^2 + \left(1-\exp\left(-\frac{1}{101}\right)\right)\frac{c}{2}  \cdot \frac{1}{202M}\cdot \eta^2\zeta^2\frac{1}{d^2} \\
		&\geq (1-\lambda\eta)^{2MK}\left(x^{(r-1)}\right)^2 + \frac{(1-\exp(-1/101))c}{404} \frac{\zeta^2}{\lambda^2M}\\
		&\geq (1-\lambda\eta)^{2MK}\left(x^{(r-1)}\right)^2 + c' \frac{\zeta^2}{\lambda^2M} \tag{$c' = \frac{(1-\exp(-1/101))}{404}\cdot c$},
	\end{align*}
	where we use $\frac{M}{M-1}\geq 1$, $1-(1-d)^{MK} \geq 1-\exp(-dMK) \geq 1-\exp(-1/101)$, $\frac{1}{1+(1-d)^K} \geq \frac{1}{2}$ and $1-(1-d)^K \geq \frac{1}{202M}$ in the second inequality. Then,
	\begin{align*}
		&\E\left[\left(x^{(R)}\right)^2\right] \geq (1-\lambda\eta)^{2MKR} \left(x^{(0)}\right)^2 + \sum_{r=0}^{R-1} (1-\lambda\eta)^{2MKr}c' \frac{\zeta^2}{\lambda^2M} \geq c'\frac{\zeta^2}{\lambda^2M},\\
		&\E\left[F(x^{(R)})-F^\ast\right] = \E\left[F(x^{(R)})\right] = \frac{\lambda}{2} \E\left[\left(x^{(R)}\right)^2\right] \geq \frac{c'}{2}\frac{\zeta^2}{\lambda M}= \Omega\left(\frac{\zeta^2}{\lambda M}\right).
	\end{align*}
	
	\subsubsection{Lower Bound for $\frac{1}{\lambda K} \leq \eta \leq \frac{1}{\lambda}$}
	We still start from Eq.~\eqref{eq:proof:lower bound-heterogeneity:parameter square recursion} and Eq.~\eqref{eq:proof:lower bound-heterogeneity:composite function}. For $d=\lambda\eta = 1$, we have $\E\left[\left(x^{(r)}\right)^2\right] = \eta^2\zeta^2 = \frac{\zeta^2}{\lambda^2}$. For $\frac{1}{K}\leq d <1$, we have $t = 1-(1-d)^K \geq 1-\exp(-dK) \geq 1-\exp(-1) \approx 0.63 > 0.5$. Then we can get the lower bound of $T(t)$ on $\frac{1}{2}<t <1$,
	\begin{align*}
		T 
		\geq \left(1-\frac{1}{M}\cdot \frac{2-\frac{1}{2}}{\frac{1}{2}}\right) + \left(1+\frac{1}{M}\cdot \frac{2-\frac{1}{2}}{\frac{1}{2}}\right)(1-\frac{1}{2})^M
		\geq \left( 1-\frac{3}{M} \right) + \left(1+\frac{3}{M}\right)\frac{1}{2^M}.
	\end{align*}
	It can be seen that $T > 1-\frac{3}{M}\geq \frac{1}{4}$ when $M\geq 4$, $T=\frac{1}{4}$ when $M=3$, and $T = \frac{1}{8}$ when $M=2$. Thus, we can obtain that $T \geq c$ for some numerical constant $c$. In fact, since $t \geq 1-\exp(-1) > \frac{1}{202M}$, we can also use the conclusion of the lower bound for $\frac{1}{101 \lambda MK} \leq \eta \leq \frac{1}{\lambda K}$. Then,
	\begin{align*}
		\E\left(x^{(r)}\right)^2 &\geq (1-d)^{2MK}\left(x^{(r-1)}\right)^2+ c\cdot \eta^2\zeta^2 \frac{M}{M-1} \cdot \frac{1}{d^2}\cdot \frac{1-(1-d)^{K}}{1+(1-d)^K}\cdot\left(1-(1-d)^{MK}\right)\\
		&\geq (1-\lambda\eta)^{2MK}\left(x^{(r-1)}\right)^2 + \frac{c}{8}\cdot \frac{\zeta^2}{\lambda^2},
	\end{align*}
	where we use $\frac{M}{M-1}\geq 1$, $1-(1-d)^{MK} \geq 1-(1-d)^{K} \geq 1-\exp(-dK)\geq \frac{1}{2}$, $\frac{1}{1+(1-d)^K} \geq \frac{1}{2}$. Then, for $\frac{1}{K}\leq d = \lambda\eta < 1$, we have
	\begin{align*}
		&\E\left[\left(x^{(R)}\right)^2\right] \geq (1-\lambda\eta)^{2MKR} \left(x^{(0)}\right)^2 + \sum_{r=0}^{R-1} (1-\lambda\eta)^{2MKr}\frac{c}{8} \frac{\zeta^2}{\lambda^2} \geq \frac{c}{8}\frac{\zeta^2}{\lambda^2},\\
		&\E\left[F(x^{(R)})-F^\ast\right] = \E\left[F(x^{(R)})\right] = \frac{\lambda}{2} \E\left[\left(x^{(R)}\right)^2\right] = \Omega\left(\frac{\zeta^2}{\lambda}\right).
	\end{align*}
	
	Now we complete the proofs for all regimes for heterogeneity terms in Theorem~\ref{thm:lower bound}. The setups and final results are summarized in Table~\ref{tab:lower bounds:heterogeneity}. 
\end{proof}

\subsection{Helpful Lemmas for Heterogeneity Terms}\label{app:subsec:helpful lemmas-heterogeneity}

\begin{lemma}
	\label{lem:composite function}
	The function $T(d)$ defined below is monotonically increasing on the interval $0<d<1$, for integers $M\geq 2$ and $K \geq 1$.
	\begin{align*}
		T(d) = 1+(1-d)^{MK} - \frac{1}{M}\cdot \frac{1+(1-d)^{K}}{1-(1-d)^K }\cdot \left(1-(1-d)^{MK}\right).
	\end{align*}
\end{lemma}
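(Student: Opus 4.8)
The plan is to remove the dependence on $K$ by the substitution $t = 1-(1-d)^K$, already introduced in the proof above. Since $K \geq 1$, the map $d \mapsto t$ is a smooth strictly increasing bijection from $(0,1)$ onto $(0,1)$, so it suffices to prove that $T$, viewed as a function of $t$, is increasing on $(0,1)$; monotonicity in $d$ then follows by composing with this increasing change of variable. Writing $(1-d)^K = 1-t$ and $(1-d)^{MK} = (1-t)^M$ turns $T$ into the purely $M$-dependent expression of Eq.~\eqref{eq:proof:lower bound-heterogeneity:composite function},
\[
T(t) = \left(1-\frac{1}{M}\cdot\frac{2-t}{t}\right) + \left(1+\frac{1}{M}\cdot\frac{2-t}{t}\right)(1-t)^M ,
\]
so the lemma reduces to the single-variable claim $T'(t) > 0$ for $0 < t < 1$ and every integer $M \geq 2$.

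Next I would differentiate. A direct computation gives
\[
T'(t) = \frac{2}{Mt^2}\bigl(1-(1-t)^M\bigr) - \frac{(1-t)^{M-1}\bigl((M-1)t+2\bigr)}{t},
\]
whose sign is governed by the auxiliary function $h(t) \coloneqq Mt^2\,T'(t) = 2\bigl(1-(1-t)^M\bigr) - Mt(1-t)^{M-1}\bigl((M-1)t+2\bigr)$. One checks immediately that $h(0) = 0$, so it is enough to show $h$ is increasing on $(0,1)$. The decisive step is a second differentiation: after expanding $h$ and collecting terms, the numerous lower-order contributions cancel and one is left with the clean, manifestly nonnegative monomial
\[
h'(t) = M(M-1)(M+1)\,t^2\,(1-t)^{M-2},
\]
which is strictly positive on $(0,1)$ for every integer $M \geq 2$ (the exponent $M-2$ is nonnegative).

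Combining $h(0)=0$ with $h'(t) > 0$ on $(0,1)$ yields $h(t) > 0$ there, hence $T'(t) > 0$, so $T$ is strictly increasing in $t$; composing with the increasing map $d \mapsto t = 1-(1-d)^K$ gives that $T(d)$ is increasing on $(0,1)$, as claimed. The only genuine obstacle is the second-differentiation step: after a single derivative the expression for $T'$ (equivalently $h$) has two competing terms of opposite sign, so positivity is not visible directly; the cancellation that collapses $h'$ to a single nonnegative term is what makes the argument work, and the main care needed is in carrying out that differentiation and verifying the cancellation rather than in any conceptual difficulty. A useful sanity check along the way is $M=2$, where $h(t) = 2t^3$ and $h'(t) = 6t^2$, matching the general formula.
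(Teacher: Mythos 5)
Your proof is correct, and every computation checks out: the formula $T'(t) = \frac{2}{Mt^2}\bigl(1-(1-t)^M\bigr) - \frac{(1-t)^{M-1}((M-1)t+2)}{t}$ agrees with the paper's expression for the derivative, and your identity $h'(t) = M(M-1)(M+1)\,t^2\,(1-t)^{M-2}$ for $h(t)=Mt^2\,T'(t)$ is verified by direct expansion (the $M=2$ sanity check $h(t)=2t^3$ is also right). The skeleton of your argument is the same as the paper's: the substitution $t=1-(1-d)^K$, the reduction to single-variable monotonicity in $t$, and the same derivative computation. Where you diverge is the finishing positivity step. The paper rewrites the bracket as $1-(1-t)^{M-1}\bigl(1+(M-1)t+\tfrac12 M(M-1)t^2\bigr)$, recognizes the polynomial as the degree-two Taylor polynomial of $(1-t)^{1-M}$ at $t=0$, and invokes Taylor's theorem with Lagrange remainder (following \cite{safran2020good}) to conclude $(1-t)^{1-M} > 1+(M-1)t+\tfrac12 M(M-1)t^2$. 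You instead apply one more differentiation to the auxiliary function $h$, observe that everything collapses to a single nonnegative monomial, and integrate from $h(0)=0$. The two arguments establish positivity of the same quantity; yours is slightly longer to compute but entirely mechanical, needing nothing beyond the product rule and the fundamental theorem of calculus, whereas the paper's is shorter once one recognizes the Taylor polynomial but rests on the Lagrange remainder form of Taylor's theorem and the sign of the third-order remainder.
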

\begin{proof}
	Here we introduce an intermediate variable $t = 1-(1-d)^K$ (it implies that $(1-d)^K = 1-t$, $(1-d)^{MK} = (1-t)^M$) and analyze the function $T(t)$ on $0<t<1$ at first.
	\begin{align*}
		T &= 1+(1-t)^M - \frac{1}{M} \cdot \frac{2-t}{t} \cdot \left(1-(1-t)^M\right)\\
		&=\left(1-\frac{1}{M}\cdot \frac{2-t}{t}\right) + \left(1+\frac{1}{M}\cdot \frac{2-t}{t}\right)(1-t)^M.
	\end{align*}
	Then, we follow a similar way to th proof of Lemma 1 in \cite{safran2020good} to prove $T(t)$ is increasing. The derivative of the function $T(t)$ is
	\begin{align*}
		T(t)' &= \frac{2}{Mt^2} - \frac{2}{Mt^2}(1-t)^M - \left( 1+\frac{1}{M}\cdot \frac{2-t}{t}\right)\cdot M (1-t)^{M-1} \\
		&= \frac{2}{Mt^2} \left( 1-(1-t)^{M-1} \cdot \left( 1+(M-1)t + \frac{1}{2}M(M-1)t^2 \right) \right).
	\end{align*}
	The Taylor expansion of $\left((1-t)^{1-M}\right)$ at $t=0$ is
	\begin{align*}
		(1-t)^{1-M} = 1 + (M-1)t + \frac{1}{2}M(M-1) t^2 + \frac{1}{3!} (M+1)M(M-1)(1-\xi)^{-M-2}t^3,
	\end{align*}
	where $\xi \in [0,t]$. When $0<t< 1$, the remainder $\frac{1}{3!} (M+1)M(M-1)(1-\xi)^{-M-2}t^3> 0$ for $M\geq 2$. So we can get $(1-t)^{1-M} > 1 + (M-1)t + \frac{1}{2}M(M-1) t^2$. It follows that
	\begin{align*}
		T' > \frac{2}{Mt^2} (1-(1-t)^{M-1} \cdot (1-t)^{1-M}) = 0.
	\end{align*}
	Thus, $T(t)$ is monotonically increasing on $0<t<1$. Since $t= 1-(1-d)^K$ is monotonically increasing on $0<d<1$, we can get that $T(d)$ is monotonically increasing on $0<d<1$.
\end{proof}

\begin{lemma}
	\label{lem:lower bound:partial sum absolute value bounds}
	Let $\tau=(\tau_1, \tau_2, \ldots,\tau_M)$ be a random permutation of $\frac{M}{2}$ $+1$'s and $\frac{M}{2}$ $-1$'s. Let $\gA_{m,k} \coloneqq \gE_{m-1} + a_k \tau_m$, where $\gE_{m-1} = \sum_{i=1}^{m-1}\tau_{i}$ and $a_k = k/K$ ($0\leq k\leq K-1$). Then,
	\begin{align*}
		\frac{1}{20}\sqrt{m-1+a_k^2} \leq \E\left[\abs{\gA_{m,k}}\right] \leq \sqrt{m-1+a_k^2}  .
	\end{align*}
	Notably, the lower bound holds for $1\leq m \leq \frac{M}{2}+1$ ($M\geq 4$) and the upper bound holds for $1\leq m \leq M$ ($M \geq 2$).
\end{lemma}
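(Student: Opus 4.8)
The plan is to derive both inequalities from moment estimates of the linear statistic $\gA_{m,k} = \sum_{i=1}^{m-1}\tau_i + a_k\tau_m$, viewed as $\sum_{i=1}^m c_i\tau_i$ with coefficients $c_i=1$ for $i<m$ and $c_m=a_k$. The only structural fact I need about the random balanced permutation is the pairwise correlation $\E[\tau_i\tau_j]=-\frac{1}{M-1}$ for $i\neq j$ (this is \cite{safran2020good}'s Lemma~2 and follows from $\sum_{i=1}^M\tau_i=0$), together with $\tau_i^2=1$. Using $\sum_{i\neq j}c_ic_j = (\sum_i c_i)^2 - \sum_i c_i^2$, the second moment evaluates to
\begin{align*}
	\E\left[\gA_{m,k}^2\right] = \left(m-1+a_k^2\right) - \frac{(m-1)(m-2) + 2a_k(m-1)}{M-1}.
\end{align*}
For the upper bound I would simply apply Cauchy--Schwarz, $\E\left[\abs{\gA_{m,k}}\right]\le\sqrt{\E\left[\gA_{m,k}^2\right]}$; since the subtracted quantity is nonnegative for every $1\le m\le M$ (both $(m-1)(m-2)$ and $a_k(m-1)$ are nonnegative), we get $\E\left[\gA_{m,k}^2\right]\le m-1+a_k^2$, which delivers the upper bound on the full stated range.

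For the lower bound I would first dispose of $m=1$ trivially: there $\gA_{1,k}=a_k\tau_1$, so $\E\left[\abs{\gA_{1,k}}\right]=a_k=\sqrt{m-1+a_k^2}$. For $2\le m\le \frac{M}{2}+1$ I would use the Hölder/second-moment method: writing $\gA^2=\abs{\gA}^{2/3}\abs{\gA}^{4/3}$ and applying Hölder with exponents $\tfrac32,3$ gives $\E\left[\gA^2\right]\le(\E\abs{\gA})^{2/3}(\E\left[\gA^4\right])^{1/3}$, hence
\begin{align*}
	\E\left[\abs{\gA_{m,k}}\right]\ge \frac{\left(\E\left[\gA_{m,k}^2\right]\right)^{3/2}}{\left(\E\left[\gA_{m,k}^4\right]\right)^{1/2}}.
\end{align*}
It then remains to bound $\E\left[\gA^2\right]$ from below and $\E\left[\gA^4\right]$ from above. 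For the second moment, the constraint $m-1\le\frac M2$ (equivalently $M-1\ge 2(m-1)-1$) forces the negative correction above to be a controlled fraction of $m-1+a_k^2$; an elementary worst-case check over the integer $m$ and $a_k\in[0,1)$ yields $\E\left[\gA_{m,k}^2\right]\ge\frac13\left(m-1+a_k^2\right)$ on this range.

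For the fourth moment I would avoid the messy four-index combinatorics of the permutation by using the interpolation identity $\gA_{m,k}=(1-a_k)\gE_{m-1}+a_k\gE_m$ (with $\gE_m=\gE_{m-1}+\tau_m$), which expresses $\gA_{m,k}$ as a genuine convex combination of two \emph{unweighted} partial sums. Convexity of $t\mapsto t^4$ gives $\E\left[\gA_{m,k}^4\right]\le(1-a_k)\E\left[\gE_{m-1}^4\right]+a_k\E\left[\gE_m^4\right]$, and the classical convex-order comparison between sampling without and with replacement bounds each unweighted partial sum by its i.i.d.\ Rademacher counterpart, $\E\left[\gE_n^4\right]\le 3n^2$. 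Substituting and using $2s+1\le 3s^2$ for $s=m-1\ge1$ then gives $\E\left[\gA_{m,k}^4\right]\le 12\left(m-1+a_k^2\right)^2$. Plugging the constants $c_1=\tfrac13$ and $c_2=12$ into the Hölder bound produces
\begin{align*}
	\E\left[\abs{\gA_{m,k}}\right]\ge \frac{c_1^{3/2}}{c_2^{1/2}}\sqrt{m-1+a_k^2}=\frac{1}{18}\sqrt{m-1+a_k^2}\ge \frac{1}{20}\sqrt{m-1+a_k^2},
\end{align*}
closing the argument with a small margin. The main obstacle is the lower bound on $\E\left[\gA^2\right]$: the second moment is genuinely smallest precisely at the top of the allowed range ($m\approx\frac M2+1$), where the without-replacement negative correlation is strongest, so the constant $\tfrac13$ must survive an actual worst-case verification rather than a crude estimate; everything downstream (the convexity step, the with/without-replacement comparison, and the final numerics) is routine and is exactly what pins down the explicit factor $\tfrac1{20}$ in the statement.
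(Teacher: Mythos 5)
Your proof is correct, and it takes a genuinely different route from the paper's. For the upper bound the two arguments essentially coincide (Cauchy--Schwarz plus the negative pairwise correlation $\E[\tau_i\tau_j]=-\tfrac{1}{M-1}$), but for the lower bound the paper conditions on the sign of $\gE_{m-1}\tau_m$ to show $\E\abs{\gA_{m,k}}\ge\bigl(1-\tfrac{a_k}{M-m+1}\bigr)\E\abs{\gE_{m-1}}\ge\tfrac12\E\abs{\gE_{m-1}}$, then invokes a sharpened form of \citet{cha2023tighter}'s Lemma~B.5 (central-binomial-coefficient estimates) to get $\E\abs{\gE_{m-1}}\ge\tfrac{\sqrt{m}}{10}$, with separate verification at $m=1,2$. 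You instead run a pure moment argument: the exact second moment, a fourth-moment bound via the interpolation $\gA_{m,k}=(1-a_k)\gE_{m-1}+a_k\gE_m$ combined with Hoeffding's classical convex-order domination of without-replacement sums by i.i.d.\ sums, and the H\"older bound $\E\abs{\gA_{m,k}}\ge(\E[\gA_{m,k}^2])^{3/2}/(\E[\gA_{m,k}^4])^{1/2}$. I checked the two quantitative claims you deferred to ``elementary verification,'' and both hold: (i) $\E[\gA_{m,k}^2]\ge\tfrac13(m-1+a_k^2)$ on $2\le m\le\tfrac{M}{2}+1$ --- writing $s=m-1$ and using $M-1\ge\max\{2s-1,3\}$, the cases $s=1,2$ are immediate ($s=2$ reduces to $2(a_k-1)^2\ge 0$, showing your constant is tight at $M=4$, $a_k\to 1$, consistent with your worst-case remark), while for $s\ge 3$ the requirement becomes the quadratic $(4s-2)a_k^2-6sa_k+s^2+s\ge 0$ in $a_k$, whose discriminant $-4s(4s^2-7s-2)$ is negative; and (ii) $\E[\gA_{m,k}^4]\le 3s^2+3a_k(2s+1)\le 12(s+a_k^2)^2$ follows from $2s+1\le 3s^2$ for $s\ge1$, giving the final constant $(1/3)^{3/2}/\sqrt{12}=\tfrac{1}{18}\ge\tfrac{1}{20}$. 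Your route buys self-containedness and robustness --- no binomial-coefficient asymptotics, no patching of the cited Lemma~B.5, no special case at $m=2$ --- plus a marginally better constant; the paper's route avoids appealing to Hoeffding's comparison theorem and keeps explicit control of $\E\abs{\gE_{m-1}}$ itself, staying closer to the combinatorial toolkit of the SGD-RR lower-bound literature that the surrounding Appendix~\ref{app:lower bound-heterogeneity} builds on.
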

\begin{proof}
	We consider the upper and lower bounds as follows.
	For the upper bound, similar to \cite{rajput2020closing}'s Lemma 12 and \cite{cha2023tighter}'s Lemma B.5, we have
	\begin{align*}
		\E\left[\abs{\gA_{m,k}}\right]
		&= \E\left[\abs{\sum_{i=1}^{m-1}\tau_i + a_k \tau_m}\right]\\
		&\leq \sqrt{\E\left[\left(\sum_{i=1}^{m-1}\tau_i + a_k \tau_m\right)^2\right]}\\
		&\leq \sqrt{\sum_{i=1}^{m-1}\E[\left(\tau_i\right)^2]+2\sum_{i<j\leq m-1}\E[\tau_i\tau_j]+a_k^2+2a_k\sum_{i=1}^{m-1}\E(\tau_i\tau_m) }\\
		&\leq \sqrt{m-1+a_k^2} \tag{$\because \E[\tau_i\tau_j]<0$, $\forall i\neq j$} .
	\end{align*}
	For the lower bound, suppose that $3 \leq m\leq \frac{M}{2}+1$ (i.e., $2 \leq m-1\leq \frac{M}{2}$ and $M\geq 4$). Then,
	\begin{align*}
		\E\left[\abs{\gA_{m,k}}\right]
		&= \E\left[\abs{\gE_{m-1} + a_k \tau_m}\right]\\
		&= \E\left[\abs{\gE_{m-1} + a_k \tau_m}\mid \gE_{m-1}\tau_m\geq 0 \right]\cdot \Pr(\gE_{m-1}\tau_m\geq 0) \\
		&\quad+ \E\left[\abs{\gE_{m-1} + a_k \tau_m}\mid \gE_{m-1}\tau_m< 0 \right]\cdot \Pr(\gE_{m-1}\tau_m< 0)\\
		&= \E\left[\abs{\gE_{m-1}}\mid \gE_{m-1}\tau_m\geq 0 \right]\cdot \Pr(\gE_{m-1}\tau_m\geq 0) + a_k \cdot \Pr(\gE_{m-1}\tau_m\geq 0) \\
		&\quad+ \E\left[\abs{\gE_{m-1}}\mid \gE_{m-1}\tau_m< 0 \right]\cdot \Pr(\gE_{m-1}\tau_m< 0) - a_k\cdot \Pr(\gE_{m-1}\tau_m< 0)\\
		&=\E\left[\abs{\gE_{m-1}}\right] + a_k\Pr(\gE_{m-1}\tau_m\geq 0)- a_k\Pr(\gE_{m-1}\tau_m< 0)\\
		&=\E\left[\abs{\gE_{m-1}}\right] + a_k\Pr(\gE_{m-1}\tau_m=0) + a_k\Pr(\gE_{m-1}\tau_m>0)- a_k\Pr(\gE_{m-1}\tau_m< 0)\\
		&=\E\left[\abs{\gE_{m-1}}\right] + a_k\Pr(\gE_{m-1}\tau_m=0) \\
		&\quad+a_k\sum_{i=1}^{m-1}\Pr(\gE_{m-1}\tau_m>0\mid \abs{\gE_{m-1}=i})\Pr(\abs{\gE_{m-1}=i})\\
		&\quad- a_k\sum_{i=1}^{m-1}\Pr(\gE_{m-1}\tau_m<0\mid \abs{\gE_{m-1}=i})\Pr(\abs{\gE_{m-1}=i}) .
	\end{align*}
	Since $\Pr(\gE_{m-1}\tau_m>0\mid \abs{\gE_{m-1}=i}) = \frac{(M-m+1-i)/2}{M-m+1}$ and $\Pr(\gE_{m-1}\tau_m<0\mid \abs{\gE_{m-1}=i}) = \frac{(M-m+1+i)/2}{M-m+1}$, we get
	\begin{align*}
		\E\left[\abs{\gA_{m,k}}\right] &= \E\left[\abs{\gE_{m-1}}\right] +a_k\Pr(\gE_{m-1}\tau_m=0) - a_k \cdot \frac{1}{M-m+1} \sum_{i=1}^{m-1} i\cdot \Pr(\abs{\gE_{m-1}}=i)\\
		&=\left(1-\frac{a_k}{M-m+1}\right)\E\left[\abs{\gE_{m-1}}\right] + a_k\Pr(\gE_{m-1}\tau_m=0) \\
		&\geq \left(1-\frac{1}{M-m+1}\right)\E\left[\abs{\gE_{m-1}}\right]\tag{$\because 0\leq a_k < 1$}\\
		&\geq \left(1-\frac{2}{M}\right)\E\left[\abs{\gE_{m-1}}\right] \tag{$\because m-1\leq \frac{M}{2}$} \\
		&\geq \frac{1}{2}\E\left[\abs{\gE_{m-1}}\right], \tag{$\because M\geq 4$}
	\end{align*}
	where we use $\E\left[\abs{\gE_{m-1}}\right] = \sum_{i=1}^{m-1}i\cdot \Pr(\abs{\gE_{m-1}}=i)$ in the second equality.
	
	For the convenience of subsequent proofs, we need a tighter lower bound for $\E\left[\abs{\gE_{m-1}}\right]$, which can be achieved with a few modifications to \cite{cha2023tighter}'s Lemma B.5.
	\begin{mdframed}[roundcorner=1mm]
		Let us start from Ineq.~(21) in \cite{cha2023tighter}'s Lemma B.5.
		
		For the even integers $m\geq 4$, we have
		\begin{align*}
			\E\left[\abs{\gE_m}\right]
			&\geq \left(\frac{M-2}{M-1} \cdot \frac{\sqrt{m}}{\sqrt{m-2}}\right) \cdot \left(\frac{M}{M+2m}\right) \cdot \left(\frac{\sqrt{m}}{5}\right)\\
			&= \left(\frac{M-2}{M-1} \cdot \frac{{\sqrt{m-1}}}{\sqrt{m-2}}\right) \cdot \left(\frac{M}{M+2m}\right) \cdot \left(\frac{{\sqrt{m+1}}}{5}\right)
		\end{align*}
		It can be shown that $\frac{M-2}{M-1} \cdot \frac{\sqrt{m-1}}{\sqrt{m-2}}\geq 1 \iff (2M-3) m\leq M^2-2$. Since $m\leq \frac{M}{2}$ (Note that the constraint $m\leq \frac{M}{2}$ is for $\E\left[\abs{\gE_m}\right]$ in \cite{cha2023tighter}'s Lemma B.5), it follows that $(2M-3) m \leq M^2 - \frac{3}{2}M \leq M^2 -2$ when $M \geq 8$. Then, we can get $\E\left[\gE_m\right]\geq \frac{\sqrt{m+1}}{10}$ for $4\leq m\leq \frac{M}{2}$. 
		
		For the even integers $m=2$, we have $\E\left[\abs{\gE_2}\right] = 1-\frac{1}{M-1} \geq \frac{2}{3}\geq \frac{\sqrt{3}}{10}$ ($M\geq 2m \geq 4$). Now, we complete the proof for the even cases $2\leq m \leq \frac{M}{2}$.
		
		In fact, the lower bound holds in odd cases $1\leq m\leq \frac{M}{2}$ in \cite{cha2023tighter}'s Lemma B.5 without any modification (see their last inequality $\E\left[\gE_m\right]\geq \frac{M-m}{M-m-1} \cdot \frac{\sqrt{m+1}}{10}\geq \frac{\sqrt{m+1}}{10}$). We can also prove it with the same steps as \cite{cha2023tighter}'s Lemma B.5.
		
		Note that the lower bound does not hold in the last case $m=0$. 
		
		As a summary, we can get a tighter bound $\E\left[\abs{\gE_{m}}\right] \geq \frac{\sqrt{m+1}}{10}$ for $1\leq m \leq \frac{M}{2}$.
	\end{mdframed}
	
	Returning to $\E\left[\abs{\gA_{m,k}}\right]$ and using the tighter lower bound for $\E\left[\abs{\gE_{m-1}}\right]$, we have
	\begin{align*}
		\E\left[\abs{\gA_{m,k}}\right] \geq \frac{1}{2}\E\left[\abs{\gE_{m-1}}\right] \geq \frac{\sqrt{m}}{20} .
	\end{align*}
	The above lower bound does not hold for $m=1$ since it requires the false argument $\gE_0=0\geq \frac{\sqrt{1}}{10}$. To incorporate the case where $m=1$, we consider a looser bound
	\begin{align*}
		\E\left[\abs{\gA_{m,k}}\right] \geq \frac{\sqrt{m}}{20} \geq \frac{1}{20}\sqrt{m-1+a_k^2} .
	\end{align*}
	At last, let us verify whether this lower bound holds for the remaining cases where $m=1,2$. When $m=1$, $\E\left[\abs{\gA_{1,k}}\right] = a_k \geq \frac{a_k}{20\sqrt{2}}$. When $m=2$ ($M\geq 2m \geq 4$), it follows that
	\begin{align*}
		\E\left[\abs{\gA_{2,k}}\right] = \E\left[\abs{\tau_1 + a_k \tau_2}\right]
		&= (1+a_k) \Pr(\tau_1\tau_2=+1) + (1-a_k) \Pr(\tau_1\tau_2=-1)\\
		&= (1+a_k)\cdot \frac{2\cdot {\frac{M}{2} \choose 2}}{{M \choose 2}} + (1-a_k) \cdot \frac{{\frac{M}{2} \choose 1}\cdot {\frac{M}{2} \choose 1}}{{M \choose 2}}\\
		&= 1-\frac{a_k}{M-1}.
	\end{align*}
	Here we adopt $M \geq 4$ for $m=2$, and then $\E\left[\abs{\gA_{2,k}}\right] =1-\frac{a_k}{M-1} \geq 1-\frac{a_k}{2} \geq \frac{1}{2} = \frac{1}{2\sqrt{2}}\sqrt{1+1^2} \geq \frac{1}{20}\sqrt{1+a_k^2}$. Now we complete the proof of the lower bound of $\E\left[\abs{\gA_{2,k}}\right]$, which holds for $1<m\leq \frac{M}{2}+1$ and $M\geq 4$.
\end{proof}

\begin{lemma}
	\label{lem:lower bound:partial sum probability bounds}
	Let $\tau=(\tau_1, \tau_2, \ldots, \tau_M)$ be a random permutation of $\frac{M}{2}$ $+1$'s and $\frac{M}{2}$ $-1$'s. Let $\gA_{m,k} \coloneqq \gE_{m-1} + a_k \tau_m$, where $\gE_{m-1} = \sum_{i=1}^{m-1}\tau_{i}$ and $a_k = k/K$ ($0\leq k\leq K-1$). The probability distribution of $\gA_{m,k}$ is symmetric with respect to $0$. And For $1\leq m\leq M$ and $0\leq k \leq K-1$ (excluding the case $m=1, k=0$), it holds that
	\begin{align*}
		\frac{1}{6}\leq \Pr(\gA_{m,k}>0) = \Pr(\gA_{m,k}<0) \leq \frac{1}{2}.
	\end{align*}
\end{lemma}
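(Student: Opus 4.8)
The plan is to prove the symmetry first, extract the upper bound and the identity $\Pr(\gA_{m,k}>0)=\Pr(\gA_{m,k}<0)$ as free consequences, and then reduce the whole lower bound to a single estimate on the atom at $0$.

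\textbf{Symmetry and the easy half.} The set of permutations of $\frac{M}{2}$ $+1$'s and $\frac{M}{2}$ $-1$'s is invariant under the sign-flip $\tau \mapsto -\tau$, and this map is measure-preserving (it is a bijection of equally likely arrangements). Since it sends $\gA_{m,k} = \gE_{m-1} + a_k\tau_m$ to $-\gA_{m,k}$, the random variables $\gA_{m,k}$ and $-\gA_{m,k}$ have the same law, which is exactly the claimed symmetry about $0$. In particular $\Pr(\gA_{m,k}>0)=\Pr(\gA_{m,k}<0)$, and since these sum to $1-\Pr(\gA_{m,k}=0)\le 1$ we get at once the upper bound $\Pr(\gA_{m,k}>0)\le \tfrac12$ together with the formula $\Pr(\gA_{m,k}>0)=\tfrac12\bigl(1-\Pr(\gA_{m,k}=0)\bigr)$. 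Thus it remains only to show $\Pr(\gA_{m,k}=0)\le \tfrac23$, which yields $\Pr(\gA_{m,k}>0)\ge \tfrac16$.

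\textbf{Reduction of the atom at $0$.} I would split on $k$. If $k\ge 1$ then $a_k=k/K\in(0,1)$, so $\gA_{m,k}=\gE_{m-1}\pm a_k$ is the sum of the integer $\gE_{m-1}$ and a non-integer, hence cannot vanish and $\Pr(\gA_{m,k}=0)=0$. The only substantive case is $k=0$ (with $m\ge 2$, the case $m=1,k=0$ being excluded), where $\gA_{m,0}=\gE_{m-1}$. When $m-1$ is odd, $\gE_{m-1}$ is odd and never $0$; when $m-1=2j$ is even, writing $n:=M/2$ and using sampling without replacement gives the hypergeometric value
\[
\Pr(\gE_{2j}=0)=\frac{\binom{n}{j}^2}{\binom{2n}{2j}}=:p(j),\qquad 1\le j\le n-1 .
\]

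\textbf{The core bound.} I would show $p(j)\le \frac{n}{2n-1}=\frac{M}{2(M-1)}\le \frac23$, where the last step uses $M\ge 4$. Computing the ratio,
\[
\frac{p(j+1)}{p(j)}=\frac{(n-j)(2j+1)}{(j+1)(2n-2j-1)},
\]
whose numerator minus denominator simplifies to $2j+1-n$; hence $p(j+1)\le p(j)$ exactly when $j\le (n-1)/2$. So $p$ decreases and then increases over $1\le j\le n-1$ and is maximized at the endpoints, and a direct evaluation gives $p(1)=p(n-1)=\frac{n}{2n-1}$ (the equality $p(j)=p(n-j)$ also follows from exchangeability: the first $2j$ entries being balanced is equidistributed with the last $2j$ being balanced). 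Combining the cases gives $\Pr(\gA_{m,k}=0)\le \frac{M}{2(M-1)}\le \frac23$ and therefore $\Pr(\gA_{m,k}>0)\ge \frac16$, with equality at $M=4,\ m=3,\ k=0$.

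\textbf{Main obstacle.} The symmetry and the $k\ge1$ case are routine; the real work is the uniform-in-$m$ control $\Pr(\gE_{m-1}=0)\le\frac23$. Bounding individual small cases is trivial, but handling all even $m-1$ simultaneously requires the unimodality argument via the sign of $2j+1-n$ in the ratio $p(j+1)/p(j)$ together with the endpoint identity $p(1)=p(n-1)$. This is precisely where the hypothesis $M\ge 4$ is needed and where the constant $\frac16$ becomes sharp.
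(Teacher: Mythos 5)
Your proof is correct, and for the crucial part it is genuinely more self-contained than the paper's. The paper uses the same split on $k$: for $0<k\le K-1$ it tabulates the four possible values $-p-a_k,\,-p+a_k,\,p-a_k,\,p+a_k$ with their probabilities to conclude symmetry and the absence of an atom at $0$ (so both tails are exactly $\tfrac12$); but for the key case $k=0$, $2\le m\le M$, the paper proves nothing itself and simply cites Lemma~14 of \citet{yun2022minibatch} for $\Pr(\gE_{m-1}>0)\ge\tfrac16$. You instead (i) obtain symmetry in one stroke from the sign-flip involution $\tau\mapsto-\tau$, which covers all $(m,k)$ uniformly and is cleaner than the table, and (ii) prove the $k=0$ case from scratch via the hypergeometric identity $\Pr(\gE_{2j}=0)=\binom{n}{j}^2/\binom{2n}{2j}$, the ratio $p(j+1)/p(j)=\frac{(n-j)(2j+1)}{(j+1)(2n-2j-1)}$ whose numerator minus denominator is $2j+1-n$, and the resulting unimodality with endpoint maximum $p(1)=p(n-1)=\frac{n}{2n-1}\le\tfrac23$ for $n\ge2$; I checked all of these computations, including the sharpness $\Pr(\gA_{3,0}>0)=\tfrac16$ at $M=4$, and they are right. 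The paper's route buys brevity; yours buys a complete argument with an explicit sharp constant. One small quibble: your closing claim that the hypothesis $M\ge4$ "is needed" is overstated — when the even case $m-1=2j$, $1\le j\le n-1$ is nonempty you automatically have $n\ge2$, so your argument in fact also covers $M=2$ (where that case is vacuous and every nonexcluded $\gA_{m,k}$ has no atom at $0$).
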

\begin{proof}
	When $m=1$ and $k=0$, $\gA_{1,0} = \gE_{0} =0 $ (defined). When $m=M+1$ and $k=0$, $\gA_{M+1,0} = 0$. In these two cases, $\Pr(\gA_{m,k}=0) = 1$ and $\Pr(\gA_{m,k}<0)=\Pr(\gA_{m,k}>0) = 0$.
	
	When $2 \leq m \leq M$ and $k=0$, we get $\Pr(\gA_{m,k}>0) = \Pr(\gA_{m,k}<0)\geq \frac{1}{6}$ according to \cite{yun2022minibatch}' Lemma 14.
	
	When $1 \leq m \leq M$ and $0<k\leq K-1$, similarly, we can first prove that $\gA_{m,k}$ is symmetric and then compute $\Pr(\gA_{m,k}=0)$. As shown in Table~\ref{tab:probability distribution table}, we conclude all cases into four categories $\gA_{m,k} = -p-a_k$, $\gA_{m,k} = -p+a_k$, $\gA_{m,k} = p-a_k$ and $\gA_{m,k} = p+a_k$. We can get that the probability distribution of $\gA_{m,k}$ is symmetric. Furthermore, since $0<a_k<1$, we can get that $\Pr(\gA_{m,k}=0) = 0$, and thus $\Pr(\gA_{m,k}>0)=\Pr(\gA_{m,k}<0) = \frac{1}{2}$.
	\begin{table}[h]
		\centering
			\begin{tabular}{l|l}
				\toprule
				Value &Probability \\\midrule
				$-p-a_k$ &$\frac{{\frac{M}{2} \choose \frac{m-1-p}{2}} {\frac{M}{2} \choose \frac{m-1+p}{2}}}{{M \choose m-1}} \cdot \frac{\frac{M-m+1-p}{2}}{M-m+1}$\\\midrule
				
				$-p+a_k$ &$\frac{{\frac{M}{2} \choose \frac{m-1-p}{2}} {\frac{M}{2} \choose \frac{m-1+p}{2}}}{{M \choose m-1}} \cdot \frac{\frac{M-m+1+p}{2}}{M-m+1}$\\\midrule
				$p-a_k$ &$\frac{{\frac{M}{2} \choose \frac{m-1-p}{2}} {\frac{M}{2} \choose \frac{m-1+p}{2}}}{{M \choose m-1}} \cdot \frac{\frac{M-m+1+p}{2}}{M-m+1}$\\\midrule
				$p+a_k$ &$\frac{{\frac{M}{2} \choose \frac{m-1-p}{2}} {\frac{M}{2} \choose \frac{m-1+p}{2}}}{{M \choose m-1}} \cdot \frac{\frac{M-m+1-p}{2}}{M-m+1}$\\
				\bottomrule
		\end{tabular}
		\caption{Probability distribution of $\gA_{m,k}$.}
		\label{tab:probability distribution table}
	\end{table}
	
	In summary, we have proved that $\Pr(\gA_{m,k}>0) = \Pr(\gA_{m,k}<0) \geq \frac{1}{6}$ for $1\leq m\leq M$ and $0\leq k \leq K-1$ (except the case $m=1, k=0$). Note that for all cases, the probability distribution is symmetric, we can get $\Pr(\gA_{m,k}>0) = \Pr(\gA_{m,k}<0) \leq \frac{1}{2}$.
\end{proof}

\begin{lemma}
	\label{lem:lower bound:diff current-initial parameter upper bound}
	Supposing that $x_{1,0}\geq 0$, ${\lambda_0}/{\lambda}\geq {1010}$ and $\eta \leq \frac{1}{101\lambda MK}$, then for $1\leq m \leq M$, $0\leq k\leq K-1$, we have
	\begin{align*}
		\E\left[\abs{x_{m,k}- x_{1,0}}\right] \leq \frac{1}{100} x_{1,0} + \frac{101}{100}K\eta\zeta\sqrt{m-1+a_k^2}
	\end{align*}
\end{lemma}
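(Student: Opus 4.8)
The plan is to prove the bound by \emph{strong induction on the flattened step index} $n \coloneqq \gB_{m,k} = (m-1)K+k$, which ranges over $0,1,\dots,MK-1$ and totally orders the iterates in the order they are produced within a round. The base case $n=0$ (i.e.\ $m=1,k=0$) is immediate, since $x_{1,0}-x_{1,0}=0$ and $m-1+a_k^2=0$. For the inductive step I start from the exact unrolling \eqref{eq:proof:lower bound-heterogeneity:curr-init}, which writes $x_{m,k}-x_{1,0}$ as a \emph{curvature} part $-\eta\sum_{i=0}^{n-1}g(x_{b_1(i),b_2(i)})$, where $g(x)\coloneqq(\lambda_0\mathbbm{1}_{x<0}+\lambda\mathbbm{1}_{x\geq 0})x$, plus a \emph{noise} part $-K\eta\zeta\gA_{m,k}$. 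Taking absolute values and expectations and applying the triangle inequality separates the two contributions:
\[
\E\abs{x_{m,k}-x_{1,0}} \;\le\; \eta\sum_{i=0}^{n-1}\E\abs{g(x_{b_1(i),b_2(i)})} \;+\; K\eta\zeta\,\E\abs{\gA_{m,k}}.
\]

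The noise part is handled directly by the upper bound $\E\abs{\gA_{m,k}}\le\sqrt{m-1+a_k^2}$ of Lemma~\ref{lem:lower bound:partial sum absolute value bounds}, which already produces the leading term $K\eta\zeta\sqrt{m-1+a_k^2}$. For the curvature part I use $\abs{g(x)}\le\lambda_0\abs{x}\le\lambda_0\bigl(\abs{x-x_{1,0}}+x_{1,0}\bigr)$ (recall $x_{1,0}\ge 0$) and feed in the induction hypothesis for each earlier iterate, together with the elementary estimates $\sqrt{(b_1(i)-1)+b_2(i)^2/K^2}\le\sqrt{i/K}$ and $\sum_{i=0}^{n-1}\sqrt{i/K}\le \tfrac23 n^{3/2}/\sqrt{K}$. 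Using the smallness of $\lambda_0\eta MK$ collapses the $x_{1,0}$-proportional terms to exactly $\tfrac{1}{100}x_{1,0}$, while the residual curvature-feedback term has the form $\bigl(\tfrac23\lambda_0\eta MK\bigr)\tfrac{101}{100}K\eta\zeta\sqrt{n/K}$; bounding $\sqrt{n/K}\le\sqrt2\,\sqrt{m-1+a_k^2}$ for $m\ge 2$ shows it is at most $\tfrac{1}{100}K\eta\zeta\sqrt{m-1+a_k^2}$, so the two noise contributions combine into the claimed coefficient $\tfrac{101}{100}$ and the induction closes with the stated constants. The case $m=1$ is checked separately, where the curvature sum runs over at most $K-1$ intra-client iterates and is negligible.

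The quantitative input this argument requires is a smallness bound on $\lambda_0\eta MK$: the crude estimate $\abs{g(x)}\le\lambda_0\abs{x}$ pays the large constant $\lambda_0$ on \emph{every} step and hence needs $\lambda_0\eta MK\le\tfrac{1}{101}$, which holds throughout the regime $\eta\le\tfrac{1}{101\lambda_0 MK}$ in which the lemma is invoked (Table~\ref{tab:lower bounds:heterogeneity}). \textbf{The main obstacle is precisely reconciling the two curvatures $\lambda$ and $\lambda_0$.} The true dynamics only feel $\lambda_0$ during negative excursions of $x_{m,k}$: for $x\ge 0$ one has $g(x)=\lambda x$, so the map $x\mapsto x-\eta g(x)$ contracts toward $0$ with the \emph{small} rate $\lambda$, and iterates started at $x_{1,0}\ge 0$ only cross into the $\lambda_0$-regime because of the $\pm\zeta$ noise. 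The delicate step — carried out exactly as in \cite{cha2023tighter}'s Lemma~B.7, with the numerical factors adapted to the ratio $\lambda_0/\lambda\ge 1010$ — is to show that the expected size of these negative excursions is itself controlled by the noise term, so that the $\lambda_0$-amplified contribution is absorbed into the slack between the coefficients $1$ and $\tfrac{101}{100}$ rather than degrading the $\lambda$-governed drift. This sign/restoring-force structure is what ultimately lets the final constants $\tfrac{1}{100}$ and $\tfrac{101}{100}$ balance the recursion.
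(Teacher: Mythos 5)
Your induction is correct and is essentially the paper's own argument: unroll the round via Eq.~\eqref{eq:proof:lower bound-heterogeneity:curr-init}, apply the triangle inequality, bound the noise part by $K\eta\zeta\,\E\abs{\gA_{m,k}}\leq K\eta\zeta\sqrt{m-1+a_k^2}$ via Lemma~\ref{lem:lower bound:partial sum absolute value bounds}, and charge every curvature term with the \emph{larger} of the two constants so that the $x_{1,0}$-coefficients close at $\frac{1}{101}\left(1+\frac{1}{100}\right)=\frac{1}{100}$. The only real difference is bookkeeping: the paper introduces the monotone majorizing sequence $h_{m,k}$ and solves the self-bounding inequality $h_{m,k}\leq \lambda\eta\gB_{m,k}h_{m,k}+\cdots$, producing the factors $\frac{\lambda\eta\gB_{m,k}}{1-\lambda\eta\gB_{m,k}}\leq\frac{1}{100}$ and $\frac{1}{1-\lambda\eta\gB_{m,k}}\leq\frac{101}{100}$, whereas you substitute the claimed bound into the history sum and control $\sum_{i<n}\sqrt{i/K}\leq\frac{2}{3}n^{3/2}/\sqrt{K}$ explicitly (with $m=1$ checked separately); both routes close with the same constants.

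Two caveats. First, you prove the lemma under $\lambda_0\eta MK\leq\frac{1}{101}$, i.e.\ $\eta\leq\frac{1}{101\lambda_0 MK}$, which is stronger than the hypothesis $\eta\leq\frac{1}{101\lambda MK}$ as literally stated. This is not a defect relative to the paper: the paper's proof quotes Eq.~\eqref{eq:proof:lower bound-heterogeneity:curr-init} with $\lambda$ and $\lambda_0$ interchanged (it even asserts ``$\lambda_0\leq\lambda$'', contradicting $\lambda_0/\lambda\geq 1010$), so its condition ``$\lambda MK\eta\leq\frac{1}{101}$'' is, after undoing that swap, exactly your max-curvature condition; and where the lemma is invoked, the main proof indeed uses $\lambda_0 MK\eta\leq\frac{1}{101}$, which is the regime of Table~\ref{tab:lower bounds:heterogeneity}. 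So your proof establishes precisely what the paper's proof establishes, and you were right to flag the hypothesis. Second, your closing paragraph about controlling negative excursions ``as in \citet{cha2023tighter}'s Lemma~B.7'' should be cut: your induction already closes without it, the paper does nothing of the sort in this lemma (the sign-sensitive treatment of $\lambda_0$ versus $\lambda$ is used elsewhere, e.g.\ in Ineq.~\eqref{eq:proof:lower bound-heterogeneity:local gradient stricter bound-1}, not here), and as written it defers an unexecuted step to a citation --- it would only become load-bearing, and then a genuine gap, if one insisted on proving the lemma under the literal, weaker hypothesis involving the small curvature $\lambda$.
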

\begin{proof}
	According to Eq.~\eqref{eq:proof:lower bound-heterogeneity:curr-init},
	\begin{align*}
		x_{m,k} = x_{1,0} -\eta \sum_{i=0}^{\gB_{m,k}-1} \left((\lambda\mathbbm{1}_{x_{b_1(i),b_2(i)} <0} + \lambda_0 \mathbbm{1}_{x_{b_1(i),b_2(i)} \geq 0}) x_{b_1(i),b_2(i)}  \right) - K\eta\zeta\gA_{m,k}
	\end{align*}
	(we have dropped the superscript $r$), we can get
	\begin{align*}
		\E\left[\abs{x_{m,k}- x_{1,0}}\right] &\leq \eta \sum_{i=0}^{\gB_{m,k}-1} \E\left[\abs{(\lambda\mathbbm{1}_{x_{b_1(i),b_2(i)} <0} + \lambda_0 \mathbbm{1}_{x_{b_1(i),b_2(i)} \geq 0}) x_{b_1(i),b_2(i)}}  \right] + K\eta\zeta\E\left[\abs{\gA_{m,k}}\right]\\
		&\leq \lambda\eta \sum_{i=0}^{\gB_{m,k}-1} \E\left[\abs{x_{b_1(i),b_2(i)}}  \right] + K\eta\zeta\E\left[\abs{\gA_{m,k}}\right] \tag{$\because \lambda_0 \leq \lambda$}\\
		&\leq \lambda\eta \sum_{i=0}^{\gB_{m,k}-1} \E\left[x_{1,0}\right] + \lambda\eta \sum_{i=0}^{\gB_{m,k}-1} \E\left[\abs{x_{b_1(i),b_2(i)}-x_{1,0}}  \right] + K\eta\zeta\E\left[\abs{\gA_{m,k}}\right]. 
	\end{align*}
	For any integers $m\geq 1$, $k\geq 0$ satisfying $(m-1)K+k \leq MK$, using Lemma~\ref{lem:lower bound:partial sum absolute value bounds}, $\E\left[\abs{\gA_{m,k}}\right]\leq \sqrt{m-1+a_k^2}$, we can get
	\begin{align*}
		\E\left[\abs{x_{m,k}- x_{1,0}}\right]\leq \lambda\eta \gB_{m,k}\E\left[x_{1,0}\right] + \lambda\eta \sum_{i=0}^{\gB_{m,k}-1} \E\left[\abs{x_{b_1(i),b_2(i)}-x_{1,0}}  \right] + K\eta\zeta\sqrt{m-1+a_k^2}.
	\end{align*}
	Let $h_{m,k} \coloneqq \lambda\eta\gB_{m,k}\E\left[x_{1,0}\right] + \lambda\eta\sum_{i=0}^{\gB_{m,k}-1}h_{b_1(i),b_2(i)} + K\eta\zeta\sqrt{m-1+a_k^2}$ and $h_{1,0} = 0$. It can be verified that the sequence $h_{1,0}, \ldots,h_{m,0}, h_{m,1},h_{m,2},\ldots, h_{m,K-1},\ h_{m+1,0}, \ldots, h_{M+1,0}$ is monotonically increasing.
	When $k=0$, then
	\begin{align*}
		h_{m,0} - h_{m-1,K-1} =\lambda\eta \E\left[x_{1,0}\right] + \lambda\eta h_{m-1,K-1} + K\eta\zeta\left(\sqrt{m-1}-\sqrt{m-2+a_{K-1}^2}\right) > 0.
	\end{align*}
	When $1\leq k\leq K-1$, then
	\begin{align*}
		h_{m,k} - h_{m,k-1} = \lambda\eta \E\left[x_{1,0}\right] + \lambda\eta h_{m,k-1}+K\eta\zeta\left(\sqrt{m-1+a_{k}^2} - \sqrt{m-1+a_{k-1}^2}\right) > 0.
	\end{align*}
	This means that $h_{b_1(i),b_2(i)} < h_{m,k}$ for any integer $i < \gB_{m,k}$. So we can get
	\begin{align*}
		&h_{m,k}\leq \lambda\eta \gB_{m,k}\E\left[x_{1,0}\right] + \lambda\eta \gB_{m,k} h_{m,k} + K\eta\zeta\sqrt{m-1+a_k^2}\\
		\implies &h_{m,k} \leq \frac{\lambda\eta\gB_{m,k}\E\left[x_{1,0}\right]}{1-\lambda\eta\gB_{m,k}} +\frac{K\eta\zeta\sqrt{m-1+a_k^2}}{1-\lambda\eta\gB_{m,k}} \tag{$\because\lambda \eta \gB_{m,k} \leq \lambda \eta \gB_{M,K} =\lambda MK\eta\leq \frac{1}{101}$ }
	\end{align*}
	By mathematical induction, we can get $\E\left[\abs{x_{m,k}- x_{1,0}}\right] \leq h_{m,k} \leq \frac{\lambda\eta\gB_{m,k}x_{1,0}}{1-\lambda\eta\gB_{m,k}} +\frac{K\eta\zeta\sqrt{m-1+a_k^2}}{1-\lambda\eta\gB_{m,k}}$. When $m=1$ and $k=0$, $\E\left[\abs{x_{1,0}- x_{1,0}}\right] = h_{1,0} = 0$. Then, suppose that $\E\left[\abs{x_{i,j}- x_{1,0}}\right]\leq h_{i,j}$ for all $i,j$ satisfying $i(K-1)+j \leq m(K-1)+k$.
	\begin{itemize}
		\item When $k=0$, it follows that
		\begin{align*}
			\E\left[\abs{x_{m,0}- x_{1,0}}\right] 
			&\leq \lambda\eta \gB_{m,0}\E\left[x_{1,0}\right] + \lambda\eta\sum_{i=0}^{\gB_{m,0}-1}\E\left[\abs{x_{b_1(i),b_2(i)}-x_{1,0}}\right] + K\eta\zeta\sqrt{m-1} \\
			&\leq \lambda\eta \gB_{m,0}\E\left[x_{1,0}\right] + \lambda\eta\sum_{i=0}^{\gB_{m,0}-1}h_{b_1(i),b_2(i)} + K\eta\zeta\sqrt{m-1} \leq h_{m,0}.
		\end{align*}
		\item When $1\leq k \leq K-1$, it follows that
		\begin{align*}
			\E\left[\abs{x_{m,k}- x_{1,0}}\right] &\leq \lambda\eta \gB_{m,k}\E\left[x_{1,0}\right] + \lambda\eta\sum_{i=0}^{\gB_{m,k}-1}\E\left[\abs{x_{b_1(i),b_2(i)}-x_{1,0}}\right] + K\eta\zeta\sqrt{m+a_{k}^2} \\
			&\leq \lambda\eta \gB_{m,k}\E\left[x_{1,0}\right] + \lambda\eta\sum_{i=0}^{\gB_{m,k}-1}h_{b_1(i),b_2(i)} + K\eta\zeta\sqrt{m+a_{k}^2} \leq h_{m,k}.
		\end{align*}
	\end{itemize}
	Then considering that ${\lambda_0}/{\lambda}\geq {1010}$ and $\lambda \eta \gB_{m,k} \leq \lambda MK\eta\leq \frac{1}{101}$ (it implies $\frac{\lambda\eta\gB_{m,k}}{1-\lambda\eta\gB_{m,k}}\leq \frac{1}{100}$ and that $\frac{1}{1-\lambda\eta\gB_{m,k}}\leq \frac{101}{100}$), we get $\E\left[\abs{x_{m,k}- x_{1,0}}\right] \leq \frac{1}{100} x_{1,0} + \frac{101}{100}K\eta\zeta\sqrt{m-1+a_k^2}$.
\end{proof}

\section{The Mechanism of ``Two Learning Rates'' in SFL}
\label{app:two-step-sizes}

This section shows that the mechanism of ``two learning rates'' can also be applied to SFL. In theory, it can achieve a similar improvement to (almost the same as) that in PFL \citet{karimireddy2020scaffold}. Next, we show how to use the mechanism of ``two learning rates'' in SFL, and compare it with that of PFL. 

\subsection{The Mechanism of ``Two Learning Rates'' in SFL}

The mechanism of ``two learning rates'' includes two learning rates, the global/server learning rate and the local/client learning rate. The client learning rate is on the client-side for local updates; the server learning rate is on the server-side for global updates. The modified algorithms are provided in Algorithm~\ref{algorithm1:two-step-size} and Algorithm~\ref{algorithm2:two-step-size}. The modified lines are marked in a pink color box. Here $\gamma$ denotes the server learning rate; $\eta$ denotes the client learning rate. For SFL, one simple and practical implementation is illustrated in Figure~\ref{fig:SFL2}.
\begin{figure}[h]
	\centering
	\includegraphics[width=0.9\linewidth]{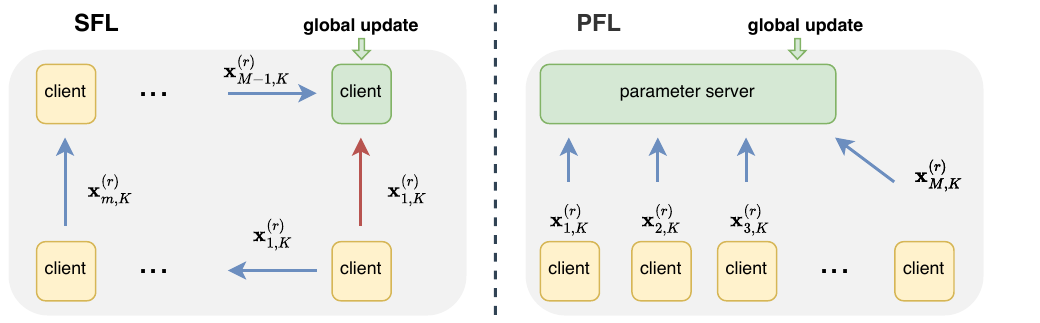}
	\caption{The mechanism of ``two learning rates'' in SFL and PFL. The global updates of SFL are performed at the last client. It performs the global updates with its parameters $\rvx_{M,K}^{(r)}$ and the initial parameters $\rvx^{(r)}$ received from the first client.}
	\label{fig:SFL2}
\end{figure}

\noindent
\begin{minipage}[t]{0.5\linewidth}
	\DecMargin{0.5em}
	\begin{algorithm}[H]
		\DontPrintSemicolon
		\caption{Sequential FL}
		\label{algorithm1:two-step-size}
		\For{$r = 0, \ldots, R-1$}{
			Sample a permutation $\pi_1, \pi_2, \ldots, \pi_{M}$ of $\{1,2,\ldots,M\}$\;
			\For{$m = 1,\ldots,M$ {\bf\textcolor{red}{in sequence}}}{
				$\rvx_{m,0}^{(r)} =
				\begin{cases}
					\rvx^{(r)}\,, &m=1\\
					\rvx_{m-1,K}^{(r)}\,, &m>1
				\end{cases}$\;
				\For{$k = 0,\ldots, K-1$}{
					$\rvx_{m,k+1}^{(r)} = \rvx_{m,k}^{(r)} - \eta \rvg_{\pi_m, k}^{(r)}$\;
				}
			}
			\colorbox{pink!50}{$\rvx^{(r+1)} = \rvx^{(r)} - \gamma \left(\rvx^{(r)} - \rvx_{M,K}^{(r)}\right)$}
		}
	\end{algorithm}
	\IncMargin{0.5em}
\end{minipage}
\hfill
\begin{minipage}[t]{0.5\linewidth}
	\DecMargin{0.5em}
	\begin{algorithm}[H]
		\DontPrintSemicolon
		\caption{Parallel FL}
		\label{algorithm2:two-step-size}
		\For{$r = 0,\ldots, R-1$}{
			\For{$m = 1,\ldots,M$ {\bf \textcolor{red}{in parallel}}}{
				$\rvx_{m,0}^{(r)} = \rvx^{(r)}$\;
				\For{$k = 0,\ldots, K-1$}{
					$\rvx_{m,k+1}^{(r)} = \rvx_{m,k}^{(r)} - \eta \rvg_{m,k}^{(r)}$\;
				}
			}
			\colorbox{pink!50}{$\rvx^{(r+1)} = \rvx^{(r)} - \gamma \left(\rvx^{(r)} - \frac{1}{M} \sum_{m=1}^M \rvx_{m,K}^{(r)}\right)$}
			
		}
	\end{algorithm}
	\IncMargin{0.5em}
\end{minipage}

According to the new update rule of SFL, we can get the upper bounds of SFL and PFL in {Theorem~\ref{thm:SFL-non-convex-server-step-size}} and {Theorem~\ref{thm:PFL-non-convex-server-step-size}}. We only consider the non-convex case for convenience. We see that the server learning rate $\gamma$ has the similar effect on the upper bounds for SFL to those for PFL. The advantage of $\frac{1}{M^{1/3}}$ of SFL still exists.

\begin{theorem}[SFL, non-convex]\label{thm:SFL-non-convex-server-step-size}
Under the same conditions as those of the non-convex case in Theorem 1, there exists $\tilde\eta = \gamma \eta MK\lesssim \frac{1}{L(1+\beta^2/M)}$ ($\gamma\geq 1$), such that
\begin{align*}
	\min_{0\leq r\leq R} \E\left[\Norm{\nabla F(\rvx^{(r)})}^2\right]
	&\lesssim \frac{A}{\tilde \eta R} + \frac{ \tilde \eta L \sigma^2}{MK} + \frac{\tilde\eta^2L^2\sigma^2}{\gamma^2MK} + \frac{\tilde\eta^2L^2\zeta^2}{\gamma^2M}.
\end{align*}
After tuning the learning rate, we get
\begin{align*}
	&\min_{0\leq r\leq R} \E\left[\Norm{\nabla F(\rvx^{(r)})}^2\right] \\
	&=\gO\left( \frac{ LA\left(1+\frac{\beta^2}{M}\right) }{R } + \frac{\left(L\sigma^2 A\right)^{1/2} }{\sqrt{MKR}} + \frac{ \left(L^2\sigma^2A^2\right)^{1/3} }{ {\color{darkgreen}\boldsymbol{\gamma^{2/3}}}{\color{red}\boldsymbol{M^{1/3}}}K^{1/3} R^{2/3}} + \frac{ \left(L^2\zeta^2A^2\right)^{1/3} }{ {\color{darkgreen}\boldsymbol{\gamma^{2/3}}}{\color{red}\boldsymbol{M^{1/3}}} R^{2/3}}\right).
\end{align*}
\end{theorem}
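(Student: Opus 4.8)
The plan is to mirror the proof of the non-convex case of Theorem~\ref{thm:SFL} (carried out in \cite{li2023convergence}), the only structural change being the extra server learning rate $\gamma$ in the global step. The starting observation is that, since $\rvx^{(r)} - \rvx_{M,K}^{(r)} = \eta \sum_{m=1}^M\sum_{k=0}^{K-1} \rvg_{\pi_m,k}^{(r)}$, the modified global update can be written as
\begin{align*}
	\rvx^{(r+1)} - \rvx^{(r)} = -\gamma\eta \sum_{m=1}^M\sum_{k=0}^{K-1} \rvg_{\pi_m,k}^{(r)},
\end{align*}
so that each local gradient is effectively scaled by $\gamma\eta$, giving the effective learning rate $\tilde\eta = \gamma\eta MK$. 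The point I will exploit throughout is that the \emph{global} descent is governed by $\gamma\eta$, whereas the \emph{client drift} $\rvx_{m,k}^{(r)} - \rvx^{(r)}$ is produced purely by the inner loop and therefore depends on $\eta = \tilde\eta/(\gamma MK)$ alone.

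First I would apply $L$-smoothness of $F$ to obtain the one-round bound
\begin{align*}
	\E\left[F(\rvx^{(r+1)})\right] \leq F(\rvx^{(r)}) + \left\langle \nabla F(\rvx^{(r)}), \E[\rvx^{(r+1)}-\rvx^{(r)}]\right\rangle + \frac{L}{2}\E\Norm{\rvx^{(r+1)}-\rvx^{(r)}}^2,
\end{align*}
and then decompose the two error contributions. Using conditional unbiasedness $\E[\rvg_{\pi_m,k}^{(r)}] = \E[\nabla F_{\pi_m}(\rvx_{m,k}^{(r)})]$ and Assumption~\ref{asm:stochasticity:uniform}, the squared-norm term contributes a stochasticity piece scaling like $\tfrac{L}{2}\gamma^2\eta^2 MK\sigma^2 = \tfrac{L\tilde\eta^2\sigma^2}{2MK}$, which gives the second term of the claimed bound. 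The inner-product term splits, via the permutation identity $\tfrac{1}{MK}\sum_{m,k}\E_\pi[\nabla F_{\pi_m}(\rvx^{(r)})] = \nabla F(\rvx^{(r)})$, into the wanted descent $-\tilde\eta\Norm{\nabla F(\rvx^{(r)})}^2$ plus a cross term that, after Cauchy--Schwarz and $L$-smoothness, is controlled by $\gamma\eta L$ times the aggregate client drift $\sum_{m,k}\E\Norm{\rvx_{m,k}^{(r)}-\rvx^{(r)}}^2$.

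The next step is to invoke the client-drift lemma exactly as in the $\gamma=1$ analysis: since the inner loop is untouched, the bound $\sum_{m,k}\E\Norm{\rvx_{m,k}^{(r)}-\rvx^{(r)}}^2 = \gO\!\left(\eta^2 M^2K^2\sigma^2 + \eta^2 M^2K^3\zeta^2\right)$ plus a $\Norm{\nabla F(\rvx^{(r)})}^2$ contribution carries over verbatim, where the reshuffling $\sqrt{m}$ gain is what keeps the heterogeneity factor at order $M^2K^3$. Substituting $\eta^2 = \tilde\eta^2/(\gamma^2M^2K^2)$ into these drift contributions (now weighted by $\gamma\eta L^2 = \tilde\eta L^2/(MK)$) is precisely what converts the error into $\tfrac{\tilde\eta^3 L^2\sigma^2}{\gamma^2 MK}$ and $\tfrac{\tilde\eta^3 L^2\zeta^2}{\gamma^2 M}$; the $1/\gamma^2$ denominators are the immediate consequence of the drift feeling only $\eta$ and not $\gamma\eta$. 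Rearranging to isolate $\tilde\eta\Norm{\nabla F(\rvx^{(r)})}^2$ (absorbing the $\Norm{\nabla F(\rvx^{(r)})}^2$ drift contribution into the left-hand side, which is where the constraint $\tilde\eta \lesssim 1/(L(1+\beta^2/M))$ enters), then telescoping over $r=0,\ldots,R-1$ and dividing by $\tilde\eta R$ yields the first display. The second display follows by feeding this into the same learning-rate tuning argument used to pass from Theorem~\ref{thm:SFL} to Corollary~\ref{cor:SFL}, taking the three-way minimum over $\tilde\eta$.

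The main obstacle is the careful accounting of where $\gamma$ enters: the descent direction and the stochastic-noise term both carry the full factor $\gamma\eta$ (hence depend on $\tilde\eta$), but the drift-induced errors carry $\eta = \tilde\eta/(\gamma MK)$, and it is exactly this asymmetry that produces the $1/\gamma^2$ improvement on the last two error terms. A secondary point requiring care is verifying that the drift lemma's hypothesis—a sufficiently small \emph{local} learning rate—is still met: from $\tilde\eta \lesssim 1/L$ and $\gamma\geq 1$ one gets $\eta MK = \tilde\eta/\gamma \lesssim 1/L$, so $\eta$ is at least as small as in the $\gamma=1$ case and the reused lemma applies unchanged. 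This is precisely why the condition $\gamma\geq 1$ is imposed.
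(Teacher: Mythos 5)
Your proposal is correct and follows essentially the same route as the paper's proof: rewrite the global step as scaling the aggregated update by $\gamma$ (so the descent recursion of \citet{li2023convergence}'s Lemma~9 holds with $\eta\gamma$ in place of $\eta$), reuse the client-drift lemma unchanged since the inner loop only sees $\eta$, substitute $\eta = \tilde\eta/(\gamma MK)$ to produce the $1/\gamma^2$ factors on the drift-induced error terms, telescope, and tune $\tilde\eta$ as in Corollary~\ref{cor:SFL}. Your observation that $\gamma\geq 1$ guarantees the local learning-rate condition of the drift lemma is exactly the role this condition plays in the paper's argument.
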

\begin{proof}
See Appendix~\ref{subsec:app-server-step-size}.
\end{proof}

\begin{theorem}[PFL, non-convex]\label{thm:PFL-non-convex-server-step-size}
Under the same conditions as those of the non-convex case in Theorem 1, there exists $\tilde\eta = \gamma \eta K\lesssim \frac{1}{L(1+\beta^2)}$ ($\gamma \geq 1$), such that
\begin{align*}
	\min_{0\leq r\leq R} \E\left[\Norm{\nabla F(\rvx^{(r)})}^2\right] &\lesssim \frac{A}{\tilde\eta R} + \frac{\tilde\eta L \sigma^2}{MK} + \frac{\tilde\eta^2L^2 \sigma^2}{\gamma^2 K}  + \frac{\tilde\eta^2L^2\zeta^2}{\gamma^2}.
\end{align*}
After tuning the learning rate, we get
\begin{align*}
	\min_{0\leq r\leq R} \E\left[\Norm{\nabla F(\rvx^{(r)})}^2\right] &=\gO\left( \frac{ LA(1+\beta^2) }{R } + \frac{\left(L\sigma^2 A\right)^{1/2} }{\sqrt{MKR}} + \frac{ \left(L^2\sigma^2A^2\right)^{1/3} }{ {\color{darkgreen}\boldsymbol{\gamma^{2/3}}} K^{1/3} R^{2/3}} + \frac{ \left(L^2\zeta^2A^2\right)^{1/3} }{ {\color{darkgreen}\boldsymbol{\gamma^{2/3}}}R^{2/3}}\right).
\end{align*}
\end{theorem}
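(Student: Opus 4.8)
The plan is to run the standard non-convex descent-lemma analysis for parallel FL, where the only genuinely new bookkeeping is to keep the local learning rate $\eta$ and the \emph{effective} learning rate $\tilde\eta = \gamma\eta K$ separate throughout. First I would rewrite the two-learning-rate global step of Algorithm~\ref{algorithm2:two-step-size} in effective form: since $\rvx^{(r)} - \rvx_{m,K}^{(r)} = \eta\sum_{k=0}^{K-1}\rvg_{m,k}^{(r)}$, the server update $\rvx^{(r+1)} = \rvx^{(r)} - \gamma(\rvx^{(r)} - \frac1M\sum_m \rvx_{m,K}^{(r)})$ collapses to
\[
\rvx^{(r+1)} = \rvx^{(r)} - \frac{\tilde\eta}{MK}\sum_{m=1}^M\sum_{k=0}^{K-1}\rvg_{m,k}^{(r)},\qquad \tilde\eta = \gamma\eta K.
\]
This looks exactly like single-rate PFL, except $\eta$ sits inside the client drift while $\tilde\eta$ scales the global move; the whole $\gamma$-dependence of the bound will flow from this mismatch.

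Next I would apply the $L$-smoothness descent inequality (Definition~\ref{def:smoothness}) to $F$ and take expectation conditioned on $\rvx^{(r)}$. Using conditional unbiasedness $\E[\rvg_{m,k}^{(r)}\mid\rvx_{m,k}^{(r)}] = \nabla F_m(\rvx_{m,k}^{(r)})$ together with $\frac1M\sum_m\nabla F_m = \nabla F$, the first-order term splits into the descent term $-\tilde\eta\norm{\nabla F(\rvx^{(r)})}^2$ and a client-drift error $\frac{\tilde\eta}{MK}\sum_{m,k}\langle\nabla F(\rvx^{(r)}),\,\nabla F_m(\rvx^{(r)}) - \nabla F_m(\rvx_{m,k}^{(r)})\rangle$, which Young's inequality and smoothness bound in terms of $\frac1{MK}\sum_{m,k}\E\norm{\rvx_{m,k}^{(r)} - \rvx^{(r)}}^2$. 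The quadratic term $\frac{L}{2}\E\norm{\rvx^{(r+1)} - \rvx^{(r)}}^2$ is where the $1/M$ factor is born: in PFL every client starts from the common $\rvx^{(r)}$ and draws independent mini-batches, so the centered noise $\rvg_{m,k}^{(r)} - \nabla F_m(\rvx_{m,k}^{(r)})$ is conditionally uncorrelated across clients, and the variance of the average contracts by $MK$, yielding a contribution of order $\frac{L\tilde\eta^2\sigma^2}{MK}$ (the residual gradient-bias part being absorbed later).

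I would then bound the drift $\E\norm{\rvx_{m,k}^{(r)} - \rvx^{(r)}}^2$ by unrolling $\rvx_{m,k}^{(r)} - \rvx^{(r)} = -\eta\sum_{j<k}\rvg_{m,j}^{(r)}$, separating noise (Assumption~\ref{asm:stochasticity:uniform}) from the gradient mean and invoking Assumption~\ref{asm:heterogeneity:average} as $\frac1M\sum_m\norm{\nabla F_m(\rvx^{(r)})}^2 \le (1+\beta^2)\norm{\nabla F(\rvx^{(r)})}^2 + \zeta^2$. This gives $\frac1{MK}\sum_{m,k}\E\norm{\rvx_{m,k}^{(r)} - \rvx^{(r)}}^2 \lesssim \eta^2 K\sigma^2 + \eta^2 K^2\big(\zeta^2 + (1+\beta^2)\E\norm{\nabla F(\rvx^{(r)})}^2\big)$. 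The decisive substitution is $\eta = \tilde\eta/(\gamma K)$, turning $\eta^2 K = \tilde\eta^2/(\gamma^2 K)$ and $\eta^2 K^2 = \tilde\eta^2/\gamma^2$; this is exactly what injects the $\gamma^{-2}$ factors into the $\sigma$- and $\zeta$-error terms (and, after tuning, the $\gamma^{2/3}$ gains in the final rate).

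Collecting everything yields a one-round recursion $\E[F(\rvx^{(r+1)})] \le \E[F(\rvx^{(r)})] - \frac{\tilde\eta}{2}\E\norm{\nabla F(\rvx^{(r)})}^2 + \frac{L\tilde\eta^2\sigma^2}{MK} + L^2\tilde\eta\big(\frac{\tilde\eta^2\sigma^2}{\gamma^2 K} + \frac{\tilde\eta^2\zeta^2}{\gamma^2}\big)$, where the stray $(1+\beta^2)\E\norm{\nabla F}^2$ drift contribution carries a coefficient $\frac{L^2\tilde\eta^2(1+\beta^2)}{\gamma^2}$; the constraint $\tilde\eta \lesssim \frac1{L(1+\beta^2)}$ together with $\gamma\ge 1$ (so that $\eta^2K^2 = \tilde\eta^2/\gamma^2 \le \tilde\eta^2$) is precisely what makes this coefficient small enough to be absorbed into the descent term for \emph{any} admissible $\gamma$. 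Telescoping over $r=0,\dots,R-1$, dividing by $\tilde\eta R/2$, using $F(\rvx^{(R)})\ge F^\ast$, and lower-bounding the average by $\min_r\E\norm{\nabla F(\rvx^{(r)})}^2$ gives the first display. The second display follows from the same learning-rate-tuning lemma used for Corollary~\ref{cor:SFL}, applied to $\frac{A}{\tilde\eta R} + c_1\tilde\eta + c_2\tilde\eta^2$ with $c_1 = \frac{L\sigma^2}{MK}$ and $c_2 = \frac{L^2\sigma^2}{\gamma^2 K} + \frac{L^2\zeta^2}{\gamma^2}$, optimizing $\tilde\eta$ subject to $\tilde\eta\le\frac1{L(1+\beta^2)}$. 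The main obstacle is the drift bound: I must track the $\eta$-versus-$\tilde\eta$ scaling faithfully (the single source of all $\gamma$-dependence) and exploit cross-client independence correctly to secure the $1/M$ factor, since a careless variance decomposition would silently lose it.
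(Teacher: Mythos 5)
Your proposal is correct and is essentially the same argument as the paper's: the paper's own ``proof'' of this theorem is just a pointer to \citet{karimireddy2020scaffold}'s Theorem~1 and \citet{yang2021achieving}'s Theorem~1 (which fix $\gamma=\sqrt{M}$ and $\gamma=\sqrt{MK}$, respectively, kept generic here), and the analysis underlying those results is exactly what you wrote out --- collapse the server step into the effective update with $\tilde\eta=\gamma\eta K$, apply the descent lemma, exploit cross-client independence to get the $\tfrac{L\tilde\eta^2\sigma^2}{MK}$ noise term, bound the drift in the \emph{local} rate $\eta=\tilde\eta/(\gamma K)$ so that $\eta^2K=\tilde\eta^2/(\gamma^2K)$ and $\eta^2K^2=\tilde\eta^2/\gamma^2$ produce the $\gamma^{-2}$ factors, absorb the $(1+\beta^2)\E\Norm{\nabla F}^2$ drift piece using $\tilde\eta\lesssim\tfrac{1}{L(1+\beta^2)}$ and $\gamma\ge1$, then telescope and tune. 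Your writeup also mirrors the structure the paper does spell out for the SFL counterpart (Theorem~\ref{thm:SFL-non-convex-server-step-size} in Appendix~\ref{subsec:app-server-step-size}), so there is no gap and no genuinely different route.
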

\begin{proof}
See \citet{karimireddy2020scaffold}'s Theorem~1 or \citet{yang2021achieving}'s Theorem~1. Note that \citet{karimireddy2020scaffold} set $\gamma = \sqrt{M}$ and \citet{yang2021achieving} set $\gamma = \sqrt{MK}$. Here we keep $\gamma = \gamma$ for comparison.
\end{proof}

\subsection{Proofs of {Theorem~\ref{thm:SFL-non-convex-server-step-size}}}
\label{subsec:app-server-step-size}

\begin{proof}
	We consider the full client participation for simplicity. Since the global update is performed at the end of one training round, the client drift bound (that is, \citet{li2023convergence}'s Lemma~10) is unaffected. We can focus on \citet{li2023convergence}'s Lemma~9. Substituting the overall updates $\Delta \rvx = - \eta \sum_{m=1}^{M} \sum_{k=0}^{K-1} \rvg_{\pi_{m}} (\rvx_{m,k})$ with $\Delta \rvx = -\eta\gamma\sum_{m=1}^{M} \sum_{k=0}^{K-1} \rvg_{\pi_{m}} (\rvx_{m,k})$ in \citet{li2023convergence}'s Lemma~9, we get the recursion
	\begin{align*}
		&\E\left[F(\rvx^{(r+1)}) - F(\rvx^{(r)})\right]\\ &\leq-\frac{1}{6}\eta\gamma  MK\E\Norm{\nabla F(\rvx^{(r)})}^2 + 2\eta^2\gamma^2LMK\sigma^2
		+ \frac{5}{6}\eta\gamma L^2\sum_{m=1}^M\sum_{k=0}^{K-1}\E\Norm{\rvx_{m,k}^{(r)}-\rvx^{(r)}}^2.
	\end{align*}
	Plugging \citet{li2023convergence}'s Lemma~10 into it, and using $\eta\gamma  \leq \frac{1}{6LMK(1+\beta^2/M)}$, we get
	\begin{align*}
		&\E \left[F(\rvx^{(r+1)})-F(\rvx^{(r)})\right]\\
		&\leq -\frac{1}{10}\eta \gamma MK\E\|\nabla F(\rvx^{(r)})\|^2+2L\eta^2\gamma^2MK\sigma^2 + \frac{15}{8}\eta^3\gamma L^2M^2K^2\sigma^2 + \frac{15}{8}\eta^3\gamma L^2M^2K^3\zeta^2.
	\end{align*}
	Letting $\tilde \eta = \eta \gamma MK$, we can get
	\begin{align*}
		\E \left[F(\rvx^{(r+1)})-F(\rvx^{(r)})\right]\leq -\frac{1}{10}\tilde \eta \E\|\nabla F(\rvx^{(r)})\|^2+ \frac{2L\tilde\eta^2 \sigma^2}{MK} + \frac{15}{8}\frac{\tilde\eta^3L^2\sigma^2}{\gamma^2 MK} + \frac{15}{8}\frac{\tilde\eta^3L^2\zeta^2}{\gamma^2 M}.
	\end{align*}
	Then, we get
	\begin{align*}
		\min_{0\leq r\leq R} \E\left[\Norm{\nabla F(\rvx^{(r)})}^2\right] \leq \frac{10 A}{\tilde \eta R} + \frac{20 \tilde \eta L \sigma^2}{MK} + \frac{75}{4}\frac{\tilde\eta^2L^2\sigma^2}{\gamma^2MK} + \frac{75}{4}\frac{\tilde\eta^2L^2\zeta^2}{\gamma^2M}.
	\end{align*}
	Since $\eta\gamma  \leq \frac{1}{6LMK(1+\beta^2/M)}$, using \citet{li2023convergence}'s Lemma~8, we can get
	\begin{align*}
		&\min_{0\leq r\leq R} \E\left[\Norm{\nabla F(\rvx^{(r)})}^2\right]\\
		&= \gO\left(\frac{ LA\left(1+\frac{\beta^2}{M}\right) }{R } + \frac{\left(L\sigma^2 A\right)^{1/2} }{\sqrt{MKR}} + \frac{ \left(L^2\sigma^2A^2\right)^{1/3} }{ \gamma^{2/3}M^{1/3}K^{1/3} R^{2/3}} + \frac{ \left(L^2\zeta^2A^2\right)^{1/3} }{ \gamma^{2/3}M^{1/3}R^{2/3}}\right).
	\end{align*}
\end{proof}

\section{Comparison with Lower Bounds of SGD-RR}\label{app:comparison-SGD-RR}

In this section, we compare the lower bounds of SFL with those of SGD-RR. The lower bounds of SFL are stated in Theorem~\ref{thm:lower bound} and Theorem~\ref{thm:lower bound2}; Theorem~\ref{thm:lower bound} is for arbitrary learning rates $\eta>0$ and Theorem~\ref{thm:lower bound2} is for small learning rates $0 < \eta \lesssim \frac{1}{LMK}$. By comparing them with the corresponding theorems of SGD-RR in \citet{cha2023tighter}, we have the following comparison results: (1) for arbitrary learning rates $\eta>0$, the lower bound of SFL is worse than that of SGD-RR by a factor of $\kappa$; (2) for small learning rates $0<\eta\lesssim \frac{1}{LMK}$, the lower bounds of SFL match those of SGD-RR. See {Theorems~\ref{thm:SGD-RR lower bound},~\ref{thm:lower bound},~\ref{thm:SGD-RR lower bound small stepsize} and~\ref{thm:lower bound2}}.

Be attention that \textit{SGD-RR is a special case of SFL, where one single step of GD (Gradient Descent) step is performed on each local objective function (that is, $\sigma=0$ and $K=1$).}

\textit{The lower bounds for arbitrary learning rates $\eta>0$.} The lower bounds are restated in Theorem~\ref{thm:SGD-RR lower bound} (SGD-RR, \citealt{cha2023tighter}) and Theorem~\ref{thm:lower bound} (SFL) with our notations. First, we see that there are more components (including stochasticity and heterogeneity) in the lower bounds of SFL. Given that SGD-RR is a special case of SFL, by letting $\sigma=0$ and $K=1$, we next focus on the most noteworthy heterogeneity term (the last term, with $\zeta$) in Theorem~\ref{thm:lower bound}, and compare it with those of SGD-RR in Theorem~\ref{thm:SGD-RR lower bound}.

We consider two cases $R\gtrsim \kappa$ and $R\lesssim  \kappa$. When $R \gtrsim \kappa $, the lower bound $\Omega\left( \frac{L \zeta^2}{\mu^2 M R^2} \right)$ of SGD-RR is better than $\Omega\left( \frac{\zeta^2}{\mu M R^2} \right)$ of SFL with an advantage of $\kappa$. When $R \lesssim \kappa$, the lower bound $\Omega \left( \frac{\zeta^2}{\mu M R }\right)$ of SGD-RR is also better than $\Omega\left( \frac{\zeta^2}{\mu M R^2} \right)$ of SFL. Thus, we get that the lower bounds for SGD-RR in \citet{cha2023tighter} are better than ours for SFL for $\eta>0$. It is still open whether the lower bounds of SFL can be better for arbitrary $\eta>0$.

\textit{The lower bounds for small learning rates $0<\eta \lesssim \frac{1}{LMK}$.} The lower bounds are restated in Theorem~\ref{thm:SGD-RR lower bound small stepsize} (SGD-RR, \citealt{cha2023tighter}) and {Theorem~\ref{thm:lower bound2}} (SFL) with our notations. First, we see that there are more components (including stochasticity and heterogeneity) in the lower bounds of SFL. Similarly, we next focus on the most noteworthy heterogeneity term (the last term, with $\zeta$) in {Theorem~\ref{thm:lower bound2}}. It can be shown that the lower bounds of SFL completely match those of SGD-RR.

\begin{theorem}[Theorem 3.1 in \citet{cha2023tighter}]\label{thm:SGD-RR lower bound}
	For any $M\geq 2$ and $\kappa \geq 2415$, there exist a 3-dimensional function, whose local objective functions are $\mu$-strongly convex (Definition~\ref{def:strong convexity}) and $L$-smooth (Definition~\ref{def:smoothness}), and satisfy Assumption~\ref{asm:heterogeneity:average} (heterogeneity), and an initialization point $\rvx^{(0)}$ such that for any constant learning rate $\eta>0$, the last-round global parameter $\rvx^{(R)}$ satisfy
	\begin{align*}
		\E\left[ F(\rvx^{(R)}) - F^\ast\right] = \begin{cases}\Omega \left( \frac{L \zeta^2}{\mu^2 M R^2}\right) &\text{if} \ R\geq 161 \kappa, \\ \Omega \left( \frac{\zeta^2}{\mu M R }\right) &\text{if} \ R < 161\kappa.\end{cases}
	\end{align*}
\end{theorem}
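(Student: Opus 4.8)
The plan is to exhibit an explicit hard instance and to show that \emph{every} constant learning rate $\eta>0$ is forced into the stated error by at least one coordinate of the objective. Because SGD-RR is precisely the special case of SFL with $K=1$ and $\sigma=0$ (one gradient-descent step per sampled function, sampled without replacement), I would reuse the heterogeneity construction developed for SFL in Appendix~\ref{app:lower bound-heterogeneity}, specializing $K\to 1$ and dropping the stochasticity coordinates. Concretely, I would take a direct sum of one-dimensional quadratic-type functions placed in orthogonal coordinates, each calibrated to dominate in a distinct window of $\eta$, so that the overall error is at least the per-coordinate bound that is active for the given $\eta$.

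The central coordinate uses the balanced-sign family $F_m(x)=\tfrac{\lambda}{2}x^2+\zeta\tau_m x$, where $(\tau_1,\dots,\tau_M)$ is a uniformly random permutation of $\tfrac{M}{2}$ copies of $+1$ and $\tfrac{M}{2}$ copies of $-1$; then $F=\tfrac{\lambda}{2}x^2$ has minimizer $0$ and Assumption~\ref{asm:heterogeneity:average} holds with the prescribed $\zeta$. For one epoch (here $K=1$) the exact update is
\begin{align*}
x^{(r)}=(1-\lambda\eta)^{M}x^{(r-1)}-\eta\zeta\sum_{m=0}^{M-1}(1-\lambda\eta)^{m}\tau_{M-m},
\end{align*}
so, taking conditional expectation of the square, the cross terms vanish since $\E[\tau_m]=0$, and I would evaluate $\E[(\sum_m(1-\lambda\eta)^m\tau_m)^2]$ using the without-replacement covariance $\E[\tau_i\tau_j]=-\tfrac{1}{M-1}$ for $i\neq j$, exactly as in \citet{safran2020good}. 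This is the step that produces the crucial extra $\tfrac{1}{M}$ relative to with-replacement SGD: the balanced signs cancel the leading-order noise, leaving a per-epoch variance injection of order $\tfrac{\eta^2\zeta^2}{M}$ up to a monotone factor $T(\lambda\eta)$ that I would bound below by a numerical constant via Lemma~\ref{lem:composite function}.

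I would then case on the learning rate. For small $\eta$ a coordinate $F_m(x)=\lambda x^2$ with displaced initialization keeps $x^{(R)}$ near $x^{(0)}$ (because $(1-2\lambda\eta)^{MR}\approx 1$), forcing error $\Omega(\zeta^2/\lambda)$; for an intermediate window I would use the piecewise-curvature variant $(\lambda_0\mathbbm{1}_{x<0}+\lambda\mathbbm{1}_{x\geq0})\tfrac{x^2}{2}\pm\zeta x$ and, conditioning on the sign of the running partial sum, lower bound $\E[x^{(r+1)}]$ so as to pin the iterate away from the optimum at scale $\tfrac{\zeta}{\lambda_1 M^{1/2}R}$; for large $\eta$ the plain quadratic already diverges. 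Tuning $\lambda,\lambda_0,\lambda_1$ per coordinate so the windows cover all $\eta>0$, and setting $\lambda\asymp\mu$ or $\lambda\asymp L$ as appropriate, the surviving dominant term is $\Omega\big(\tfrac{L\zeta^2}{\mu^2 MR^2}\big)$ once $R\gtrsim\kappa$ and $\Omega\big(\tfrac{\zeta^2}{\mu MR}\big)$ when $R\lesssim\kappa$, matching the two cases.

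The hardest part will be the intermediate-$\eta$ regime: there the one-epoch map is nonlinear (the curvature depends on $\mathrm{sign}(x)$), so I cannot diagonalize the recursion and must instead run a careful conditional-expectation argument that (i) controls $\E|x_{m,k}-x_{1,0}|$ along the epoch (the analogue of Lemma~\ref{lem:lower bound:diff current-initial parameter upper bound}), (ii) lower bounds $\E|\gE_{m}|$ for the partial sums $\gE_m=\sum_{i\le m}\tau_i$ via Lemma~\ref{lem:lower bound:partial sum absolute value bounds}, and (iii) exploits the symmetry $\Pr(\gE_m>0)=\Pr(\gE_m<0)$ to keep the favorable $\lambda_0$-term dominant. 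Verifying that the pieced-together function simultaneously satisfies $\mu$-strong convexity, $L$-smoothness, and Assumption~\ref{asm:heterogeneity:average} across all coordinates, with the numerical constants ($\kappa\geq 2415$ and the threshold $161\kappa$) chosen consistently, is the main bookkeeping bottleneck; I would carry out those constant-chasing verifications following \citet{cha2023tighter}'s Theorem~3.1, from which this statement is quoted.
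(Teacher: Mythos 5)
First, a framing point: the paper never proves this statement at all --- it is quoted verbatim (with notation translated) from \citet{cha2023tighter} in Appendix~\ref{app:comparison-SGD-RR}, and the paper's own lower-bound machinery in Appendices~\ref{app:lower bound}--\ref{app:lower bound-heterogeneity} is itself an adaptation of that external proof to SFL. So what you are really proposing is to reconstruct \citet{cha2023tighter}'s argument from the paper's SFL version of it, and your skeleton is indeed the right one: orthogonal coordinates each owning a window of $\eta$, the balanced-sign family whose without-replacement covariance $\E[\tau_i\tau_j]=-\frac{1}{M-1}$ produces the $1/M$ gain, the piecewise-curvature coordinate pinning the iterate at scale $\frac{\zeta}{\lambda_1 M^{1/2}R}$ in the intermediate window, and divergence for large $\eta$.

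The genuine gap is in the sentence ``Tuning $\lambda,\lambda_0,\lambda_1$ per coordinate so the windows cover all $\eta>0$, and setting $\lambda\asymp\mu$ or $\lambda\asymp L$ as appropriate.'' The paper performs exactly the reuse you describe, for arbitrary $\eta>0$, in Theorem~\ref{thm:lower bound}, and it obtains only $\Omega\bigl(\tfrac{\zeta^2}{\mu MR^2}\bigr)$ --- weaker than the statement by a factor of $\kappa$ --- a deficit the paper itself concedes (see the discussion after Theorem~\ref{thm:lower bound2} and Appendix~\ref{app:comparison-SGD-RR}). The obstruction is the window-tiling constraint you gloss over: by Table~\ref{tab:lower bounds:heterogeneity}, the intermediate window $[\tfrac{1}{102010\lambda_1 MKR},\tfrac{1}{101\lambda_0 MK}]$ is nonempty only when $R\geq\tfrac{1}{1010}\tfrac{\lambda_0}{\lambda_1}$, and its right endpoint must meet the left endpoint $\tfrac{1}{101\lambda MK}$ of the balanced-quadratic window, forcing $\lambda=\lambda_0$; the paper therefore takes $\lambda_0=\lambda=1010\mu$, which is precisely what kills the $\kappa$ factor. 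To recover the stated theorem you must (i) for $R\gtrsim\kappa$ take $\lambda_0=L$, $\lambda_1=\mu$, giving $\Omega\bigl(\tfrac{\lambda_0\zeta^2}{\lambda_1^2MR^2}\bigr)=\Omega\bigl(\tfrac{L\zeta^2}{\mu^2MR^2}\bigr)$, and then cover the leftover window $[\tfrac{1}{101LM},\infty)$ with curvature-$L$ balanced and divergent coordinates, whose bounds $\Omega\bigl(\tfrac{\zeta^2}{LM}\bigr)$ and $\Omega\bigl(\tfrac{\zeta^2}{L}\bigr)$ dominate the target \emph{only because} $R\gtrsim\kappa$; and (ii) for $R\lesssim\kappa$ neither of your two menu choices works: $\lambda_0=L$ makes the intermediate window empty (it violates $R\gtrsim\lambda_0/\lambda_1$), while $\lambda_0\asymp\mu$ yields only $\tfrac{\zeta^2}{\mu MR^2}$. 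You need the interpolating calibration $\lambda_0=\lambda\asymp\mu R$, which gives $\Omega\bigl(\tfrac{\lambda_0\zeta^2}{\mu^2MR^2}\bigr)=\Omega\bigl(\tfrac{\zeta^2}{\mu MR}\bigr)$ in the intermediate window and the matching $\Omega\bigl(\tfrac{\zeta^2}{\lambda M}\bigr)=\Omega\bigl(\tfrac{\zeta^2}{\mu MR}\bigr)$ in the large-$\eta$ windows. This choice $\lambda_0\asymp\min\{L,\mu R\}$ is exactly where the theorem's case split at $R\approx 161\kappa$ comes from, and it is the part your proposal asserts rather than proves; without it you have only rederived the paper's $\kappa$-weaker Theorem~\ref{thm:lower bound} at $K=1$.
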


\begin{theorem}[Theorem~3.3 and Corollary~3.5 in \citet{cha2023tighter}]\label{thm:SGD-RR lower bound small stepsize}
	Under the same conditions of {Theorem~\ref{thm:SGD-RR lower bound}} (unless explicitly stated), there exist a multi-dimensional global objective function and an initialization point, such that for $\eta\leq \frac{1}{161LM}$, the arbitrary weighted average global parameters $\bar\rvx^{(R)}$ satisfy the lower bounds:
	\setlist[itemize]{label=}
	\begin{itemize}[leftmargin=0.5em]
		\item \textbf{Strongly convex}: If $R\geq 161\kappa$ and $\kappa \geq 2415$, then
		\begin{flalign*}
			\E\left[F(\bar\rvx^{(R)})-F(\rvx^\ast)\right] = \Omega \left( \frac{L\zeta^2}{\mu^2 MR^2}\right).&&
		\end{flalign*}
		\item \textbf{General convex}: If $R\geq 161^3\max\left\{ \frac{\zeta}{LM^{1/2}D},  \frac{L^2 MD^2}{\zeta^2}\right\}$, then
		\begin{flalign*}
			\E\left[F(\bar\rvx^{(R)})-F(\rvx^\ast)\right] = \Omega\left(\frac{\left(L\zeta^2D^4\right)^{1/3}}{M^{1/3}R^{2/3}}\right). &&
		\end{flalign*}
	\end{itemize}
\end{theorem}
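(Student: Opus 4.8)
The plan is to exploit the fact, emphasized in the excerpt, that SGD-RR is exactly SFL with $K=1$ and $\sigma=0$, so that the construction and the per-round recursions used for the heterogeneity terms in the proof of Theorem~\ref{thm:lower bound2} specialize directly. Since $\sigma=0$, only heterogeneity matters, so I would set all local component functions equal inside each client and take local objectives of the form $F_m(x) = (\lambda_0 \mathbbm{1}_{x<0} + \lambda \mathbbm{1}_{x\ge 0})\frac{x^2}{2} + \zeta \tau_m x$, where half the clients carry $+\zeta$ and half carry $-\zeta$, with the signs $\{\tau_m\}$ reshuffled each round. The global objective is then the clean piecewise quadratic $(\lambda_0 \mathbbm{1}_{x<0} + \lambda \mathbbm{1}_{x\ge 0})\frac{x^2}{2}$, whose minimizer is $0$. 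The whole argument reduces to showing that the expected iterate $\E[x^{(r)}]$ cannot be driven to $0$ within $R$ rounds, and then transferring this lower bound to an arbitrary weighted average.

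For the strongly convex case I would partition the learning-rate axis into the same regimes as in Table~\ref{tab:lower bounds:heterogeneity} with $K=1$, and note that under the hypothesis $\eta \le \frac{1}{161 LM}$ only the two small-$\eta$ regimes are relevant, with $\lambda_1 = \mu$ and $\lambda_0 = L$ (the assumption $\kappa \ge 2415$ guarantees $\lambda_0 = L \gg \mu$, so the intermediate interval is nonempty once $R \ge 161\kappa$). In the intermediate regime the heart of the matter is a one-round drift inequality of the form $\E[x^{(r+1)}] \ge (1-\tfrac{2}{3}\lambda_0 M\eta)\,\E[x^{(r)}] + c\,\lambda_0 M^{3/2}\eta^2\zeta$, namely the $K=1$ specialization of Ineq.~\eqref{eq:proof:lower bound-heterogeneity:bound-1}, from which one shows that once $\E[x^{(r)}] \ge \frac{\zeta}{c'\mu M^{1/2} R}$ the next iterate maintains the same lower bound. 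Hence every iterate, and therefore any convex combination $\bar x^{(R)} = \frac{\sum_r w_r x^{(r)}}{\sum_r w_r}$, stays above this threshold, yielding $\E[F(\bar x^{(R)}) - F^\ast] \ge \tfrac{\lambda}{2}(\bar x^{(R)})^2 = \Omega\!\big(\frac{L\zeta^2}{\mu^2 M R^2}\big)$ after choosing $\lambda_0/\lambda = 1010$.

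For the general convex case I would replace $\mu$ by a tuned curvature $\lambda = \frac{L^{1/3}\zeta^{2/3}}{M^{1/3}R^{2/3}D^{2/3}}$ and, as in the convex arguments of \citet{woodworth2020minibatch} and \citet{cha2023tighter}, use a multi-dimensional objective that splits the initial distance $D$ between a ``frozen'' coordinate (contributing an $\Omega(\lambda D^2)$ piece) and a ``moving'' coordinate driven by the reshuffled signs. The stated conditions $R \gtrsim \max\{\frac{\zeta}{LM^{1/2}D}, \frac{L^2 M D^2}{\zeta^2}\}$ are precisely what is needed to guarantee both that $2\lambda \le L$, so the piecewise quadratics stay $L$-smooth, and that $R \ge \frac{1}{161}\frac{\lambda_0}{\lambda_1}$, so the intermediate regime is nonempty; combining the per-coordinate bounds then gives $\Omega\!\big(\frac{(L\zeta^2 D^4)^{1/3}}{M^{1/3}R^{2/3}}\big)$.

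The main obstacle is establishing the per-round drift inequality and, crucially, showing it holds for \emph{every} round rather than only for the last iterate, which is exactly what licenses passing from $x^{(R)}$ to an arbitrary weighted average $\bar x^{(R)}$. This requires two ingredients I would import from the supporting lemmas: anti-concentration estimates for partial sums of a random $\pm 1$ permutation (the bounds $\E|\gE_m| \gtrsim \sqrt{m}$ and $\Pr(\gE_m > 0) \ge \tfrac16$ of Lemmas~\ref{lem:lower bound:partial sum absolute value bounds} and~\ref{lem:lower bound:partial sum probability bounds}), used to show that the signed drift term does not average out within a round, and a coupling with an auxiliary quadratic $H(x) = \frac{\lambda_0}{2}x^2 \pm \zeta x$ to control the iterate on the region $x<0$ where the curvature is the larger value $\lambda_0$. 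Getting the numerical constants to line up so that the threshold $\frac{\zeta}{c'\mu M^{1/2}R}$ is genuinely self-reproducing under the recursion is the delicate bookkeeping that the full argument in \citet{cha2023tighter} carries out.
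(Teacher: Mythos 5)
Your plan is essentially correct, but be aware of what the paper itself does here: it gives no proof of this theorem at all — the statement is an explicit restatement (in this paper's notation) of Theorem~3.3 and Corollary~3.5 of \citet{cha2023tighter}, quoted only to compare against Theorem~\ref{thm:lower bound2}. Your reconstruction is nevertheless faithful to the genuine underlying argument, because the paper's own SFL lower-bound machinery is precisely the generalization of Cha et al.'s $K=1$ proof: specializing the heterogeneity construction of Appendix~\ref{app:lower bound-heterogeneity} and the small-learning-rate argument of Appendix~\ref{app:subsec:cor-lower bound} to $K=1$, $\sigma=0$ reproduces the regime splitting, the drift inequality (Ineq.~\eqref{eq:proof:lower bound-heterogeneity:bound-1} with $K=1$), the self-reproducing threshold, the passage to an arbitrary weighted average $\bar\rvx^{(R)}$, and the curvature tuning $\lambda \asymp L^{1/3}\zeta^{2/3}/(M^{1/3}R^{2/3}D^{2/3})$ for the general convex case — exactly the route Cha et al. take.

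Two caveats on the bookkeeping you deferred. First, the constants in the statement ($161$, $2415$, $161^3$) are Cha et al.'s, while the paper's lemmas carry different ones ($101$, $1010$, $51^3$); this is harmless for an $\Omega(\cdot)$ conclusion, and the hypotheses are compatible since $\eta \le \frac{1}{161LM} \le \frac{1}{101LM}$ places $\eta$ inside the union of the two regimes your imported lemmas cover, while $\kappa \ge 2415 \ge 1010$ and $R \ge 161\kappa \ge \kappa/1010$ imply the paper's regime conditions. Second, and more substantively, the statement inherits $M \ge 2$ from Theorem~\ref{thm:SGD-RR lower bound}, but the drift inequality and the anti-concentration bound you import (Lemma~\ref{lem:lower bound:partial sum absolute value bounds}, and hence Ineq.~\eqref{eq:proof:lower bound-heterogeneity:bound-1}) are only established for $M \ge 4$; the cases $M \in \{2,3\}$ need the separate boundary treatment present in \citet{cha2023tighter} but not in this paper's lemmas, so as written your proof covers only $M \ge 4$.
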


\section{Comparison with \citet{malinovsky2023federated}}\label{app:comparison-shuffling-variance}

In this section, we compare the results in \citet{malinovsky2023federated} with ours. Notably, the local solver in \citet{malinovsky2023federated} is SGD-RR, so they assume that each local component functions $f_m$ is $L$-smooth, and use the technique of Shuffling Variance \citep{mishchenko2020random}.
Since the bounds in \citet{malinovsky2023federated} are for strongly convex cases, we next compare their bound with ours in the strongly convex case. For comparison, we restate their Theorem~6.1 and our Theorem~\ref{thm:SFL} in Corollary~\ref{cor:PFL-cyclic-participation} and Corollary~\ref{cor:SFL-last-round-iterate-bound}. Since \citet{malinovsky2023federated}'s local solver is SGD-RR, while our local solver is SGD, for convenience and fairness, we next only compare the heterogeneity terms. As shown in Corollary~\ref{cor:PFL-cyclic-participation} and Corollary~\ref{cor:SFL-last-round-iterate-bound}, the upper bounds of \citet{malinovsky2023federated} almost match ours, except an advantage of $MK$ on the first term. This is because they use the advanced technique of Shuffling Variance.

\begin{corollary}[Corollary of \citet{malinovsky2023federated}'s Theorem~6.1]
	\label{cor:PFL-cyclic-participation}
	Letting $R=M$, $T=R$, $\gamma =\eta$, $N =K$, $\tilde\sigma_\star=\zeta_\ast$, $\sigma_\star = 0$ and $C=1$ in \citet{malinovsky2023federated}'s Theorem~6.1, we can get the upper bounds for SFL in our notations:
	\begin{align*}
		&\E\|\rvx^{(R)} - \rvx^\ast\|^2 = \gO\left(D^2 \exp\left(-\mu \tilde \eta R \right) + \frac{\tilde\eta^2 L\zeta_\ast^2}{\mu M} \right),\\
		&\E\|\rvx^{(R)} - \rvx^\ast\|^2 = \tilde\gO\left(D^2 \exp\left(\frac{-\mu {\color{red}\boldsymbol{MK}}R}{L} \right) + \frac{ L\zeta_\ast^2}{\mu^3 M R^2} \right),
	\end{align*}
	where $\tilde \eta = \eta MK \lesssim \frac{1}{L} \cdot MK$. Here we set $N=K$, since the number of local steps equals the size of the local data set in \citet{malinovsky2023federated}.
\end{corollary}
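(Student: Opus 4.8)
The plan is to treat this as a specialization-plus-tuning exercise: first instantiate \citet{malinovsky2023federated}'s Theorem~6.1 with the substitution dictionary given in the statement, and then feed the resulting $\tilde\eta$-parametrized bound into a standard learning-rate-balancing lemma to obtain the tuned bound in the second display.

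First I would write out \citet{malinovsky2023federated}'s Theorem~6.1 in their own notation, isolating the two governing quantities: the geometric contraction factor $\exp(-\mu\gamma\cdot(\text{horizon}))$ and the noise floor, which in their framework splits into a within-client stochasticity part (scaling with $\sigma_\star$) and a shuffling/heterogeneity part (scaling with $\tilde\sigma_\star$). The key conceptual check is that their cyclic-participation model collapses onto our SFL update (Algorithm~\ref{algorithm1}) when there is a single cycle block ($C=1$), the number of clients per cycle equals the total client count ($R_{\text{theirs}}=M$), the number of communication rounds matches ours ($T=R$), and the local SGD-RR solver runs over a local data set whose size equals the number of local steps ($N=K$). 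Under these identifications one full pass of their algorithm is exactly one training round of SFL, and their per-round client visiting order is precisely our random permutation $\pi$.

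With $\sigma_\star=0$ (we compare only the heterogeneity contribution, as stated) and $\tilde\sigma_\star=\zeta_\ast$, the noise floor reduces to the single heterogeneity term. After rewriting their stepsize $\gamma=\eta$ in terms of the effective learning rate $\tilde\eta=\eta MK$, I expect the bound to take the form $\E\|\rvx^{(R)}-\rvx^\ast\|^2=\gO\!\left(D^2\exp(-\mu\tilde\eta R)+\tilde\eta^2 L\zeta_\ast^2/(\mu M)\right)$, which is the first displayed inequality. The only care needed here is bookkeeping the constants absorbed into $\gO$ and confirming that the contraction exponent, written with their horizon and stepsize, becomes $\mu\tilde\eta R$ after substitution.

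For the second inequality I would apply a tuning lemma of the type used for the strongly convex case elsewhere in the paper (e.g.\ the lemma behind Corollary~\ref{cor:SFL}, or \citet{karimireddy2020scaffold}'s Lemma~1): given a bound $\Psi\le D^2\exp(-\mu\tilde\eta R)+c\,\tilde\eta^2$ valid for all $\tilde\eta\le\tilde\eta_{\max}$, choosing $\tilde\eta\asymp\min\{\tilde\eta_{\max},\,\frac{1}{\mu R}\log(\cdot)\}$ yields $\Psi=\tilde\gO\!\left(D^2\exp(-\mu\tilde\eta_{\max}R)+c/(\mu R)^2\right)$. Here the constraint $\tilde\eta=\eta MK\lesssim MK/L$ gives $\tilde\eta_{\max}=MK/L$, so the exponential term becomes $D^2\exp(-\mu MK R/L)$ and, with $c=L\zeta_\ast^2/(\mu M)$, the floor becomes $L\zeta_\ast^2/(\mu^3 M R^2)$, matching the second display. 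I expect the main obstacle to be neither term of the tuning — that step is routine — but the faithful translation in the previous step: ensuring the cyclic-participation parameters $(R_{\text{theirs}},T,C,N)$ are matched so that their horizon and effective stepsize align with ours, and verifying that setting $N=K$ (local steps equal local data size) is exactly the regime in which their SGD-RR local solver is comparable to our SGD local solver, so that comparing only the surviving heterogeneity terms is legitimate.
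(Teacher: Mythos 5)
Your proposal is correct and matches what the paper does: the corollary is exactly the substitution $R=M$, $T=R$, $\gamma=\eta$, $N=K$, $\tilde\sigma_\star=\zeta_\ast$, $\sigma_\star=0$, $C=1$ into \citet{malinovsky2023federated}'s Theorem~6.1 (yielding the first display), followed by the standard strongly convex learning-rate tuning with $\tilde\eta_{\max}\asymp MK/L$, which turns $D^2\exp(-\mu\tilde\eta R)+\tilde\eta^2 L\zeta_\ast^2/(\mu M)$ into $\tilde\gO\bigl(D^2\exp(-\mu MKR/L)+L\zeta_\ast^2/(\mu^3 MR^2)\bigr)$. Your bookkeeping of the constants ($c/(\mu R)^2=L\zeta_\ast^2/(\mu^3 MR^2)$, exponent $\mu MKR/L$) is exactly right, so there is nothing to add.
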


\begin{corollary}[Corollary of Theorem~\ref{thm:SFL}]
	\label{cor:SFL-last-round-iterate-bound}
	For the purpose of comparison, we consider the bound of $\E\|\rvx^{(R)} - \rvx^\ast\|^2$ here instead of $\E\left[F(\bar \rvx^{(R)}) - F(\rvx^\ast) \right]$:
	\begin{align*}
		&\E\|\rvx^{(R)} - \rvx^\ast\|^2 = \gO\left(D^2 \exp\left(-\mu \tilde\eta R\right) + \frac{\tilde\eta^2 L \zeta_\ast^2}{\mu M}  \right),\\
		&\E\|\rvx^{(R)} - \rvx^\ast\|^2 = \tilde\gO\left(D^2 \exp\left(\frac{-\mu R}{L}\right) + \frac{ L \zeta_\ast^2}{\mu^3 M R^2}  \right),
	\end{align*}
	where $\tilde \eta = \eta MK \lesssim \frac{1}{L}$.
\end{corollary}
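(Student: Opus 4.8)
The plan is to bypass the weighted function-value average used in Theorem~\ref{thm:SFL} and instead extract and unroll a genuine \emph{last-iterate} distance recursion. Following the comparison with \citet{malinovsky2023federated}, I set $\sigma=0$ so that only the heterogeneity term survives. First I would recall the one-round progress inequality underlying the strongly convex proof of Theorem~\ref{thm:SFL}, namely \citealt{li2023convergence}'s descent lemma combined with its client-drift bound. With $\sigma=0$ and $\tilde\eta=\eta MK\leq\frac{1}{6L}$, this recursion has the form
\begin{align*}
	\E\|\rvx^{(r+1)}-\rvx^\ast\|^2 \leq \left(1-\tfrac{\mu\tilde\eta}{2}\right)\E\|\rvx^{(r)}-\rvx^\ast\|^2 - \tilde\eta\,\E\left[F(\rvx^{(r)})-F^\ast\right] + c\,\frac{L\tilde\eta^3\zeta_\ast^2}{M},
\end{align*}
for an absolute constant $c$, where the factor $1/M$ comes from the shuffling-based sampling in the drift bound. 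The critical bookkeeping point is that the heterogeneity error enters at order $\tilde\eta^3$, one power higher than in the function-value statement of Theorem~\ref{thm:SFL}, where dividing through by $\tilde\eta$ produces the $\frac{18L\tilde\eta^2\zeta_\ast^2}{M}$ term.

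Next, since $F(\rvx^{(r)})-F^\ast\geq 0$ in the convex case, I would discard the nonpositive middle term to obtain a clean geometric recursion, and then unroll it over $R$ rounds, summing the resulting geometric series:
\begin{align*}
	\E\|\rvx^{(R)}-\rvx^\ast\|^2 \leq \left(1-\tfrac{\mu\tilde\eta}{2}\right)^R D^2 + \frac{2c}{\mu}\cdot\frac{L\tilde\eta^2\zeta_\ast^2}{M} \leq D^2\exp\left(-\tfrac{\mu\tilde\eta R}{2}\right) + \gO\left(\frac{L\tilde\eta^2\zeta_\ast^2}{\mu M}\right).
\end{align*}
This is exactly the first displayed bound, with the factor $1/2$ in the exponent absorbed into the $\gO$ convention. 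Note the error term is now $\tilde\eta^2/\mu$, which is precisely what the extra power of $\tilde\eta$ in Step~1 buys us.

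Finally, to obtain the second bound I would tune $\tilde\eta$ subject to $\tilde\eta\leq\frac{1}{6L}$ using the step-size lemma employed in the strongly convex case of Corollary~\ref{cor:SFL} (\citealt{karimireddy2020scaffold}'s Lemma~1) applied to $\Psi(\tilde\eta)=D^2\exp(-\frac{\mu\tilde\eta R}{2})+\frac{2c}{\mu}\frac{L\tilde\eta^2\zeta_\ast^2}{M}$. Choosing $\tilde\eta\asymp\min\{\frac{1}{L},\frac{1}{\mu R}\}$ up to a logarithmic factor yields, for $R\gtrsim\kappa$, the bound $\tilde\gO\left(D^2\exp(-\frac{\mu R}{L})+\frac{L\zeta_\ast^2}{\mu^3 M R^2}\right)$, where the exponential term (active when $\tilde\eta\asymp\frac{1}{L}$) covers small $R$ and the $1/R^2$ term covers large $R$. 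This matches the second display.

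The main obstacle is Step~1: the published proof of Theorem~\ref{thm:SFL} is written to target the weighted average $\bar\rvx^{(R)}$ and produce a function-value bound, so I must re-read it to isolate the per-round distance contraction \emph{before} the averaging step and verify that, after dropping the suboptimality term, the surviving heterogeneity error genuinely scales as $\tilde\eta^3 L\zeta_\ast^2/M$ rather than $\tilde\eta^2 L\zeta_\ast^2/M$. Pinning down this power of $\tilde\eta$ (and confirming the $1/M$ shuffling factor is preserved in the last-iterate form) is exactly what produces the correct $1/R^2$ rate after tuning, and is where the bulk of the care is required; the unrolling and step-size optimization are then routine applications of the lemmas already cited in the paper.
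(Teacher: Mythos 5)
Your proposal is correct and is essentially the paper's intended derivation: the corollary is asserted as a direct consequence of the per-round distance recursion underlying the strongly convex case of Theorem~\ref{thm:SFL} (in \citealt{li2023convergence}), which with $\sigma=0$ reads $\E\|\rvx^{(r+1)}-\rvx^\ast\|^2 \leq (1-\tfrac{\mu\tilde\eta}{2})\E\|\rvx^{(r)}-\rvx^\ast\|^2 - \tilde\eta\,\E[F(\rvx^{(r)})-F^\ast] + \gO(L\tilde\eta^3\zeta_\ast^2/M)$, exactly as you identify, and dropping the suboptimality term, unrolling the geometric sum, and tuning $\tilde\eta$ via the standard step-size lemma gives both displays (the constants in the exponent being absorbed by the paper's $\gO/\tilde\gO$ conventions, just as in Corollary~\ref{cor:SFL}).
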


\vskip 0.2in
\bibliography{refs}

\begin{thebibliography}{58}
\providecommand{\natexlab}[1]{#1}
\providecommand{\url}[1]{\texttt{#1}}
\expandafter\ifx\csname urlstyle\endcsname\relax
  \providecommand{\doi}[1]{doi: #1}\else
  \providecommand{\doi}{doi: \begingroup \urlstyle{rm}\Url}\fi

\bibitem[Ahn et~al.(2020)Ahn, Yun, and Sra]{ahn2020sgd}
Kwangjun Ahn, Chulhee Yun, and Suvrit Sra.
\newblock {SGD} with shuffling: optimal rates without component convexity and
  large epoch requirements.
\newblock In \emph{Conference on Neural Information Processing Systems
  (NeurIPS)}, 2020.

\bibitem[Cha et~al.(2023)Cha, Lee, and Yun]{cha2023tighter}
Jaeyoung Cha, Jaewook Lee, and Chulhee Yun.
\newblock Tighter lower bounds for shuffling {SGD}: Random permutations and
  beyond.
\newblock In \emph{International Conference on Machine Learning (ICML)}, 2023.

\bibitem[Chang and Lin(2011)]{chang2011libsvm}
Chih-Chung Chang and Chih-Jen Lin.
\newblock {LIBSVM}: A library for support vector machines.
\newblock \emph{ACM transactions on intelligent systems and technology (TIST)},
  2011.

\bibitem[Chang et~al.(2018)Chang, Balachandar, Lam, Yi, Brown, Beers, Rosen,
  Rubin, and Kalpathy-Cramer]{chang2018distributed}
Ken Chang, Niranjan Balachandar, Carson Lam, Darvin Yi, James Brown, Andrew
  Beers, Bruce Rosen, Daniel~L Rubin, and Jayashree Kalpathy-Cramer.
\newblock Distributed deep learning networks among institutions for medical
  imaging.
\newblock \emph{Journal of the American Medical Informatics Association}, 2018.

\bibitem[Chen et~al.(2020)Chen, Chen, Zhou, and Kailkhura]{chen2020fedcluster}
Cheng Chen, Ziyi Chen, Yi~Zhou, and Bhavya Kailkhura.
\newblock {FedCluster}: Boosting the convergence of federated learning via
  cluster-cycling.
\newblock In \emph{{IEEE} International Conference on Big Data (Big Data)},
  2020.

\bibitem[Chen et~al.(2023)Chen, Li, Ni, Zhu, and Zhang]{chen2023fedseq}
Zhikun Chen, Daofeng Li, Rui Ni, Jinkang Zhu, and Sihai Zhang.
\newblock {FedSeq}: A hybrid federated learning framework based on sequential
  in-cluster training.
\newblock \emph{IEEE Systems Journal}, 2023.

\bibitem[Cho et~al.(2023)Cho, Sharma, Joshi, Xu, Kale, and
  Zhang]{cho2023convergence}
Yae~Jee Cho, Pranay Sharma, Gauri Joshi, Zheng Xu, Satyen Kale, and Tong Zhang.
\newblock On the convergence of federated averaging with cyclic client
  participation.
\newblock In \emph{International Conference on Machine Learning (ICML)}, 2023.

\bibitem[Darlow et~al.(2018)Darlow, Crowley, Antoniou, and
  Storkey]{darlow2018cinic}
Luke~N Darlow, Elliot~J Crowley, Antreas Antoniou, and Amos~J Storkey.
\newblock {CINIC-10} is not {ImageNet} or {CIFAR-10}.
\newblock \emph{arXiv preprint arXiv:1810.03505}, 2018.

\bibitem[Eichner et~al.(2019)Eichner, Koren, McMahan, Srebro, and
  Talwar]{eichner2019semi}
Hubert Eichner, Tomer Koren, Brendan McMahan, Nathan Srebro, and Kunal Talwar.
\newblock Semi-cyclic stochastic gradient descent.
\newblock In \emph{International Conference on Machine Learning (ICML)}, 2019.

\bibitem[Gao et~al.(2021)Gao, Kim, Thapa, Abuadbba, Zhang, Camtepe, Kim, and
  Nepal]{gao2021evaluation}
Yansong Gao, Minki Kim, Chandra Thapa, Sharif Abuadbba, Zhi Zhang, Seyit
  Camtepe, Hyoungshick Kim, and Surya Nepal.
\newblock Evaluation and optimization of distributed machine learning
  techniques for internet of things.
\newblock \emph{IEEE Transactions on Computers}, 2021.

\bibitem[Glasgow et~al.(2022)Glasgow, Yuan, and Ma]{glasgow2022sharp}
Margalit~R Glasgow, Honglin Yuan, and Tengyu Ma.
\newblock Sharp bounds for federated averaging (local {SGD}) and continuous
  perspective.
\newblock In \emph{International Conference on Artificial Intelligence and
  Statistics (AISTATS)}, 2022.

\bibitem[Gupta and Raskar(2018)]{gupta2018distributed}
Otkrist Gupta and Ramesh Raskar.
\newblock Distributed learning of deep neural network over multiple agents.
\newblock \emph{Journal of Network and Computer Applications}, 2018.

\bibitem[Horv{\'a}th et~al.(2022)Horv{\'a}th, Sanjabi, Xiao, Richt{\'a}rik, and
  Rabbat]{horvath2022fedshuffle}
Samuel Horv{\'a}th, Maziar Sanjabi, Lin Xiao, Peter Richt{\'a}rik, and Michael
  Rabbat.
\newblock {FedShuffle}: Recipes for better use of local work in federated
  learning.
\newblock \emph{Transactions on Machine Learning Research (TMLR)}, 2022.

\bibitem[Huang et~al.(2024)Huang, Bert, Gomaa, Fietkau, Maier, and
  Putz]{huang2024experimental}
Yixing Huang, Christoph Bert, Ahmed Gomaa, Rainer Fietkau, Andreas Maier, and
  Florian Putz.
\newblock An experimental survey of incremental transfer learning for
  multicenter collaboration.
\newblock \emph{IEEE Access}, 2024.

\bibitem[Jhunjhunwala et~al.(2023)Jhunjhunwala, Wang, and
  Joshi]{jhunjhunwala2023fedexp}
Divyansh Jhunjhunwala, Shiqiang Wang, and Gauri Joshi.
\newblock {FedExP}: Speeding up federated averaging via extrapolation.
\newblock In \emph{International Conference on Learning Representations
  (ICLR)}, 2023.

\bibitem[Karimireddy et~al.(2020)Karimireddy, Kale, Mohri, Reddi, Stich, and
  Suresh]{karimireddy2020scaffold}
Sai~Praneeth Karimireddy, Satyen Kale, Mehryar Mohri, Sashank Reddi, Sebastian
  Stich, and Ananda~Theertha Suresh.
\newblock {SCAFFOLD}: Stochastic controlled averaging for federated learning.
\newblock In \emph{International Conference on Machine Learning (ICML)}, 2020.

\bibitem[Khaled et~al.(2020)Khaled, Mishchenko, and
  Richt{\'a}rik]{khaled2020tighter}
Ahmed Khaled, Konstantin Mishchenko, and Peter Richt{\'a}rik.
\newblock Tighter theory for local sgd on identical and heterogeneous data.
\newblock In \emph{International Conference on Artificial Intelligence and
  Statistics (AISTATS)}, 2020.

\bibitem[Koloskova et~al.(2020)Koloskova, Loizou, Boreiri, Jaggi, and
  Stich]{koloskova2020unified}
Anastasia Koloskova, Nicolas Loizou, Sadra Boreiri, Martin Jaggi, and Sebastian
  Stich.
\newblock A unified theory of decentralized sgd with changing topology and
  local updates.
\newblock In \emph{International Conference on Machine Learning (ICML)}, 2020.

\bibitem[Koloskova et~al.(2024)Koloskova, Doikov, Stich, and
  Jaggi]{koloskova2024convergence}
Anastasia Koloskova, Nikita Doikov, Sebastian~U Stich, and Martin Jaggi.
\newblock On convergence of incremental gradient for non-convex smooth
  functions.
\newblock In \emph{International Conference on Machine Learning (ICML)}, 2024.

\bibitem[Krizhevsky et~al.(2009)]{krizhevsky2009learning}
Alex Krizhevsky et~al.
\newblock Learning multiple layers of features from tiny images.
\newblock \emph{Technical report}, 2009.

\bibitem[Lee et~al.(2020)Lee, Oh, Lim, Yun, and Lee]{lee2020tornadoaggregate}
Jin-woo Lee, Jaehoon Oh, Sungsu Lim, Se-Young Yun, and Jae-Gil Lee.
\newblock Tornadoaggregate: Accurate and scalable federated learning via the
  ring-based architecture.
\newblock \emph{arXiv preprint arXiv:2012.03214}, 2020.

\bibitem[Li and Richt{\'a}rik(2024)]{li2024convergence}
Hanmin Li and Peter Richt{\'a}rik.
\newblock On the convergence of {FedProx} with extrapolation and inexact prox.
\newblock In \emph{International OPT Workshop on Optimization for Machine
  Learning at NeurIPS 2024}, 2024.

\bibitem[Li et~al.(2024)Li, Acharya, and Richt{\'a}rik]{li2024power}
Hanmin Li, Kirill Acharya, and Peter Richt{\'a}rik.
\newblock The power of extrapolation in federated learning.
\newblock In \emph{Conference on Neural Information Processing Systems
  (NeurIPS)}, 2024.

\bibitem[Li et~al.(2020)Li, Huang, Yang, Wang, and Zhang]{li2020convergence}
Xiang Li, Kaixuan Huang, Wenhao Yang, Shusen Wang, and Zhihua Zhang.
\newblock On the convergence of {FedAvg} on non-{IID} data.
\newblock In \emph{International Conference on Learning Representations
  (ICLR)}, 2020.

\bibitem[Li and Lyu(2023)]{li2023convergence}
Yipeng Li and Xinchen Lyu.
\newblock Convergence analysis of sequential federated learning on
  heterogeneous data.
\newblock In \emph{Conference on Neural Information Processing Systems
  (NeurIPS)}, 2023.

\bibitem[Lin et~al.(2020)Lin, Kong, Stich, and Jaggi]{lin2020ensemble}
Tao Lin, Lingjing Kong, Sebastian~U Stich, and Martin Jaggi.
\newblock Ensemble distillation for robust model fusion in federated learning.
\newblock In \emph{Conference on Neural Information Processing Systems
  (NeurIPS)}, 2020.

\bibitem[Lu et~al.(2022)Lu, Meng, and Sa]{lu2022general}
Yucheng Lu, Si~Yi Meng, and Christopher~De Sa.
\newblock A general analysis of example-selection for stochastic gradient
  descent.
\newblock In \emph{International Conference on Learning Representations
  (ICLR)}, 2022.

\bibitem[Malinovsky et~al.(2023)Malinovsky, Horv{\'a}th, Burlachenko, and
  Richt{\'a}rik]{malinovsky2023federated}
Grigory Malinovsky, Samuel Horv{\'a}th, Konstantin~Pavlovich Burlachenko, and
  Peter Richt{\'a}rik.
\newblock Federated learning with regularized client participation.
\newblock In \emph{Federated Learning and Analytics in Practice: Algorithms,
  Systems, Applications, and Opportunities workshop at ICML 2023}, 2023.

\bibitem[McMahan et~al.(2017)McMahan, Moore, Ramage, Hampson, and
  y~Arcas]{mcmahan2017communication}
Brendan McMahan, Eider Moore, Daniel Ramage, Seth Hampson, and Blaise~Aguera
  y~Arcas.
\newblock Communication-efficient learning of deep networks from decentralized
  data.
\newblock In \emph{International Conference on Artificial Intelligence and
  Statistics (AISTATS)}, 2017.

\bibitem[Mishchenko et~al.(2020)Mishchenko, Khaled, and
  Richt{\'a}rik]{mishchenko2020random}
Konstantin Mishchenko, Ahmed Khaled, and Peter Richt{\'a}rik.
\newblock Random reshuffling: Simple analysis with vast improvements.
\newblock In \emph{Conference on Neural Information Processing Systems
  (NeurIPS)}, 2020.

\bibitem[Mishchenko et~al.(2022{\natexlab{a}})Mishchenko, Khaled, and
  Richt{\'a}rik]{mishchenko2022proximal}
Konstantin Mishchenko, Ahmed Khaled, and Peter Richt{\'a}rik.
\newblock Proximal and federated random reshuffling.
\newblock In \emph{International Conference on Machine Learning (ICML)},
  2022{\natexlab{a}}.

\bibitem[Mishchenko et~al.(2022{\natexlab{b}})Mishchenko, Malinovsky, Stich,
  and Richt{\'a}rik]{mishchenko2022proxskip}
Konstantin Mishchenko, Grigory Malinovsky, Sebastian Stich, and Peter
  Richt{\'a}rik.
\newblock {Proxskip}: Yes! local gradient steps provably lead to communication
  acceleration! finally!
\newblock In \emph{International Conference on Machine Learning (ICML)},
  2022{\natexlab{b}}.

\bibitem[Mortici(2011)]{mortici2011gospers}
Cristinel Mortici.
\newblock On {Gospers} formula for the {Gamma} function.
\newblock \emph{Journal of Mathematical Inequalities}, 2011.

\bibitem[Nagaraj et~al.(2019)Nagaraj, Jain, and Netrapalli]{nagaraj2019sgd}
Dheeraj Nagaraj, Prateek Jain, and Praneeth Netrapalli.
\newblock {SGD} without replacement: Sharper rates for general smooth convex
  functions.
\newblock In \emph{International Conference on Machine Learning (ICML)}, 2019.

\bibitem[Nemirovskij and Yudin(1983)]{nemirovskij1983problem}
Arkadij~Semenovi{\v{c}} Nemirovskij and David~Borisovich Yudin.
\newblock \emph{Problem complexity and method efficiency in optimization}.
\newblock Wiley-Interscience, 1983.

\bibitem[Nguyen et~al.(2021)Nguyen, Tran-Dinh, Phan, Nguyen, and
  Van~Dijk]{nguyen2021unified}
Lam~M Nguyen, Quoc Tran-Dinh, Dzung~T Phan, Phuong~Ha Nguyen, and Marten
  Van~Dijk.
\newblock A unified convergence analysis for shuffling-type gradient methods.
\newblock \emph{Journal of Machine Learning Research (JMLR)}, 2021.

\bibitem[Patel et~al.(2023)Patel, Glasgow, Wang, Joshi, and
  Srebro]{patel2023still}
Kumar~Kshitij Patel, Margalit Glasgow, Lingxiao Wang, Nirmit Joshi, and Nathan
  Srebro.
\newblock On the still unreasonable effectiveness of federated averaging for
  heterogeneous distributed learning.
\newblock In \emph{Federated Learning and Analytics in Practice: Algorithms,
  Systems, Applications, and Opportunities workshop at ICML 2023}, 2023.

\bibitem[Patel et~al.(2024)Patel, Glasgow, Zindari, Wang, Stich, Cheng, Joshi,
  and Srebro]{patel2024limits}
Kumar~Kshitij Patel, Margalit Glasgow, Ali Zindari, Lingxiao Wang, Sebastian~U
  Stich, Ziheng Cheng, Nirmit Joshi, and Nathan Srebro.
\newblock The limits and potentials of local {SGD} for distributed
  heterogeneous learning with intermittent communication.
\newblock In \emph{Conference on Learning Theory (COLT)}, 2024.

\bibitem[Rajput et~al.(2020)Rajput, Gupta, and
  Papailiopoulos]{rajput2020closing}
Shashank Rajput, Anant Gupta, and Dimitris Papailiopoulos.
\newblock Closing the convergence gap of {SGD} without replacement.
\newblock In \emph{International Conference on Machine Learning (ICML)}, 2020.

\bibitem[Reddi et~al.(2021)Reddi, Charles, Zaheer, Garrett, Rush,
  Kone{\v{c}}n{\'y}, Kumar, and McMahan]{reddi2021adaptive}
Sashank~J. Reddi, Zachary Charles, Manzil Zaheer, Zachary Garrett, Keith Rush,
  Jakub Kone{\v{c}}n{\'y}, Sanjiv Kumar, and Hugh~Brendan McMahan.
\newblock Adaptive federated optimization.
\newblock In \emph{International Conference on Learning Representations
  (ICLR)}, 2021.

\bibitem[Sadiev et~al.(2023)Sadiev, Malinovsky, Gorbunov, Sokolov, Khaled,
  Burlachenko, and Richt{\'a}rik]{sadiev2023federated}
Abdurakhmon Sadiev, Grigory Malinovsky, Eduard Gorbunov, Igor Sokolov, Ahmed
  Khaled, Konstantin~Pavlovich Burlachenko, and Peter Richt{\'a}rik.
\newblock Federated optimization algorithms with random reshuffling and
  gradient compression.
\newblock In \emph{Federated Learning and Analytics in Practice: Algorithms,
  Systems, Applications, and Opportunities workshop at ICML 2023}, 2023.

\bibitem[Safran and Shamir(2020)]{safran2020good}
Itay Safran and Ohad Shamir.
\newblock How good is {SGD} with random shuffling?
\newblock In \emph{Conference on Learning Theory (COLT)}, 2020.

\bibitem[Safran and Shamir(2021)]{safran2021random}
Itay Safran and Ohad Shamir.
\newblock Random shuffling beats {SGD} only after many epochs on
  ill-conditioned problems.
\newblock In \emph{Conference on Neural Information Processing Systems
  (NeurIPS)}, 2021.

\bibitem[Simonyan and Zisserman(2014)]{simonyan2014very}
Karen Simonyan and Andrew Zisserman.
\newblock Very deep convolutional networks for large-scale image recognition.
\newblock \emph{arXiv preprint arXiv:1409.1556}, 2014.

\bibitem[Stich(2019)]{stich2019local}
Sebastian~U. Stich.
\newblock Local {SGD} converges fast and communicates little.
\newblock In \emph{International Conference on Learning Representations
  (ICLR)}, 2019.

\bibitem[Thapa et~al.(2022)Thapa, Arachchige, Camtepe, and
  Sun]{thapa2022splitfed}
Chandra Thapa, Pathum Chamikara~Mahawaga Arachchige, Seyit Camtepe, and Lichao
  Sun.
\newblock {SplitFed}: When federated learning meets split learning.
\newblock In \emph{AAAI Conference on Artificial Intelligence (AAAI)}, 2022.

\bibitem[Wang et~al.(2020)Wang, Liu, Liang, Joshi, and Poor]{wang2020tackling}
Jianyu Wang, Qinghua Liu, Hao Liang, Gauri Joshi, and H~Vincent Poor.
\newblock Tackling the objective inconsistency problem in heterogeneous
  federated optimization.
\newblock In \emph{Conference on Neural Information Processing Systems
  (NeurIPS)}, 2020.

\bibitem[Wang et~al.(2024)Wang, Das, Joshi, Kale, Xu, and
  Zhang]{wang2024unreasonable}
Jianyu Wang, Rudrajit Das, Gauri Joshi, Satyen Kale, Zheng Xu, and Tong Zhang.
\newblock On the unreasonable effectiveness of federated averaging with
  heterogeneous data.
\newblock \emph{Transactions on Machine Learning Research (TMLR)}, 2024.

\bibitem[Wang and Ji(2022)]{wang2022unified}
Shiqiang Wang and Mingyue Ji.
\newblock A unified analysis of federated learning with arbitrary client
  participation.
\newblock In \emph{Conference on Neural Information Processing Systems
  (NeurIPS)}, 2022.

\bibitem[Woodworth et~al.(2020{\natexlab{a}})Woodworth, Patel, Stich, Dai,
  Bullins, Mcmahan, Shamir, and Srebro]{woodworth2020local}
Blake Woodworth, Kumar~Kshitij Patel, Sebastian Stich, Zhen Dai, Brian Bullins,
  Brendan Mcmahan, Ohad Shamir, and Nathan Srebro.
\newblock Is local {SGD} better than minibatch {SGD}?
\newblock In \emph{International Conference on Machine Learning (ICML)},
  2020{\natexlab{a}}.

\bibitem[Woodworth et~al.(2020{\natexlab{b}})Woodworth, Patel, and
  Srebro]{woodworth2020minibatch}
Blake~E Woodworth, Kumar~Kshitij Patel, and Nati Srebro.
\newblock Minibatch vs local {SGD} for heterogeneous distributed learning.
\newblock In \emph{Conference on Neural Information Processing Systems
  (NeurIPS)}, 2020{\natexlab{b}}.

\bibitem[Xiao et~al.(2017)Xiao, Rasul, and Vollgraf]{xiao2017fashion}
Han Xiao, Kashif Rasul, and Roland Vollgraf.
\newblock {Fashion-MNIST}: A novel image dataset for benchmarking machine
  learning algorithms.
\newblock \emph{arXiv preprint arXiv:1708.07747}, 2017.

\bibitem[Yan et~al.(2024)Yan, Zuo, Fan, Hu, Shen, Zhao, and
  Luo]{yan2024sequential}
Xingrun Yan, Shiyuan Zuo, Rongfei Fan, Han Hu, Li~Shen, Puning Zhao, and Yong
  Luo.
\newblock Sequential federated learning in hierarchical architecture on
  non-{IID} datasets.
\newblock \emph{arXiv preprint arXiv:2408.09762}, 2024.

\bibitem[Yang et~al.(2021)Yang, Fang, and Liu]{yang2021achieving}
Haibo Yang, Minghong Fang, and Jia Liu.
\newblock Achieving linear speedup with partial worker participation in
  non-{IID} federated learning.
\newblock In \emph{International Conference on Learning Representations
  (ICLR)}, 2021.

\bibitem[Yuan et~al.(2023)Yuan, Ma, Su, and Wang]{yuan2023peer}
Liangqi Yuan, Yunsheng Ma, Lu~Su, and Ziran Wang.
\newblock Peer-to-peer federated continual learning for naturalistic driving
  action recognition.
\newblock In \emph{IEEE/CVF Conference on Computer Vision and Pattern
  Recognition (CVPR)}, 2023.

\bibitem[Yuan et~al.(2024)Yuan, Wang, Sun, Philip, and
  Brinton]{yuan2024decentralized}
Liangqi Yuan, Ziran Wang, Lichao Sun, S~Yu Philip, and Christopher~G Brinton.
\newblock Decentralized federated learning: A survey and perspective.
\newblock \emph{IEEE Internet of Things Journal}, 2024.

\bibitem[Yun et~al.(2022)Yun, Rajput, and Sra]{yun2022minibatch}
Chulhee Yun, Shashank Rajput, and Suvrit Sra.
\newblock Minibatch vs local {SGD} with shuffling: Tight convergence bounds and
  beyond.
\newblock In \emph{International Conference on Learning Representations
  (ICLR)}, 2022.

\bibitem[Zaccone et~al.(2022)Zaccone, Rizzardi, Caldarola, Ciccone, and
  Caputo]{zaccone2022speeding}
Riccardo Zaccone, Andrea Rizzardi, Debora Caldarola, Marco Ciccone, and Barbara
  Caputo.
\newblock Speeding up heterogeneous federated learning with sequentially
  trained superclients.
\newblock In \emph{International Conference on Pattern Recognition (ICPR)},
  2022.

\end{thebibliography}

\end{document}